\documentclass[accepted]{uai2026} 
                        
\usepackage[american]{babel}

\usepackage{natbib} 
\usepackage{mathtools} 
\usepackage{booktabs} 
\usepackage{tikz} 
\usepackage{abstract}
\usepackage{microtype}
\usepackage{subcaption}
\usepackage{booktabs} 
\usepackage{graphicx} 
\usepackage{titlesec}
\usepackage{mathtools}
\usepackage{adjustbox}
\usepackage{derivative}
\usepackage{multirow}
\usepackage{amsmath, amssymb, amsfonts, dsfont, amsthm}
\usepackage{extpfeil}
\usepackage{float}
\PassOptionsToPackage{hyphens}{url}\usepackage{hyperref}

\usepackage[dont-mess-around]{fnpct}
\usepackage{balance} 

\usepackage{algorithm, algorithmic}

\theoremstyle{plain}
\newtheorem{theorem}{Theorem}[section]
\newtheorem{proposition}[theorem]{Proposition}
\newtheorem{lemma}[theorem]{Lemma}

\theoremstyle{definition}
\newtheorem{definition}[theorem]{Definition}

\theoremstyle{remark}
\newtheorem{remark}[theorem]{Remark}

\title{Optimal Model Activation Policies for Inference Networks \\ of Large Language Models}

\author[1]{\href{mailto:<phoebuschar@aueb.gr>?Subject=On Optimal Model Activation Policies for Inference Networks of Large Language Models}{Foivos Charalampakos}{}}
\author[2]{Md Ibrahim Ibne Alam}
\author[1]{Iordanis Koutsopoulos}
\author[3]{Koushik Kar}
\affil[1]{%
    Department of Informatics\\
    Athens University of Economics and Business
}
\affil[2]{%
    Department of Electrical and Computer Engineering\\
    Yale University
}
\affil[3]{%
    Department of Electrical, Computer, and Systems Engineering\\
    Rensselaer Polytechnic Institute
  }
  
\begin{document}
\maketitle

\begin{abstract}
  Recent advances in large language models (LLMs) have rendered them necessary for Natural Language Processing (NLP) tasks, and their high inference cost motivates the study of cost–performance trade-offs to carry out these tasks. In practice, several expert LLMs are used in synergy for inference, either in an ensemble mode or in series, yet without a principled approach on how to best use the available models. An adaptive approach can route simple queries to cheaper, less reliable LLMs and complex ones to more capable, costly models. However, because this strategy focuses on query difficulty rather than model expertise, a clear understanding on how to best leverage available expert models is missing.
  We introduce and lay the groundwork for \emph{inference networks}, a graph-based framework, where nodes denote alternative LLMs, and links denote conditional model activations. The inference network design problem is to determine the best topology, namely the best way to use (some or all of) the models that best addresses the cost-performance trade-off. We start from the basic topology of a series of LLM experts, each of which has a different cost and a different {level of expertise}, which is captured via model confidence.
  We formulate the problem of optimal activation of these models so as to minimize the expected inference cost subject to a target performance constraint. For this special class of inference networks, we prove that the optimal activation policy has a threshold structure: query the lowest-cost LLM first, and invoke the more expensive LLM only if the confidence falls below a defined threshold.
  For discriminative tasks (e.g., classification), the optimal policy consists of a set of thresholds, one threshold for each class, while for generative tasks (e.g., question-answering), it consists of a single threshold. We provide a structured method to compute the thresholds (i.e., via a low-dimensional search or a closed-form solution when applicable), and practical confidence estimation mechanisms for both task types.
  Experiments with off-the-shelf open-source LLMs on text classification and generation benchmarks show substantial cost reductions while meeting the specified performance budget.
\end{abstract}

\section{Introduction}

Recent developments in generative language modeling, particularly Transformer-based auto-regressive large language models (LLMs), have significantly enhanced performance across complex natural language processing (NLP) tasks. The exceptional (often zero-shot) capabilities of these foundation models have led practitioners and researchers to increasingly rely on them for building applications. However, given the high computational burden required to train and use such models (typically consisting of tens of billions of parameters), they are often faced with the decision of which LLM service to utilize, given their available budget, quality expectations and significant cost (which may capture monetary cost, compute cost, or energy consumption). 
Consequently, this need has led to the development of numerous approaches with different goals. Methods like model compression and pruning techniques \citep{lora, puzzle_nas} aim to reduce the cost of using such big models while on the other hand, model ensembles \citep{arora, llm-blender} exploit the expressive power of multiple models (with an increased cost).  \newline
There are also methods in between, that aim to select the most suitable model, while trying to mitigate the computational demands. Examples of this class of methods are the alternative formulations of auto-regressive decoding mechanisms \citep{decoding_leviathan} that use inexpensive LLMs for most generated tokens and switch to costly ones when necessary (e.g., the cheap model’s estimates are not confident enough or poorly aligned to the target’s distribution). Another approach that shows great potential for balancing cost and output quality is \textit{adaptive inference}, which dynamically adjusts the computational resources spent for a task based on the task complexity. By selectively switching between different models, adaptive inference methods can significantly lower the inference cost (e.g., computation FLOPs or API money) without severely compromising the accuracy of the results.
In this work, we move beyond LLM inference schemes that only rely on query characteristics, and instead introduce and study inference networks, a framework of LLM experts that prioritizes model-specific expertise and theoretical model activations for various NLP tasks. We define inference networks as directed graphs where the nodes are available LLMs, and a link $(i, j)$ exists when model $j$ is activated, possibly based on the result of model $i$ which is already executed. 
The inference network design problem refers to the problem of finding the best topology, namely the best way to use (some or all of) the models that best addresses the cost-performance trade-off.
Special cases of these inference networks are the ensemble topology (parallel nodes), where the available models are used, and their output is aggregated, and the chain topology (serial nodes).
In this work, we fix attention to a simple, yet fundamental case of inference networks, that of models in series. 
Thus, for two LLMs, the lower-cost model is activated first, and its output confidence determines whether a higher-cost model should be invoked, thereby grounding the routing decision in model-specific expertise.
\newline
A critical component of this system 
is the rule for deferring the input queries to the larger model(s) (i.e., model activation rule). Designing such rules for LLMs that perform generative tasks is not trivial and introduces challenges \citep{gupta_uncertainty}. Prior works like FrugalGPT \citep{frugal} or Hybrid LLM \citep{hybridlllm}, provide empirical-only strategies that investigate how to dispatch queries across LLMs, using learned auxiliary routing models. 
In contrast, we focus {on deriving activation policies with theoretically characterized optimal structure based on LLMs' self-confidence}, without utilizing an external routing model that would require additional training and data collection. In particular, under (mild) assumptions, we prove that the optimal activation policy is threshold-based: we first use the lowest-cost LLM, evaluate the confidence of its response, and escalate to a higher-cost LLM only when that confidence is below a certain threshold.
For the discriminative tasks such as classification ones, the optimal policy can be expressed as a set of class-specific thresholds. 
Additionally, we empirically demonstrate the superiority of these policies against other state of the art strategies \citep{frugal, hybridlllm}
in real-world scenarios, using off-the-shelf open-source LLMs on various text classification and text generation datasets. We manage to achieve significant cost reduction (up to 97\% in some instances) while performance stays within a defined budget constraint. This showcases that our policies can be used in practical scenarios, delivering a better cost-quality trade-off.

\section{Background}
\label{sec:background}
\paragraph{\textbf{LLM inference.}} We focus on answering natural language queries. A range of practical NLP generative or discriminative tasks, such as question answering or text classification, can be naturally expressed as query-answer tuples. To generate responses for input queries, we rely on a
$\text{LLM}(\cdot): \mathcal{Q} \to \mathcal{A}$ that, given a query $\boldsymbol{q} \in \mathcal{Q}$, produces an answer $\boldsymbol{\hat{a}} \in \mathcal{A}$. 
Most recent advances in the field have also involved approaches like Reinforcement Learning-based training using rewards from human feedback \citep{rlhf,dpo}, providing state-of-the-art results and expertise on a variety of tasks beyond simple word completion, like conversation, code generation, instruction following, and commonsense reasoning \citep{gemma3, llama3, toolformer, deepseek, gpt4, qwen3, mistral, gpt3}. 
During inference, given any query $\boldsymbol{q}$, LLMs can handle each task as text generation. For discriminative tasks, $\mathcal{A}$ can be a set of predefined classes $\mathcal{A} = \{a_i\}_{i\in [\mathcal{A}]}$ so one can restrict the model to output only tokens relevant to this set. In tasks that require explicit text generation, the (predicted) posterior distribution over the whole vocabulary 
\citep{DBLP:conf/iclr/HoltzmanBDFC20, fan-etal-2018-hierarchical, ficler-goldberg-2017-controlling} can be sampled to produce a  response.
\paragraph{\textbf{Confidence estimation.}} In addition to generating responses, we also want to estimate how confident the LLM is in its outputs. Confidence estimation aims to provide a score that reflects the reliability of a model's answer, and in practice, calculating this score can be challenging. This challenge can be addressed either using self-assessed, 
entropy-based metrics tied to uncertainty estimation \citep{lm-polygraph, gupta_uncertainty, DBLP:conf/iclr/KuhnGF23}, or through alternative approaches involving external scoring models \citep{frugal}. In this work, we assess the model's internal certainty through a simple function based on its predicted probability distributions, yielding a confidence score in the range $[0,1]$. The type of task (discriminative or generative) dictates the function's definition: generative tasks require a score for the entire text sequence, while discriminative tasks only require a score for the predicted class.
We leave more sophisticated confidence estimation approaches \citep{Shrivastava, Kadavath, lm-polygraph} for future work.
\section{System Model}
The core idea is to perform LLM inference through a serial network of models, each progressively more complex (i.e., with a larger number of parameters and higher inference cost), applied sequentially. The serial network is a directed graph ${G} = (V, E)$, where $V = {1, \dots, N}$ and $E \subseteq V \times V$, such that for every edge $(i,j) \in E$, $i < j$. This directed graph represents a serial chain, meaning each vertex has exactly one outgoing edge to a larger-indexed vertex, ensuring a linear structure.
In this work, we focus on the case of two models ($|V|=2$) where the lower-cost model first processes each query, and a policy determines whether its output should be used; if not, the input query is escalated to the second model.
We use the confidence of the first LLM to decide whether to pass a query to the next node. The confidence reflects the model's ability to accurately handle a given query, and it is derived from its generated answer. This approach shifts the focus from estimating query difficulty (something that's often hard to define) using external scorers \citep{frugal, hybridlllm} to using confidence as a readily computable indicator of model performance. Confidence is influenced by both the difficulty of the query and the expertise of the model, which is defined by how well its scope (e.g., domain of training data) correlates with the query. Between two models of similar expertise, the one with higher confidence becomes the more reliable choice for handling the query. \\
Consider two models: a `student' model (sLLM) which incurs less cost to generate an answer per query and a costlier `master' model (mLLM). 
For a given query with true answer $\boldsymbol{a}$, sLLM is used first, and a deferral policy $\pi$ determines whether to accept its output answer $\boldsymbol{\hat{a}}^s$ or defer the input query to mLLM to obtain answer $\boldsymbol{\hat{a}}^m$. 
We condition the deferral policy on the confidence sLLM outputs for its answer, a scalar in $[0,1]$, denoted by $\beta$. 
Thus, we use a measurable threshold policy $\pi : [0,1] \rightarrow \{0,1\}$, parameterized by $\theta$:
\begin{equation}
    \pi(\beta, \theta) = \begin{cases}
0, & \beta \geq \theta\, \text{(accept sLLM at $\beta$)}\\
1, & \beta < \theta\,\text{(defer to mLLM at $\beta$)}
\end{cases}
\end{equation}
where 
$\beta = \text{conf}(\boldsymbol{\hat{a}}^s)$
and $\text{conf}(\cdot): \mathcal{A} \to [0,1]$ is the function that measures $\text{sLLM}$'s confidence regarding the generated answer.
We aim to characterize the optimal policy (i.e., the optimal threshold) 
which emerges out of the optimization problem of minimizing the expected inference
cost subject to a target performance constraint. 
Assuming the distributions of variables such as $\beta$ are known, we can solve this problem either analytically or numerically. Specifically, if we have access to the joint probability distribution
$f(\boldsymbol{a}, \boldsymbol{\hat{a}}^s, \boldsymbol{\hat{a}}^m, \beta)$, we can derive closed-form solutions for the optimization problem. In practical settings, where exact knowledge of the distribution may be unavailable, we provide empirical solutions using a finite sample of data.

\subsection{Generative tasks - Single-threshold policy}
\label{subsec:single-threshold-policy}

First, we focus on generative tasks (e.g., question answering tasks). The output space $\mathcal{A}$ is an open-ended set of sequences (e.g., words, sentences, or structured responses). 
For a given input query $\boldsymbol{q}$, sLLM generates the text sequence $\text{sLLM}(\boldsymbol{q}) = \boldsymbol{\hat{a}}^s \in \mathcal{A}$. Moreover, the associated confidence score $\beta \in [0, 1]$ reflects its certainty about the whole generated text sequence. 
According to the adopted policy, the prediction from sLLM is only accepted when 
$\beta \geq \theta$.
Regarding the cost of processing $\boldsymbol{q}$, we always `pay' $c_s$ and `pay' $c_m (> c_s)$ only when we defer to mLLM, i.e., when $\beta < \theta$. Hence, we can define the cost for the {$i$-th query} as:
\begin{equation}
    c(\beta_i, \theta) = c_s + \pi(\beta_i, \theta) \cdot c_m = c_s + \mathds{1}[\beta_i < \theta] \cdot c_m\,. 
\end{equation}
We can compute the \textit{expected} cost, which depends only on how often $\beta$ falls below the threshold, as:
\begin{equation}
\begin{aligned}
    \text{Cost}_\text{ST}(\theta) &= c_s + c_m \mathbb{E}_\beta\big[\mathds{1}[\beta<\theta]\big]\\ &= c_s + c_m \Pr(\beta < \theta) = c_s + c_m \int_0^{\theta} f(\beta)\,d\beta\,.
\end{aligned}
\end{equation}
Correspondingly, we can define the error formulations based on two separate settings. In the first one, we assume a `teacher-student' paradigm where mLLM is considered as an expert model, an oracle (the `teacher') so error can be only incurred from the small model when $\beta \geq \theta$. In the second setting, both models are allowed to make mistakes. \newline
Specifically, beginning with the oracle setting (where we have that $\boldsymbol{{a}} = \boldsymbol{\hat{a}}^m$), the query-based error becomes: 
\begin{equation}
\begin{aligned}
    e(\boldsymbol{\hat{a}}^m_i, \boldsymbol{\hat{a}}^s_i, \beta_i, \theta) &= \left(1-\pi(\beta_i, \theta )\right)\mathds{1}[\boldsymbol{\hat{a}}^m_i \neq \boldsymbol{\hat{a}}^s_i] \\ &= \mathds{1}[\beta_i \geq \theta]\mathds{1}[\boldsymbol{\hat{a}}^m_i \neq \boldsymbol{\hat{a}}^s_i] \,,
\end{aligned}
\end{equation}
where the binary error indicator $\mathds{1}[\boldsymbol{\hat{a}}^m_i \neq \boldsymbol{\hat{a}}^s_i]$ can be defined accordingly for generation tasks (e.g., by using cosine similarity between the embeddings of the true and predicted answers and a threshold $\delta$ to indicate a mismatch when the similarity score $\cos(\mathrm{emb}(\boldsymbol{\hat{a}}^m_i), \mathrm{emb}(\boldsymbol{\hat{a}}^s_i))$ falls below $\delta$). We can also compute the \textit{expected} error\footnote{`ST-O': Single-Threshold Oracle} as:
\begin{multline}
\label{eq:exp_error_1}
    \text{Error}_\text{ST-O}(\theta) = \mathbb{E}_{\boldsymbol{a}, \boldsymbol{\hat{a}}^m, \boldsymbol{\hat{a}}^s,\beta}\big[\mathds{1}[\beta \geq \theta]\mathds{1}[\boldsymbol{\hat{a}}^m \neq \boldsymbol{\hat{a}}^s]\big] =  \\ \int_0^1 \sum_{\boldsymbol{a}, \boldsymbol{\hat{a}}^m, \boldsymbol{\hat{a}}^s} \mathds{1}[\beta \geq \theta]\mathds{1}[\boldsymbol{\hat{a}}^m \neq \boldsymbol{\hat{a}}^s] f(\boldsymbol{a}, \boldsymbol{\hat{a}}^m, \boldsymbol{\hat{a}}^s, \beta)\,d\beta = \\ \int_0^1 \sum_{\boldsymbol{\hat{a}}^m, \boldsymbol{\hat{a}}^s} \mathds{1}[\beta \geq \theta]\mathds{1}[\boldsymbol{\hat{a}}^m \neq \boldsymbol{\hat{a}}^s] \Big( \sum_{\boldsymbol{a}} f(\boldsymbol{a}, \boldsymbol{\hat{a}}^m, \boldsymbol{\hat{a}}^s, \beta) \Big) d\beta = \\  \int_0^1 \sum_{\boldsymbol{\hat{a}}^m, \boldsymbol{\hat{a}}^s} \mathds{1}[\beta \geq \theta]\mathds{1}[\boldsymbol{\hat{a}}^m \neq \boldsymbol{\hat{a}}^s] f(\boldsymbol{\hat{a}}^m, \boldsymbol{\hat{a}}^s, \beta) \,d\beta\,.
\end{multline}

Using the chain rule of probability, we can split the joint distribution as $f(\boldsymbol{\hat{a}}^m, \boldsymbol{\hat{a}}^s, \beta) = f(\beta)\cdot f(\boldsymbol{\hat{a}}^m, \boldsymbol{\hat{a}}^s \mid \beta)$ and thus write (\ref{eq:exp_error_1}) as:
\begin{multline}
    \text{Error}_\text{ST-O}(\theta) = \\ \int_0^1 \mathds{1}[\beta \geq \theta] f(\beta) \Big[  \sum_{\boldsymbol{\hat{a}}^m, \boldsymbol{\hat{a}}^s} \mathds{1}[\boldsymbol{\hat{a}}^m \neq \boldsymbol{\hat{a}}^s] f(\boldsymbol{\hat{a}}^m, \boldsymbol{\hat{a}}^s \mid \beta)\Big]d\beta\,.
\end{multline}
To simplify this expression, we can use the definition of conditional probability, the expectation of indicator functions (which is equal to the probability of the event)\footnote{$\Pr(A) = \mathbb{E}[\mathds{1}(A)]$} and the law of total expectation\footnote{$\mathbb{E}[Z] = \mathbb{E}[\mathbb{E}[Z \mid \beta]]$ with $Z = \mathds{1}[\beta \geq \theta] \mathds{1}[\boldsymbol{a} \neq \boldsymbol{\hat{a}}]$} to acquire:
\begin{equation}
    \text{Error}_\text{ST-O}(\theta) = \int_\theta^1 \varepsilon_s(\beta)f(\beta)\,d\beta\,,
\end{equation}
where $\varepsilon_s(\beta) \coloneq \Pr(\boldsymbol{\hat{a}}^m \neq \boldsymbol{\hat{a}}^s \mid \beta) = \sum_{\boldsymbol{\hat{a}}^m} \sum_{\boldsymbol{\hat{a}}^s} \mathds{1}[\boldsymbol{\hat{a}}^m \neq \boldsymbol{\hat{a}}^s] f(\boldsymbol{\hat{a}}^m, \boldsymbol{\hat{a}}^s \mid \beta)$, which is the computation of the expectation of the indicator under the joint conditional distribution. So, the optimization objective that has to be solved reduces to:
\begin{equation}
\label{eq:oracle_obj}
\begin{aligned}
\min_{\theta \in [0,1]}  \quad & \text{Cost}_\text{ST}(\theta)\\
\textrm{s.t.} \quad & \text{Error}_\text{ST-O}(\theta) \leq 1 - \xi \,,\\
\end{aligned}
\end{equation}
where 
$\xi \in [0,1]$ 
is a desired value for a task-specific metric that measures performance. 

Since $\text{Cost}_\text{ST}(\theta)$ 
is non-decreasing in $\theta$, the optimal value $\theta^*$ is the \textit{smallest} that satisfies the constraint:
\begin{equation}
\label{eq:opt_thr}
    \theta^* = \inf \left\{ \theta \in [0,1]\,\, \bigg| \int_\theta^1 \varepsilon_s(\beta)f(\beta) \leq 1 - \xi \right\} \,.
\end{equation}
Under the (mild) assumptions that $\varepsilon_s(\beta) \in [0,1]$ and $f(\beta) \geq 0$, (\ref{eq:opt_thr}) obtains the optimal threshold. We present the main theoretical results below. The complete proofs, together with an illustrative example admitting closed-form expressions, are provided in Appendix~\ref{app:st-o-proofs}.\footnote{Here, optimality is defined with respect to a threshold-based policy. Under mild assumptions, any $\beta$-based rule can be reduced to a threshold policy; see Appendix \ref{app:single_general_policies} for details.
}
\begin{lemma}[Monotonicity of cost and error in the oracle case] \label{lemma:monotonicity} Let $f:[0,1]\to[0,\infty)$ be integrable with CDF $F(\theta)=\int_{0}^{\theta} f(\beta)\,d\beta$. Let $\varepsilon:[0,1]\to[0,1]$ be any measurable function. For $\theta\in[0,1]$, \begin{equation} C(\theta)=c_s+c_m\,F(\theta),\qquad g(\theta)=\int_{\theta}^{1}\varepsilon(\beta)\,f(\beta)\,d\beta, \end{equation} with constants $c_s>0$, $c_m>0$. Then $C(\cdot)$ is non-decreasing and $g(\cdot)$ is non-increasing in $[0,1]$. \end{lemma}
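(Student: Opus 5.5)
The plan is to compare the two functions at an arbitrary pair of points $0\le \theta_1\le\theta_2\le 1$ and to use only two facts. The first is that both integrands are nonnegative. The second is that the integral is additive over adjacent intervals. No continuity or differentiability of $f$ or $\varepsilon$ is needed, so I will avoid derivative arguments altogether; the Lebesgue-integral formulation handles merely integrable $f$ and measurable $\varepsilon$.

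For the cost, I split the integral at $\theta_1$:
\begin{equation*}
F(\theta_2)=\int_0^{\theta_1} f(\beta)\,d\beta+\int_{\theta_1}^{\theta_2} f(\beta)\,d\beta = F(\theta_1)+\int_{\theta_1}^{\theta_2} f(\beta)\,d\beta .
\end{equation*}
Since $f\ge 0$, the last integral is nonnegative, so $F(\theta_2)\ge F(\theta_1)$. Multiplying by $c_m>0$ and adding $c_s$ preserves this inequality, which gives $C(\theta_2)\ge C(\theta_1)$.

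For the error function $g$, I first check that it is well defined. The product $\varepsilon f$ is measurable and satisfies $0\le \varepsilon f\le f$ because $\varepsilon$ takes values in $[0,1]$. Hence $\varepsilon f$ is integrable on $[0,1]$, and $g(\theta)$ is finite for every $\theta$. I then split at $\theta_2$:
\begin{equation*}
g(\theta_1)=\int_{\theta_1}^{\theta_2}\varepsilon(\beta) f(\beta)\,d\beta+\int_{\theta_2}^{1}\varepsilon(\beta) f(\beta)\,d\beta = \int_{\theta_1}^{\theta_2}\varepsilon(\beta) f(\beta)\,d\beta+g(\theta_2).
\end{equation*}
The first integral is nonnegative because its integrand is, so $g(\theta_1)\ge g(\theta_2)$, and $g$ is non-increasing.

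There is no genuine obstacle in this argument. The only points requiring care are the integrability of $\varepsilon f$, which the domination bound above settles, and the fact that the endpoint conventions ($\beta<\theta$ versus $\beta\ge\theta$) do not matter under Lebesgue measure, since single points have measure zero.
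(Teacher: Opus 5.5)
Your proof is correct and matches the paper's argument. Both write the increments $C(\theta_2)-C(\theta_1)$ and $g(\theta_1)-g(\theta_2)$ as integrals over $[\theta_1,\theta_2]$ of the nonnegative functions $c_m f$ and $\varepsilon f$; your check that $\varepsilon f$ is integrable via $0\le \varepsilon f\le f$ is a small detail the paper leaves implicit.
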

\begin{proof}
See Appendix~\ref{app:st-o-proofs}
\end{proof}
\begin{theorem}[Optimal threshold in the oracle case]\label{thm:st-o}
Fix an error budget $b\in[0,1]$ and consider
\[
\min_{\theta\in[0,1]} C(\theta)\quad\text{s.t.}\quad g(\theta)\le b,
\]
with $C(\cdot)$ and $g(\cdot)$ as in Lemma~\ref{lemma:monotonicity}. 
Let $\theta^* = \inf\{\theta\in[0,1]: g(\theta)\le b\}$.
Then, if the problem is feasible, every optimal solution is attained at the leftmost feasible threshold, i.e., $\theta^{\mathrm{opt}} = \theta^*$. Moreover, if $C$ is strictly increasing, then the optimizer is unique. If, in addition, $g$ is strictly decreasing, then the constraint is tight at optimum and $\theta^*$ is the unique solution of $g(\theta)=b$.
\end{theorem}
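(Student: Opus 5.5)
The plan is to show that the feasible set is a closed right-interval $[\theta^*,1]$, and then read off optimality from the monotonicity in Lemma~\ref{lemma:monotonicity}. The one extra ingredient is \emph{continuity} of $g$. Since $\varepsilon f$ is measurable with $0\le \varepsilon f\le f$ and $f$ is integrable, $\varepsilon f$ is integrable. Hence
\[
g(\theta)=\int_0^1\varepsilon f\,d\beta-\int_0^\theta \varepsilon f\,d\beta
\]
is absolutely continuous on $[0,1]$. For instance, $|g(\theta)-g(\theta')|\le |F(\theta)-F(\theta')|$, and $F$ is continuous by dominated convergence. Similarly, $C$ is continuous, although only its monotonicity is needed.

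First I describe the feasible set $S=\{\theta\in[0,1]: g(\theta)\le b\}$, assuming it is nonempty. By Lemma~\ref{lemma:monotonicity}, $g$ is non-increasing, so $\theta\in S$ and $\theta'\ge\theta$ give $\theta'\in S$. Thus $S$ is an interval with right endpoint $1$. Since $g$ is continuous, $S=g^{-1}((-\infty,b])\cap[0,1]$ is closed. Hence $S=[\theta^*,1]$, and in particular the infimum $\theta^*$ is itself feasible. This closedness step is the only real subtlety. Without continuity of $g$, the infimum need not be attained, so I will spell out the domination bound above carefully.

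Next comes optimality. For any feasible $\theta$ we have $\theta\ge\theta^*$, so $C(\theta)\ge C(\theta^*)$ because $C$ is non-decreasing (Lemma~\ref{lemma:monotonicity}). Therefore $\theta^*$ attains the minimum. This is the sense in which the optimum is attained at the leftmost feasible threshold; with $C$ merely non-decreasing, other minimizers may exist on flat pieces of $F$. If $C$ is strictly increasing, then every feasible $\theta>\theta^*$ satisfies $C(\theta)>C(\theta^*)$, so $\theta^*$ is the unique optimizer.

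Finally comes tightness when $g$ is also strictly decreasing. Suppose $g(\theta^*)<b$ and $\theta^*>0$. Continuity of $g$ gives some $\theta<\theta^*$ with $g(\theta)<b$, which contradicts the definition of $\theta^*$. Hence $g(\theta^*)=b$. Strict monotonicity makes $g$ injective, so $\theta^*$ is the unique root of $g(\theta)=b$.

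I will note the boundary case $\theta^*=0$ explicitly. There $g(0)\le b$ may hold strictly, and the equation $g=b$ then has no solution in $[0,1]$. So the tightness claim should be read under the natural proviso $g(0)\ge b$, equivalently that the budget actually binds. Under that proviso, the intermediate value theorem applied to the continuous $g$ on $[0,1]$, with $g(1)=0\le b\le g(0)$, guarantees a root, and that root is $\theta^*$.
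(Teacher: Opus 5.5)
Your proof is correct and follows the paper's own route. Monotonicity plus continuity of $g$ makes the feasible set the closed ray $[\theta^*,1]$, non-decreasing $C$ puts a minimizer at its left endpoint, and continuity and strict monotonicity of $g$ then give tightness and a unique root. It is more careful than the paper's short proof in three respects: you justify continuity of $g$ directly via $|g(\theta)-g(\theta')|\le|F(\theta)-F(\theta')|$ (the paper relies on Lemma~\ref{lem:absolute_continuity_g}), you note that ``every optimum equals $\theta^*$'' needs $C$ strictly increasing, and you correctly require $g(0)\ge b$ for tightness, an edge case the paper's proof glosses over.
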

\begin{proof}
See Appendix~\ref{app:st-o-proofs}.
\end{proof}
\paragraph{Intuition of Theorem~\ref{thm:st-o}.}
Increasing $\theta$ means deferring more often to mLLM. This always increases cost, but it can only decrease error (Lemma~\ref{lemma:monotonicity}). Therefore, the best strategy is simple: increase $\theta$ only as much as needed to satisfy the target error budget, and stop at the \emph{smallest feasible threshold}. 

We now drop the oracle assumption and allow mLLM to make mistakes. The mLLM model has its own probability of error $\varepsilon_m(\gamma)$. 
To simplify our analysis, we assume that the error rates of the two models are independent so we can `compress' the error curve of mLLM into a constant $\bar{\varepsilon}_m$\footnote{$\bar{\varepsilon}_m = \mathbb{E} \big[  \varepsilon_m(\gamma) \mid \beta  \big] = \mathbb{E}_{\gamma} \big[  \varepsilon_m(\gamma)  \big] = \int_0^1 \varepsilon_m(\gamma)f_m(\gamma)\,d\gamma$.}. 
While the cost remains the same as in the oracle case, the query-based error now becomes: 
\begin{multline}
    e(\boldsymbol{a}_i, \boldsymbol{\hat{a}}^s_i, \boldsymbol{\hat{a}}^m_i, \beta_i, \theta) = \\ 
    \begin{gathered}
        \left( 1 - \pi(\beta_i, \theta) \right)\mathds{1}[\boldsymbol{a}_i \neq \boldsymbol{\hat{a}}^s_i] + \pi(\beta_i, \theta) \mathds{1}[\boldsymbol{a}_i \neq \boldsymbol{\hat{a}}^m_i] = \\ \mathds{1}[\beta_i \geq \theta]\mathds{1}[\boldsymbol{a}_i \neq \boldsymbol{\hat{a}}^s_i] + \mathds{1}[\beta_i < \theta]\mathds{1}[\boldsymbol{a}_i \neq \boldsymbol{\hat{a}}^m_i]\,,
    \end{gathered}
\end{multline}
while, following the same steps as in~\eqref{eq:exp_error_1}, the expected error becomes:
\begin{equation} \label{eq:exp_error_2}
    \text{Error}_\text{ST-NO}(\theta) = \int_\theta^1 \varepsilon_s(\beta)f(\beta)\,d\beta + \bar{\varepsilon}_m \int_0^\theta f(\beta)\,d\beta\,.
\end{equation}
Note that in the non-oracle case the {slope} of the error curve can be positive or negative, depending on whether sLLM is locally better or worse than mLLM at score level $\theta$.\footnote{The error function is continuous (proof in Appendix \ref{app:continuity}) and the derivative is $\odv{}{\theta} \text{Error}_\text{NO}(\theta) = -(\varepsilon_s(\theta) - \bar{\varepsilon}_m)f(\theta)$.} Thus, unlike in the oracle case, $\text{Error}_\text{ST-NO}(\theta)$ need not be monotone in $\theta$.

The constraint of the optimization problem in (\ref{eq:oracle_obj}) is replaced by (\ref{eq:exp_error_2}) and the solution of (\ref{eq:opt_thr}) still provides the optimal threshold. As previously, we present the theorem's results here and leave the proof in Appendix~\ref{app:st-o-proofs}

\begin{theorem}[Optimal threshold in the non-oracle case]
\label{thm:st-no}
Fix an error budget $b\in[0,1]$ and consider 
\[
\min_{\theta\in[0,1]} C(\theta)
\quad\text{s.t.}\quad
g_{\mathrm{NO}}(\theta)\le b,
\]
where $C(\cdot)$ as in Lemma~\ref{lemma:monotonicity} is non-decreasing and
$g_{\mathrm{NO}}(\cdot) = \text{Error}_\text{ST-NO}(\cdot)$. Define $\mathcal{F}_b=\{\theta\in[0,1]: g_{\mathrm{NO}}(\theta)\le b\}$ and $\theta^*=\inf \mathcal{F}_b$.
Assume $\mathcal{F}_b\neq\emptyset$. Then:
(i) $\theta^*$ is an optimal solution, even if $\mathcal{F}_b$ is a disjoint union of intervals;
(ii) if $g_{\mathrm{NO}}(0)\le b$, then $\theta^*=0$; 
(iii) if $g_{\mathrm{NO}}(0)>b$, then $\theta^*>0$ and the constraint is tight:
$g_{\mathrm{NO}}(\theta^*)=b$.
\end{theorem}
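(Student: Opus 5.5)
The plan is to rely on two facts: continuity of $g_{\mathrm{NO}}$, which makes $\mathcal{F}_b$ closed, and monotonicity of $C$ from Lemma~\ref{lemma:monotonicity}.

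\textbf{Continuity.} Write
\[
g_{\mathrm{NO}}(\theta)=\int_\theta^1 \varepsilon_s f\,d\beta+\bar\varepsilon_m F(\theta).
\]
Since $f$ is integrable and $0\le\varepsilon_s\le 1$, both $\theta\mapsto\int_\theta^1\varepsilon_s f$ and $F$ are absolutely continuous. For example,
\[
\left|\int_{\theta_1}^{\theta_2}\varepsilon_s f\right|\le F(\theta_2)-F(\theta_1)\to 0 .
\]
Hence $g_{\mathrm{NO}}$ is continuous on $[0,1]$; this is the continuity fact the paper defers to Appendix~\ref{app:continuity}. It follows that $\mathcal{F}_b=g_{\mathrm{NO}}^{-1}([0,b])$ is closed in the compact set $[0,1]$. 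Being nonempty by assumption, it contains its infimum, so $\theta^*\in\mathcal{F}_b$ and $\theta^*$ is feasible.

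\textbf{Part (i).} For any feasible $\theta$ we have $\theta\ge\theta^*$, because $\theta^*=\inf\mathcal{F}_b$. Since $C$ is non-decreasing by Lemma~\ref{lemma:monotonicity}, $C(\theta^*)\le C(\theta)$. Thus $\theta^*$ attains the minimum cost over $\mathcal{F}_b$. This argument never uses the shape of $\mathcal{F}_b$, so it holds even when $\mathcal{F}_b$ is a disjoint union of intervals. The only point that needs care is feasibility of the infimum, and closedness already gives it. This is also where the non-monotonicity of $g_{\mathrm{NO}}$ stops mattering: we only need monotonicity of the objective, not of the constraint.

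\textbf{Part (ii).} If $g_{\mathrm{NO}}(0)\le b$, then $0\in\mathcal{F}_b$. Since $\mathcal{F}_b\subseteq[0,1]$, its infimum is $0$.

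\textbf{Part (iii).} Suppose $g_{\mathrm{NO}}(0)>b$. Then $0\notin\mathcal{F}_b$. Because $\theta^*\in\mathcal{F}_b$ by closedness, $\theta^*\ne 0$, so $\theta^*>0$.

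For tightness, we already know $g_{\mathrm{NO}}(\theta^*)\le b$. Suppose for contradiction that $g_{\mathrm{NO}}(\theta^*)<b$. By continuity there is $\delta>0$ with $g_{\mathrm{NO}}<b$ on $(\theta^*-\delta,\theta^*]\cap[0,1]$. That interval is nonempty below $\theta^*$ because $\theta^*>0$. This produces feasible points smaller than $\theta^*$, contradicting $\theta^*=\inf\mathcal{F}_b$. An alternative route is the intermediate value theorem on $[0,\theta^*]$, combined with the fact that no point of $[0,\theta^*)$ is feasible. Either way, $g_{\mathrm{NO}}(\theta^*)=b$.

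The main obstacle is the continuity of $g_{\mathrm{NO}}$. Everything else is order-theoretic, so I would state continuity carefully via absolute continuity of the integral with respect to the finite measure $f\,d\beta$.
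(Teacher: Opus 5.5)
Your proof is correct and follows essentially the same route as the paper's: continuity of $g_{\mathrm{NO}}$ makes $\mathcal{F}_b$ closed, so the infimum is attained; monotonicity of $C$ then gives (i); and a continuity-based contradiction gives tightness in (iii). You are slightly more careful than the paper in deriving $\theta^*>0$ explicitly from $\theta^*\in\mathcal{F}_b$ and $0\notin\mathcal{F}_b$, which is what guarantees that the neighborhood below $\theta^*$ actually lies inside $[0,1]$.
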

\begin{proof}
See Appendix~\ref{app:st-no-proofs}.
\end{proof}
\paragraph{Intuition of Theorem~\ref{thm:st-no}.}The error function is not necessarily monotone in $\theta$, and $\mathcal{F}_b$ can be a union of disjoint intervals. Theorem~\ref{thm:st-no}
shows that, within the single-threshold family, the optimal threshold is still the smallest feasible one, purely because $C(\cdot)$ is non-decreasing in $\theta$.

\subsection{Discriminative tasks - Multi-thresholds policy}\label{subsec:multi_class_clf}
We now focus on discriminative tasks (e.g., classification) where LLMs can be used to provide answers regarding specific categories. We consider $K$-way classification tasks where $\mathcal{A} = \{1, \ldots, K\}$ and now, for a sample $\boldsymbol{q}$, sLLM outputs the predicted label $\text{sLLM}(\boldsymbol{q}) = \boldsymbol{\hat{a}}^s \in \mathcal{A}$ and a \textit{vector} of confidence scores \(\boldsymbol{\beta} \in \Delta^{K-1} = \left\{ \boldsymbol{\beta} \in [0,1]^K \mid \sum_{j=1}^K \beta_{j} = 1 \right\}\), where $\beta_{ j}$ is the confidence score that sLLM assigns to the $j$-th class. Consequently, the standard approach is to use the most confident class as the predicted one, i.e., $\boldsymbol{\hat{a}} = \arg\max_{j = 1, \dots, K} \beta_{ j}$ \citep{chow1970}.
In realistic classification settings (especially with class imbalance or asymmetric class difficulty), performance over different classes can vary and models can display a different behavior per class. As a result, and to further generalize our framework, we extend the decision policy to a multi-threshold one that is more flexible and can adjust to class-specific calibration and difficulty, e.g., one might want to trust easier-to-classify classes more 
and be more cautious with harder classes.
We now consider a threshold vector $\boldsymbol{\theta} = (\theta_1, \ldots, \theta_K)$, where $\theta_k$ is the threshold associated to the $k$-th class. Therefore, the prediction from sLLM is now only accepted when $\beta_{k} \geq \theta_k$. Thus, we can define the cost for the {$i$-th query} as:
\begin{equation}
    c(\beta_{i,k}, \theta) = c_s + \pi(\beta_{i,k}, \theta_k) \cdot c_m = c_s + \mathds{1}[\beta_{i,k} < \theta_k] \cdot c_m\,,
\end{equation}
and the \textit{expected} cost is expressed as:
\begin{equation}
\label{eq:mt-cost}
    \text{Cost}_\text{MT}(\boldsymbol{\theta}) = c_s + c_m \sum_{k=1}^K p_k \int_0^{\theta_k} f_k(\beta)\,d\beta\,,
\end{equation}
where $\boldsymbol{p} \in \Delta^{K-1} = \left\{ \boldsymbol{p} \in [0,1]^K \mid \sum_{j=1}^K p_j = 1 \right\}$ are (predicted) class priors. Following the two different settings, we will again provide different formulations for the error. In the oracle setting, the error for the {$i$-th query} can be defined as:
\begin{equation}
\begin{aligned}
    e(\boldsymbol{\hat{a}}^m_i, \boldsymbol{\hat{a}}^s_i, \beta_{i, k}, \theta_k) &= \left(1-\pi(\beta_{i,k}, \theta_k)\right)\mathds{1}[\boldsymbol{\hat{a}}^m_i \neq \boldsymbol{\hat{a}}^s_i] \\ &= \mathds{1}[\beta_{i,k} \geq \theta_k]\mathds{1}[\boldsymbol{\hat{a}}^m_i \neq \boldsymbol{\hat{a}}^s_i] \,.
\end{aligned}
\end{equation}
Furthermore, we define $\varepsilon_{s,k}(\beta) = \Pr(\boldsymbol{\hat{a}}^m \neq k \mid \boldsymbol{\hat{a}}^s = k, \beta_k = \beta), \, \forall\, k =1,\ldots, K$ and the expected error becomes:
\begin{equation}
    \text{Error}_\text{MT-O}(\boldsymbol{\theta}) = \sum_{k=1}^K p_k \int_{\theta_k}^1 \varepsilon_{s,k}(\beta)f_k(\beta)\,d\beta\,.
\end{equation}
Consequently, the optimization objective now becomes:
\begin{equation}
\label{eq:multi_class_obj}
\begin{aligned}
\min_{\boldsymbol{\theta} \in [0,1]^K}  \quad & \text{Cost}_\text{MT}(\boldsymbol{\theta})\\
\textrm{s.t.} \quad & \text{Error}_\text{MT-O}(\boldsymbol{\theta}) \leq 1 - \xi \,.
\end{aligned}
\end{equation}
The following theorem shows that for some $t \in \mathbb{R}$, any optimal threshold vector must satisfy $\varepsilon_{s,1}(\theta^*_1) = \ldots = \varepsilon_{s, K} (\theta^*_K) = t$ under the assumptions that $\varepsilon_{s,k}(\beta) \in [0,1]$ and $f_k(\beta) \geq 0,\, \forall\, k=1,\ldots,K$. The proof can be found in Appendix~\ref{app:mt-o-proof}.
\begin{lemma}[Coordinate-wise monotonicity in the oracle case] \label{lemma:coordinate_monotonicity}
For each class $k\in\{1,\dots,K\}$ let $f_k:[0,1]\to[0,\infty)$ be integrable with CDF $F_k(\theta)=\int_0^\theta f_k(\beta)\,d\beta$, and let $\varepsilon_k:[0,1]\to[0,1]$ be any measurable function. Let also $p_k\ge 0$ with $\sum_k p_k=1$, and constants $c_s,c_m\ge 0$. For a threshold vector $\boldsymbol{\theta}=(\theta_1,\dots,\theta_K)\in[0,1]^K$ define:
\begin{equation*}
\begin{aligned}
C(\boldsymbol{\theta})&=c_s+c_m\sum_{k=1}^K p_k\,F_k(\theta_k)\,, \\
g(\boldsymbol{\theta})&=\sum_{k=1}^K p_k\int_{\theta_k}^{1}\varepsilon_k(\beta)\,f_k(\beta)\,d\beta\,.
\end{aligned}
\end{equation*}
Then, for each coordinate $k$, $C(\cdot)$ is non-decreasing in $\theta_k$ and $g(\cdot)$ is non-increasing in $\theta_k$.
Consequently, for any error budget $b\in[0,1]$, the feasible set
$
\mathcal{F}_b=\{\boldsymbol{\theta}\in[0,1]^K:~g(\boldsymbol{\theta})\le b\}
$
is an \emph{upper-orthant}: if $\boldsymbol{\theta}\in\mathcal{F}_b$ and $\boldsymbol{\theta}'\ge \boldsymbol{\theta}$ coordinate-wise, then $\boldsymbol{\theta}'\in\mathcal{F}_b$.
\end{lemma}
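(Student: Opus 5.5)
The plan is to reduce the statement to the one-dimensional Lemma~\ref{lemma:monotonicity}, exploiting the fact that both $C$ and $g$ are separable sums over classes. Fix a coordinate $k$ and all other thresholds $\theta_j$, $j\neq k$. Then
\[
C(\boldsymbol{\theta}) = \Big(c_s + c_m\sum_{j\neq k} p_j F_j(\theta_j)\Big) + c_m p_k F_k(\theta_k),
\]
and similarly
\[
g(\boldsymbol{\theta}) = \sum_{j\neq k} p_j\int_{\theta_j}^1 \varepsilon_j f_j\,d\beta + p_k\int_{\theta_k}^1 \varepsilon_k(\beta) f_k(\beta)\,d\beta .
\]
In each expression the first group of terms does not depend on $\theta_k$. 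So it suffices to show two facts: $\theta_k\mapsto F_k(\theta_k)$ is non-decreasing, and $\theta_k\mapsto \int_{\theta_k}^1 \varepsilon_k f_k$ is non-increasing. Multiplying by the nonnegative constants $c_m p_k$ and $p_k$ preserves these monotonicities. Here $c_m\ge 0$ is allowed, unlike the strict positivity in Lemma~\ref{lemma:monotonicity}, so I will argue directly rather than cite that lemma verbatim.

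For the direct argument, take $\theta_k\le\theta_k'$. Then
\[
F_k(\theta_k')-F_k(\theta_k)=\int_{\theta_k}^{\theta_k'} f_k\,d\beta\ge 0,
\]
since $f_k\ge 0$. Likewise,
\[
\int_{\theta_k}^1\varepsilon_k f_k-\int_{\theta_k'}^1\varepsilon_k f_k=\int_{\theta_k}^{\theta_k'}\varepsilon_k f_k\,d\beta\ge 0,
\]
because $0\le\varepsilon_k\le 1$ and $f_k\ge 0$. The bound $\varepsilon_k f_k\le f_k$ together with integrability of $f_k$ guarantees that $\varepsilon_k f_k$ is integrable, so all these integrals are well-defined. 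This establishes the coordinate-wise claims.

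For the upper-orthant property, let $\boldsymbol{\theta}\in\mathcal{F}_b$ and $\boldsymbol{\theta}'\ge\boldsymbol{\theta}$ coordinate-wise. Move from $\boldsymbol{\theta}$ to $\boldsymbol{\theta}'$ one coordinate at a time through the intermediate points
\[
\boldsymbol{\theta}^{(0)}=\boldsymbol{\theta},\quad \boldsymbol{\theta}^{(k)}=(\theta_1',\dots,\theta_k',\theta_{k+1},\dots,\theta_K),
\]
so that $\boldsymbol{\theta}^{(K)}=\boldsymbol{\theta}'$. Each step increases a single coordinate, so by the coordinate-wise result $g$ does not increase. Chaining the steps gives $g(\boldsymbol{\theta}')\le g(\boldsymbol{\theta})\le b$, hence $\boldsymbol{\theta}'\in\mathcal{F}_b$. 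Alternatively, one can write $g(\boldsymbol{\theta})-g(\boldsymbol{\theta}')=\sum_k p_k\int_{\theta_k}^{\theta_k'}\varepsilon_k f_k\ge 0$ in a single line.

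There is no real obstacle here. The only points needing care are the measurability and integrability of $\varepsilon_k f_k$, handled by the domination above, and the observation that the weakened assumption $c_m\ge 0$ does not affect monotonicity.
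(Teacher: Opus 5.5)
Your proposal is correct and follows essentially the same route as the paper: compute the single-coordinate differences $c_m p_k\int_{\theta_k}^{\theta_k'} f_k\,d\beta \ge 0$ and $-p_k\int_{\theta_k}^{\theta_k'}\varepsilon_k f_k\,d\beta\le 0$, then deduce that $\mathcal{F}_b$ is upward closed. Your coordinate-by-coordinate chaining (or the one-line sum) and the integrability remark simply make explicit what the paper dismisses as ``follows immediately.''
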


\begin{proof}
See Appendix~\ref{app:mt-o-proof}
\end{proof}
\begin{theorem}[Optimal threshold vector in the oracle case]\label{thm:mt-o}
Consider the constrained problem
\begin{equation}
\min_{\boldsymbol{\theta}\in[0,1]^K}~ C(\boldsymbol{\theta})\qquad\text{s.t.}\qquad g(\boldsymbol{\theta})\le b,
\end{equation}
with $C(\cdot),g(\cdot)$ as in Lemma~\ref{lemma:coordinate_monotonicity} and $b\in[0,1]$.

Then:
(i) There exists an optimal solution $\boldsymbol{\theta}^*$ with {tight} constraint $g(\boldsymbol{\theta}^*)=b$, unless $\boldsymbol{\theta}=\boldsymbol{0}$ is already feasible (in which case $\boldsymbol{\theta}^*=\boldsymbol{0}$), 
(ii) At any optimal solution with coordinates in the interior of $\mathcal{F}_b$ (i.e., not forced to $0$ or $1$), the {boundary errors} are equalized:
$\varepsilon_{s,1}(\theta_1^*)=\ldots=\varepsilon_{s,K}(\theta_K^*)=t$ and (iii) for each interior class $k$, 
we can take $ \theta_k^* \in \varepsilon_{s,k}^{-1}(t)$, with $t$ determined by
$g(\boldsymbol{\theta}^*)= b
$.
\end{theorem}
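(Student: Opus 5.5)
We argue in three stages: existence of a minimizer, tightness of the constraint, and a Lagrangian/exchange argument that equalizes the boundary errors. Write $G_k(\theta)=\int_\theta^1\varepsilon_k f_k\,d\beta$. Then $C(\boldsymbol\theta)=c_s+c_m\sum_k p_kF_k(\theta_k)$ and $g(\boldsymbol\theta)=\sum_k p_kG_k(\theta_k)$ are separable. Since $f_k$ is integrable, $F_k$ and $G_k$ are absolutely continuous, so $C$ and $g$ are continuous on the compact cube $[0,1]^K$. The set $\mathcal F_b$ is closed, and it is nonempty because $\boldsymbol 1\in\mathcal F_b$ ($g(\boldsymbol 1)=0\le b$). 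Hence a minimizer exists.

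\textbf{Part (i).} If $g(\boldsymbol 0)\le b$, then $\boldsymbol 0$ is feasible. By Lemma~\ref{lemma:coordinate_monotonicity}, $C$ is coordinatewise non-decreasing, so $\boldsymbol 0$ minimizes $C$ over the whole cube and $\boldsymbol\theta^*=\boldsymbol 0$.

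Otherwise $g(\boldsymbol 0)>b$. Take any optimal $\boldsymbol\theta$ with $g(\boldsymbol\theta)<b$ and consider the path $s\mapsto s\boldsymbol\theta$ for $s\in[0,1]$. Along this path $g$ is continuous and non-increasing in $s$, and $C$ is non-decreasing in $s$, by the coordinatewise monotonicity of the lemma. By the intermediate value theorem there is an $s_0\in(0,1)$ with $g(s_0\boldsymbol\theta)=b$. Then $C(s_0\boldsymbol\theta)\le C(\boldsymbol\theta)$, so $s_0\boldsymbol\theta$ is an optimal solution with a tight constraint. Note that this only gives existence of a tight optimum, which is exactly what (i) claims; no uniqueness is asserted.

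\textbf{Parts (ii) and (iii).} Here we use first-order conditions. Assume $f_k$ and $\varepsilon_k$ are continuous at the relevant points, which is the implicit regularity in the statement, since it evaluates $\varepsilon_{s,k}$ pointwise. Then
\[
\partial_{\theta_k}C=c_m p_k f_k(\theta_k),\qquad \partial_{\theta_k}g=-p_k\varepsilon_k(\theta_k)f_k(\theta_k).
\]
Take an optimum $\boldsymbol\theta^*$ with a tight constraint whose coordinates lie in $(0,1)$ and satisfy $p_kf_k(\theta_k^*)>0$ (interior classes). A Lagrange/KKT argument gives a $\lambda\ge 0$ with
\[
c_m p_kf_k(\theta_k^*)=\lambda p_k\varepsilon_k(\theta_k^*)f_k(\theta_k^*),
\]
hence $\varepsilon_k(\theta_k^*)=c_m/\lambda=:t$ for every interior $k$.

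Because constraint qualification could fail when all interior $\varepsilon_k(\theta_k^*)=0$, it is cleaner to give a direct exchange argument that avoids KKT. Suppose two interior classes $j,k$ have $\varepsilon_j(\theta_j^*)<\varepsilon_k(\theta_k^*)$. Lower $\theta_k$ by a small $\delta$; to first order this raises the error by $p_k\varepsilon_kf_k\delta$ and saves cost $c_mp_kf_k\delta$. Then raise $\theta_j$ by $\eta$ with $p_j\varepsilon_jf_j\eta=p_k\varepsilon_kf_k\delta$, which restores feasibility to first order and costs $c_mp_jf_j\eta$. The net cost change is
\[
c_m p_kf_k\delta\bigl(\varepsilon_k/\varepsilon_j-1\bigr)\cdot(-1)<0.
\]
This contradicts optimality. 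If $\varepsilon_j=0$, the move is impossible, but then raising $\theta_j$ does not reduce the error at all, which itself forces a contradiction with interiority. Higher-order terms are controlled by continuity, and exact feasibility is restored by a final tiny adjustment via the intermediate value theorem.

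Part (iii) then follows by choosing $\theta_k^*$ in the level set $\varepsilon_k^{-1}(t)$. The scalar $t$ is fixed by the one-dimensional equation $g(\boldsymbol\theta(t))=b$, which is solvable by the intermediate value theorem when each $\varepsilon_k$ is monotone or when a measurable selection of $\boldsymbol\theta(t)$ is used.

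The main obstacle is the exchange step: making the first-order perturbation rigorous with only measurable $\varepsilon_k$, and handling the degenerate cases where $f_k(\theta_k^*)=0$ or $\varepsilon$ vanishes. I would state continuity of $\varepsilon_k$ and $f_k$ at the optimum explicitly as the "interior" hypothesis and treat the remaining classes as boundary ones.
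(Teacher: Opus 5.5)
Your part (i) is correct, and more careful than the paper's. The paper argues that at a slack optimum, lowering a coordinate \emph{strictly} reduces $C$, which the lemma's weak monotonicity does not give. Your scaling path $s\mapsto s\boldsymbol{\theta}$ with the intermediate value theorem produces a tight optimum from weak monotonicity and continuity alone.

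The genuine gap is the exchange step in (ii), which you offer as the rigorous replacement for KKT: it runs in the wrong direction. Take $\varepsilon_j(\theta_j^*)<\varepsilon_k(\theta_k^*)$ and lower $\theta_k$ by $\delta$. The compensating increase of $\theta_j$ is $\eta=p_k\varepsilon_k f_k\delta/(p_j\varepsilon_j f_j)$, and the net cost change is
\[
\Delta C=-c_m p_k f_k\delta+c_m p_j f_j\eta=c_m p_k f_k\delta\Bigl(\frac{\varepsilon_k}{\varepsilon_j}-1\Bigr)>0 .
\]
The factor $(-1)$ in your display has no source. Your perturbation \emph{raises} the cost, so no contradiction follows. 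The reason is that at a boundary with error level $\varepsilon$, removing one unit of error by deferring costs $c_m/\varepsilon$, so error reduction is cheapest where the boundary error is \emph{largest}. You sell error reduction at the high-error class and buy it back at the low-error class, which is a losing trade. The correct move, which is the paper's, raises the threshold of the class with the larger boundary error and lowers the other. Increasing $\theta_i$ by $d$ and decreasing $\theta_j$ by $s$ with $p_i\varepsilon_{s,i}f_i\,d=p_j\varepsilon_{s,j}f_j\,s$ gives $\Delta C=c_m p_i f_i d\,(1-\varepsilon_{s,i}/\varepsilon_{s,j})<0$ whenever $\varepsilon_{s,i}(\theta_i^*)>\varepsilon_{s,j}(\theta_j^*)$.

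Reversing the direction also changes your degenerate case. Your claim that $\varepsilon_j(\theta_j^*)=0$ ``forces a contradiction with interiority'' is not an argument, since an interior coordinate with zero boundary error is not contradictory in itself. In the corrected direction it is the easy case. Lowering $\theta_j$ by $\eta$ increases the error only by $o(\eta)$ while saving cost of order $\eta$. An $o(\eta)$ increase of $\theta_k$, possible because $\varepsilon_k f_k>0$ there, restores exact feasibility, so you get a strict improvement.

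Alternatively, your KKT sketch already closes (ii) under your continuity assumptions if you run it as a Fritz John argument on the interior coordinates, with the boundary coordinates held fixed. Either the multiplier on $C$ vanishes, in which case $p_k\varepsilon_k(\theta_k^*)f_k(\theta_k^*)=0$ for every interior $k$ and all boundary errors equal $t=0$. Or it does not, and then $\varepsilon_k(\theta_k^*)=c_m/\lambda$ for every interior $k$. So the constraint-qualification failure you worried about is harmless.
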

\begin{proof}
See Appendix~\ref{app:mt-o-proof}.
\end{proof}
\paragraph{Intuition of Theorem~\ref{thm:mt-o}.}
If one class is being deferred at a boundary point where the sLLM is much worse than another class at its boundary, then we can reallocate deferrals across classes and reduce cost without violating the error budget. Thus, the optimum equalizes the `boundary errors' across all interior classes, while easy classes saturate at always-accept and difficult classes may saturate at always-defer.
\footnote{We again focus our proof on the threshold-based policy; detailed proof for more general cases are provided in Appendix \ref{app:multi_general_policies}.
}

We now move to the non-oracle setting, where mLLM has a non-zero error probability. 
The cost again remains the same as in the oracle case~(\ref{eq:mt-cost}) while the sample-based error now becomes: 
\begin{multline}
    e(\boldsymbol{a}_i, \boldsymbol{\hat{a}}^s_i, \boldsymbol{\hat{a}}^m_i, \beta_{i,k}, \theta_k) = \\
    \begin{gathered}
    \left( 1 - \pi(\beta_{i,k}, \theta_k) \right)\mathds{1}[\boldsymbol{a}_i \neq \boldsymbol{\hat{a}}^s_i] + \pi(\beta_{i,k}, \theta_k) \mathds{1}[\boldsymbol{a}_i \neq \boldsymbol{\hat{a}}^m_i] = \\ \mathds{1}[\beta_{i,k} \geq \theta_k]\mathds{1}[\boldsymbol{a}_i \neq \boldsymbol{\hat{a}}^s_i] + \mathds{1}[\beta_{i,k} < \theta_k]\mathds{1}[\boldsymbol{a}_i \neq \boldsymbol{\hat{a}}^m_i]\,.
    \end{gathered}
\end{multline}
We again assume the mLLM’s error rate is independent of the sLLM’s outputs, 
so the error curve can be summarized by a constant $\bar{\varepsilon}_{m,k}\in [0,1]$ and 
the expected error becomes:
\begin{multline}
\label{eq:exp_error_4}
    \text{Error}_\text{MT-NO}(\theta) = 
    \mathbb{E} \big[  e(\boldsymbol{a}_i, \boldsymbol{\hat{a}}^s_i, \boldsymbol{\hat{a}}^m_i, \beta_{i,k}, \theta_k)  \big] = \\ 
    \begin{gathered}
    \sum_{k=1}^K p_k
\left(
    \int_{\theta_k}^1 \varepsilon_{s,k}(\beta)\,f_k(\beta)\,d\beta
    \;+\;
    \bar{\varepsilon}_{m,k} \int_0^{\theta_k} f_k(\beta)\,d\beta
\right)\,.
\end{gathered}
\end{multline}
The constraint of the optimization problem in (\ref{eq:multi_class_obj}) is replaced by (\ref{eq:exp_error_4}). In contrast to the oracle case, the error in the non-oracle setting is not guaranteed to be coordinate-wise non-increasing in each $\theta_k$, because increasing a threshold both removes
sLLM errors and can add mLLM errors. Consequently, the feasible set
$\mathcal{F}_b$ need not be an upper-orthant; it can be a more complicated (even disjoint) subset of $[0,1]^K$. Nonetheless, as in the single-threshold case, the cost remains monotone in each coordinate, and this alone suffices to characterize the structure of optimal thresholds below.

\begin{theorem}[Optimal threshold vector in the non-oracle case]
\label{thm:mt-no}
Consider the constrained problem
\begin{equation}
\min_{\boldsymbol{\theta}\in[0,1]^K}~ C(\boldsymbol{\theta})
\qquad\text{s.t.}\qquad g_{\mathrm{NO}}(\boldsymbol{\theta})\le b,
\label{eq:no_constrained}
\end{equation}
with $C(\cdot)$ remaining the same as in the oracle case
and the non-oracle error as in (\ref{eq:exp_error_4}) (i.e., $g_{\mathrm{NO}(\cdot)} = \text{Error}_\text{MT-NO}(\cdot)$).
Define the \emph{excess error} curves
$
\Delta_k(\beta):=\varepsilon_{s,k}(\beta)-\bar{\varepsilon}_{m,k}, \,\,\text{for}\,\, k=1,\dots,K\,.
$

Then:
(i) there exists an optimal solution $\boldsymbol{\theta}^*$ with \emph{tight} constraint $g_{\mathrm{NO}}(\boldsymbol{\theta}^*)=b$, unless $\boldsymbol{\theta}=\boldsymbol{0}$ is already feasible (in which case $\boldsymbol{\theta}^*=\boldsymbol{0}$), (ii) 
For any optimal solution $\boldsymbol{\theta}^*$ and two classes $i,j$ with coordinates interior in $\mathcal{F}_b$ that satisfy
$
0<\theta_i^*,\theta_j^*<1
\,\,\text{and}\,\,
\Delta_i(\theta_i^*)>0,\ \Delta_j(\theta_j^*)>0
$, their excess boundary errors must be equal:
$
\Delta_i(\theta_i^*)=\Delta_j(\theta_j^*)=t\in\mathbb{R}
$ ($t$ is determined
by the tightness equation $g_{\mathrm{NO}}(\boldsymbol{\theta}^*)=b$). Consequently, for every class $k$ with
$0<\theta_k^*<1$ and $\Delta_k(\theta_k^*)>0$\,,
$
\Delta_k(\theta_k^*) = t
\label{eq:excess_equal_all}
$, and (iii) 
an optimal solution $\boldsymbol{\theta}^*$ can be chosen to satisfy one of the following three regimes, for each class $k$: (a) 
if $\Delta_k(\beta)\le t$ for all $\beta$, then $\theta_k^*=0$, (b) if $\Delta_k(\cdot)$ crosses level $t$, then $\theta_k^*\in \Delta_k^{-1}(t)$ and $0<\theta_k^*<1$, and (c)
if $\Delta_k(\beta)> t$ for all $\beta$, then $\theta_k^*=1$. \newline
\end{theorem}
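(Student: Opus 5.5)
The plan is to treat the problem as a separable resource-allocation (knapsack-type) problem over the $K$ coordinates. The cost $C(\boldsymbol{\theta})=c_s+c_m\sum_k p_k F_k(\theta_k)$ is a sum of non-decreasing one-dimensional terms, and the non-oracle error can be rewritten as
\[
g_{\mathrm{NO}}(\boldsymbol{\theta}) = g_{\mathrm{NO}}(\boldsymbol{0}) - \sum_{k=1}^K p_k \int_0^{\theta_k} \Delta_k(\beta)\, f_k(\beta)\, d\beta ,
\]
since $\int_{\theta_k}^1 \varepsilon_{s,k}f_k+\bar{\varepsilon}_{m,k}\int_0^{\theta_k}f_k=\int_0^1\varepsilon_{s,k}f_k-\int_0^{\theta_k}\Delta_k f_k$. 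Thus raising $\theta_k$ buys an error reduction at marginal rate $p_k\Delta_k(\theta_k)f_k(\theta_k)$ for a marginal cost $c_m p_k f_k(\theta_k)$. The ratio of these is exactly $\Delta_k(\theta_k)/c_m$, which already suggests that the Lagrangian condition is ``equalize $\Delta_k$''. I will also use the continuity of $g_{\mathrm{NO}}$ (Appendix \ref{app:continuity}) and of $F_k$, which follow from integrability.

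For (i): if $\boldsymbol{0}$ is feasible, it minimizes $C$ because $C$ is coordinatewise non-decreasing (Lemma \ref{lemma:coordinate_monotonicity}). Otherwise, take an optimal $\boldsymbol{\theta}$; existence follows from compactness of $[0,1]^K$ and continuity, which make $\mathcal{F}_b$ closed. Suppose $g_{\mathrm{NO}}(\boldsymbol{\theta})<b$. Move along the segment $\boldsymbol{\theta}(s)=s\boldsymbol{\theta}$ from $s=1$ toward $s=0$. Since $g_{\mathrm{NO}}(\boldsymbol{0})>b$, continuity and the intermediate value theorem give a largest $s^*<1$ with $g_{\mathrm{NO}}(s^*\boldsymbol{\theta})=b$. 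The point $s^*\boldsymbol{\theta}$ is feasible and satisfies $C(s^*\boldsymbol{\theta})\le C(\boldsymbol{\theta})$, so it is also optimal and the constraint is tight there.

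For (ii) I use an exchange argument. Take an optimal $\boldsymbol{\theta}^*$ and indices $i,j$ with interior thresholds and $\Delta_i(\theta_i^*)>\Delta_j(\theta_j^*)>0$. Perturb by $\theta_i\uparrow$ by $\delta_i$ and $\theta_j\downarrow$ by $\delta_j$, with the amounts chosen so that the error is unchanged to first order:
\[
p_i\Delta_i f_i\,\delta_i = p_j\Delta_j f_j\,\delta_j .
\]
The cost then changes by
\[
c_m\bigl(p_if_i\delta_i-p_jf_j\delta_j\bigr)=c_m p_j f_j\delta_j\bigl(\Delta_j/\Delta_i-1\bigr)<0 .
\]
This contradicts optimality, so the two excess boundary errors must be equal, and their common value $t$ is pinned down by tightness. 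Positivity of $\Delta$ is needed so that the ratio is well defined and the directions are the correct ones.

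The main obstacle is making this first-order argument rigorous. It needs $f_k$ and $\Delta_k$ to be continuous at the thresholds, or else Lebesgue points, and it needs $f_k(\theta_k^*)>0$. I would handle it by working with integrals over small intervals $[\theta_i^*,\theta_i^*+\delta_i]$ and $[\theta_j^*-\delta_j,\theta_j^*]$ rather than derivatives. Then I would absorb the second-order slack in the error by a tiny extra adjustment, which remains feasible when combined with the tightness from (i). Where the required regularity fails, I would state that the equality holds almost everywhere or under continuity; the equalization of the oracle case (Theorem \ref{thm:mt-o}) is the model for this.

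For (iii), given $t$, I construct per class the minimizer of the one-dimensional Lagrangian $c_m p_kF_k(\theta_k)-\lambda\,p_k\int_0^{\theta_k}\Delta_kf_k$, which for $\lambda=c_m/t$ is equivalent to accepting exactly where $\Delta_k\le t$. If $\Delta_k\le t$ everywhere, deferring never pays and I set $\theta_k^*=0$. If $\Delta_k>t$ everywhere, I defer always and set $\theta_k^*=1$. Otherwise I pick a crossing point in $\Delta_k^{-1}(t)$, where continuity or an intermediate-value argument guarantees one exists. Finally, the exchange argument of (ii) shows that any optimal solution can be rearranged into this form without increasing cost, which justifies the claim that an optimal solution ``can be chosen'' this way.
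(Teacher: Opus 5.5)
Your part (i) is correct and cleaner than the paper's. The paper argues that a slack optimum could be improved by lowering a coordinate. To do so it invokes Lemma~\ref{lemma:coordinate_monotonicity}, whose monotonicity of $g$ is an oracle property that fails for $g_{\mathrm{NO}}$, and it assumes a strict decrease of $C$, which needs $f_k>0$. Your argument (compactness, then the intermediate value theorem along $s\mapsto g_{\mathrm{NO}}(s\boldsymbol{\theta})$) uses only continuity and coordinatewise monotonicity of $C$, and proves exactly the existence claim. Part (ii) is the same first-order exchange argument as the paper's. Your regularity caveats are genuinely needed and absent from the paper: $f_k$ and $\Delta_k$ continuous at the thresholds, $f_k(\theta_k^*)>0$, and an $o(\delta)$ correction to restore feasibility. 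That correction works at any optimum, so you do not need the particular tight solution from (i).

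The genuine gap is in (iii). You claim that minimizing $p_k\int_0^{\theta_k}(c_m-\lambda\Delta_k)f_k$ over $\theta_k$ ``is equivalent to accepting exactly where $\Delta_k\le t$''. This holds only if $\{\beta:\Delta_k(\beta)\le t\}$ is an upper interval, i.e.\ if $\Delta_k$ is non-increasing. That is the monotone-risk property of Appendix~\ref{app:multi_general_policies}, and it is not a hypothesis of this theorem. Without it, regime (b) is false, so the step cannot be closed.

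Here is a counterexample. Take $K=2$, $p_1=p_2=\tfrac12$, $f_1=f_2\equiv 1$, $\bar\varepsilon_{m,1}=\bar\varepsilon_{m,2}=0$, $\varepsilon_{s,1}(\beta)=\beta$, $\varepsilon_{s,2}(\beta)=1-\beta$, and $b=\tfrac18$. Then $g_{\mathrm{NO}}(\boldsymbol{\theta})=\tfrac14(1-\theta_1^2)+\tfrac14(1-\theta_2)^2$. The unique minimizer of $\theta_1+\theta_2$ subject to $g_{\mathrm{NO}}\le b$ is $\boldsymbol{\theta}^*=(1,\,1-1/\sqrt2)$, with $t=\Delta_2(\theta_2^*)=1/\sqrt2$. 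Although $\Delta_1(\beta)=\beta$ crosses $t$ at $1/\sqrt2$, we have $\theta_1^*=1$. Setting $\theta_1=1/\sqrt2$ forces $\theta_2=1$ and a strictly larger cost.

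Your Lagrangian construction also breaks here. At $\lambda=c_m/t$ the crossing point is the \emph{maximizer} of class~1's per-class Lagrangian, and its minimizer $\theta_1=0$ is infeasible. In fact no $\lambda$ yields a tight Lagrangian minimizer: there is a duality gap, because class~1's error is concave in its deferral mass.

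For the same reason your last sentence fails. The exchange argument of (ii) is a local first-order condition for coordinates that are interior with $\Delta_k>0$. It cannot move a coordinate off $0$ or $1$, compare distant crossing points, or show that the assembled vector satisfies $g_{\mathrm{NO}}=b$.

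To repair (iii), add the hypothesis that every $\Delta_k$ is non-increasing (and, for simplicity, continuous). Then:
\begin{itemize}
\item Each per-class Lagrangian is minimized at a crossing point of level $t=c_m/\lambda$, or at the corresponding endpoint.
\item The error of the assembled vector $\boldsymbol{\theta}(\lambda)$ is non-increasing in $\lambda$.
\item Choosing inside flat pieces of $\Delta_k$ where needed, you obtain $\lambda^*$ with $g_{\mathrm{NO}}(\boldsymbol{\theta}(\lambda^*))=b$. This excludes the degenerate case where $b$ equals the smallest achievable error.
\end{itemize}
The Lagrangian is separable, so $\boldsymbol{\theta}(\lambda^*)$ minimizes it over $[0,1]^K$. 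Hence for every feasible $\boldsymbol{\theta}$,
\[
C(\boldsymbol{\theta})\;\ge\;C(\boldsymbol{\theta})+\lambda^*\bigl(g_{\mathrm{NO}}(\boldsymbol{\theta})-b\bigr)\;\ge\;C(\boldsymbol{\theta}(\lambda^*))+\lambda^*\bigl(g_{\mathrm{NO}}(\boldsymbol{\theta}(\lambda^*))-b\bigr)\;=\;C(\boldsymbol{\theta}(\lambda^*)).
\]
So $\boldsymbol{\theta}(\lambda^*)$ is optimal and has the three-regime form directly, with no rearrangement step. The paper's own proof of (iii) passes over this same monotonicity issue.
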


\begin{proof}
    See Appendix~\ref{app:mt-no-proof}.
\end{proof}

\paragraph{Intuition of Theorem~\ref{thm:mt-no}.} The relevant quantity is no longer the sLLM error alone, but the {excess error} $\Delta_k(\beta)=\varepsilon_{s,k}(\beta)-\bar{\varepsilon}_{m,k}$, which measures how much worse (or better) the sLLM is than mLLM for class $k$ at confidence level $\beta$. The optimal policy equalizes this excess boundary error across all interior classes. Hence, the oracle equalization rule extends to the non-oracle case by replacing $\varepsilon_{s,k}$ by $\Delta_k$. If one class had higher excess boundary error than another, reallocating deferrals would reduce cost without violating the error constraint, contradicting optimality. Classes always below this level are handled by the sLLM (always accept), while those always above it are always deferred to the mLLM.

\subsection{Summary of Theoretical Results}
\label{subsec:theory_summary}

We summarize the main theoretical conclusions for the four settings considered this Section:

\textbf{1) Single-threshold, oracle setting.}
Error decreases and cost increases monotonically with the threshold $\theta$. 
The optimal policy is the smallest threshold that meets the error budget: defer only as much as necessary.\\
\textbf{2) Single-threshold, non-oracle setting.}
Although error is no longer monotone in $\theta$, cost remains monotone. The optimal policy is still the leftmost feasible threshold (the least costly one satisfying the error constraint).\\
\textbf{3) Multi-threshold, oracle setting.}
With class-specific thresholds, the optimum equalizes a common boundary error level across all classes in the interior of the feasible set.\\ 
\textbf{4) Multi-threshold, non-oracle setting.} 
The optimum equalizes excess boundary error across interior classes, given that the relevant quantity is the excess error (sLLM minus mLLM error).\\
\textbf{Overall implication.}
Across all cases, cost monotonicity selects the least expensive feasible policy, and in the multi-threshold setting optimality is characterized by an equalization principle (boundary error or excess boundary error).

\section{Experimental results}
\label{sec:simulation}
In this section, we evaluate our policies in both discriminative and generative settings. We use text classification as a representative discriminative task, and question-answering (QA) and machine-translation (MT) as generative benchmarks, demonstrating effectiveness across diverse problem settings. Given a finite sample of data, we approximate expected cost and error via efficient Monte-Carlo (MC) sampling algorithms (more details about the approximation and the implemented algorithms can be found in Appendices \ref{app:mc_sampling_details} and \ref{app:algo_details}, respectively)..
\textbf{Models.} We experiment with three open-source LLM families of various sizes, namely, different versions of the \textsc{gemma3} model \citep{gemma3} (i.e., $1$B up to $27$B parameters), \textsc{qwen3} \citep{qwen3} with $4$B and \textsc{ministral3} \citep{liu2026ministral3} with $3$B parameters. 
We considered both homogeneous pairings, where both models belong to the same family, and heterogeneous pairings, combining models from different families. \\
\textbf{Datasets.} We test our framework on several classification datasets. For binary classification, we use \textsc{sst-2} \citep{socher-etal-2013-recursive} and \textsc{fakenews} \citep{fakenews}; while for multi-class classification, we used \textsc{agnews} \citep{agnews} and \textsc{emotion} \citep{saravia-etal-2018-carer}. For the QA task, we select the \textsc{SQuAD} dataset \citep{rajpurkar-etal-2016-squad} and for the MT task, the \textsc{WMT} english-to-german translate dataset \citep{bojar-EtAl:2014:W14-33}. We use a 80$\%$ split for the optimization process and the rest 20$\%$ for testing.\footnote{More details about the datasets in Appendix \ref{app:datasets}. Due to computational restrictions, we limit our experiments to a subset of 8,000 samples from each dataset.}\\
\textbf{Evaluation metrics.} We use standard, task-appropriate metrics. For classification, we report \emph{Accuracy}, and the $F_1$ score. For text generation, we measure semantic similarity via cosine similarity between predicted and reference embeddings \citep{DBLP:conf/emnlp/ReimersG19}, and lexical overlap using \textsc{rouge-l} \citep{lin-2004-rouge}. Inference cost is measured in TFLOPs using the PyTorch profiler \citep{DBLP:conf/nips/PaszkeGMLBCKLGA19} for per-query estimates; then, model-specific cost constants $c_s$ and $c_m$ are computed by summarizing these estimated (e.g., with statistics like the mean or the median). We report efficiency using a `cost savings' metric, defined as the percentage reduction in total cost relative to using only the mLLM for all queries.\\
\textbf{Baselines.} We compare our policies against: \emph{(i) a no-deferral strategy}, where all queries are handled by sLLM; \emph{(ii) a full-deferral strategy}, where all queries are sent to mLLM; \emph{(iii) a random-deferral strategy}, deferring each query independently with probability 0.5; \emph{(iv) FrugalGPT} \citep{frugal}, a cascade framework that uses an external scorer and optimizes a single deferral threshold; and \emph{(v) HybridLLM} \citep{hybridlllm}, which trains an external router to assign each query to one of the two models.

We make our code available in a temporary repository: \href{https://github.com/anonym-conf/inf-nets}{https://github.com/anonym-conf/inf-nets}.

\subsection{Generative Tasks}
We utilize the single-threshold policy in the free-form generation tasks. In each case, the model generates an output sequence conditioned on the given context and question for QA, and a sentence to be translated for MT.
For the confidence value, we use the average log probability of the predicted tokens \citep{lm-polygraph}. 
Additionally, to measure the expected probability of error, we defined the following notion of correctness: 
{At first, the embeddings of both the predicted and true answers are computed} using Sentence Transformers \citep{DBLP:conf/emnlp/ReimersG19}. Secondly, we calculate the cosine similarity between the embeddings of the true and predicted answers. Then, a threshold $\tau$ is applied to the similarity score to determine correctness, i.e., $\mathds{1}[\hat{\boldsymbol{a}} = \boldsymbol{a}] = \mathds{1}[\mathrm{sim}(\hat{\boldsymbol{a}}, \boldsymbol{a}) \geq \tau]$.
We present results for both datasets in Table~\ref{tab:single-thres_results} for the \textsc{gemma3-1b} (as sLLM) and \textsc{gemma3-4b} (as mLLM) models pair in the non-oracle setting (results with additional model pairs can be found in Appendix \ref{app:additional_results}). For the optimization, we set the bound $\xi$ as the performance of mLLM, 
{while} for FrugalGPT, we use our policy's achieved cost as the budget. HybridLLM solves a slightly different objective that minimizes cost savings (deferred samples) but subject to keeping the performance drop (reduction in BART score \citep{bartscore}) less than a defined value (typically 1\%).
The results show that our method can deliver a strong performance–cost trade-off. On \textsc{SQuAD}, 
{our method} maintains 
{mLLM} performance with a 26.66\% reduction in inference cost, and on WMT (where mLLM was worse than sLLM), it maintains the best possible performance with maximal cost savings. These results demonstrate that our policy adapts to task difficulty, allocating compute only when necessary and avoiding the inefficiencies of both static and random strategies. During our experiments, we also observed that the performance of FrugalGPT and HybridLLM was strongly contingent on the effectiveness of the routing model and the data used to train it, something that represents a drawback for these methods. Furthermore, compared to FrugalGPT and HybridLLM, we also save up the cost of training the additional routing model. 
The error and cost trade-offs for the \textsc{SQuAD} dataset are also illustrated in Fig.~\ref{fig:tradeoff}.
\begin{table}[h!]
  \caption{{Performance comparison of our policy with other policies }
  on generation tasks (\textsc{gemma3-1b} as sLLM and \textsc{gemma3-4b} as mLLM).}
  \label{tab:single-thres_results}
  \centering
    \begin{small}
      \begin{adjustbox}{max width=\columnwidth}
        \begin{tabular}{cccc}
          \toprule
          \bf Dataset & \bf Policy & \bf Performance (\textsc{rouge-l} / \textsc{cosine}) & \bf Cost Saved ($\%$) \\
          \midrule 
           \multirow{4}{*}{{\sc SQuAD}} & None Deferred & 0.39 / 0.49 & 99.09$\%$ \\
           &  All Deferred & 0.57 / 0.65 & 0$\%$\\
           &  Random & 0.48 / 0.57 & 48.28$\%$\\
           &  FrugalGPT & 0.57 / 0.64 & 26.22$\%$\\
           &  HybridLLM & 0.57 / 0.65 & 0.66$\%$\\
           &  Ours & 0.57 / 0.64 & 26.66$\%$\\
           \midrule
           \multirow{4}{*}{{\sc WMT}} & None Deferred & 0.37 / 0.62 & 97$\%$ \\
           &  All Deferred & 0.19 / 0.32 & 0$\%$\\
           &  Random & 0.28 / 0.46 & 46.18$\%$\\
           &  FrugalGPT & 0.37 / 0.62 & 97$\%$\\
           &  HybridLLM & 0.37 / 0.62 & 97$\%$\\
           &  Ours & 0.38 / 0.62 & 97$\%$\\
          \bottomrule
        \end{tabular}
        \end{adjustbox}
    \end{small}
  \vskip -0.1in
\end{table}
\begin{figure}[!htb]
    \centering
\subfloat{\includegraphics[width=\columnwidth, ]{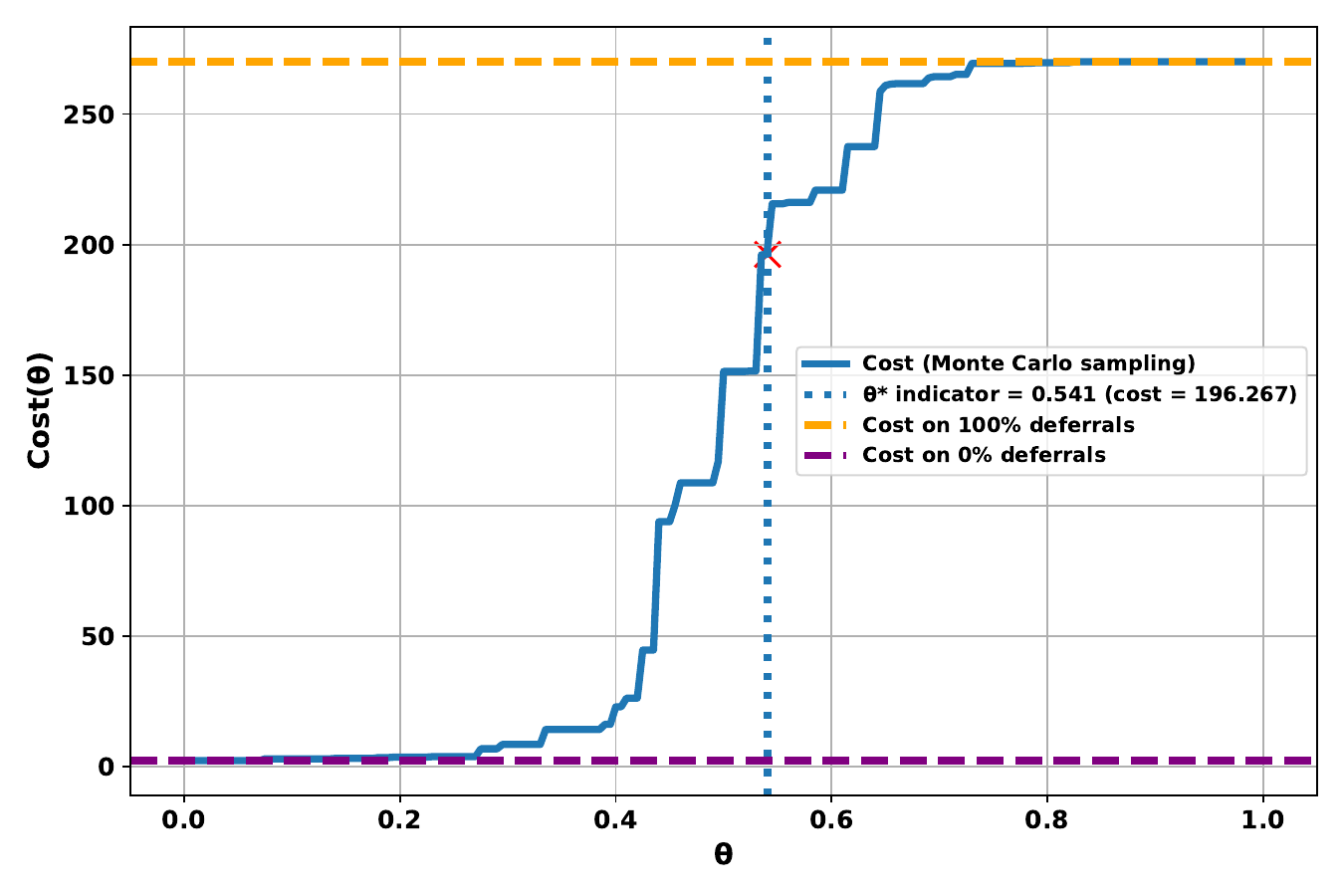}}\\[1.5mm]
\subfloat{\includegraphics[width=\columnwidth, ]{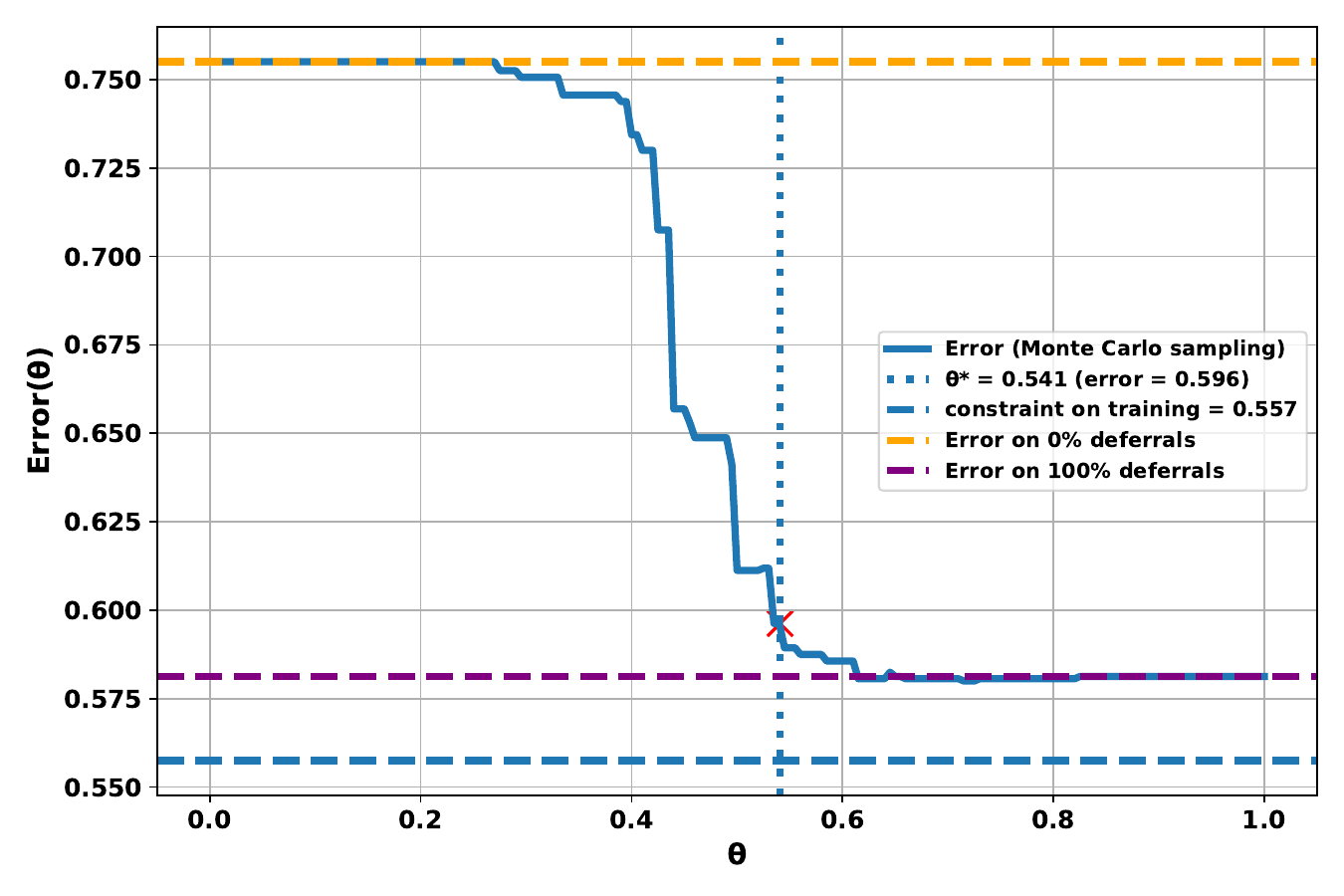}}
    \caption{Expected cost (top) and error (bottom) as functions of the deferral threshold $\theta$ (single-threshold policy) on the \textsc{SQuAD} test set. The deferral rate increases monotonically with the threshold. Horizontal lines indicate the cases of 0\% and 100\% deferral rate. The vertical line denotes the optimal threshold $\theta^*$ obtained by our policy.}
    \label{fig:tradeoff} 
\end{figure}

\subsection{Discriminative Tasks}
As described in Section~\ref{subsec:multi_class_clf}, we utilize the multi-threshold policy in zero-shot classification settings where a separate threshold is optimized for each category. For confidence estimation, we utilize the maximum softmax probability \citep{chow1970, committees} provided by the model. To simulate classification, we constrain the LLM to output only a single token corresponding to the predicted class label. Table~\ref{tab:multi-thres_results} shows the results of the experiments on the test set of each classification dataset for the \textsc{gemma3-1b} (as sLLM) and \textsc{gemma3-4b} (as mLLM) models pair in the non-oracle setting (results with additional model pairs can be found in Appendix \ref{app:additional_results}). 
Across all datasets, our policy consistently maintains near–`optimal' performance (i.e., comparable to deferring all samples to the mLLM) while reducing inference cost {significantly}. On \textsc{sst-2}, it matches full-deferral accuracy with $\sim$82\% cost savings, and on \textsc{agnews} and \textsc{emotion}, it achieves full-deferral performance with $\sim$16\% and $\sim$68\% savings, respectively. Even on the more challenging \textsc{fakenews} dataset, 
the performance {degrades only $\sim$2\%} from full-deferral while cutting cost by $\sim$45\%.

\begin{table}[h!]
  \caption{Performance comparison of our policy with other policies
  on classification tasks using the \textsc{gemma3-1b} (sLLM) and \textsc{gemma3-4b} (mLLM) models pair.}
  \label{tab:multi-thres_results}
  \centering
    \begin{small}
      \begin{adjustbox}{max width=\columnwidth}
        \begin{tabular}{cccc}
          \toprule
          \bf Dataset & \bf Policy & \bf Performance ($\bf Accuracy$ / $\bf F_1$) & \bf Cost Saved ($\bf \%$) \\
          \midrule 
           \multirow{4}{*}{{\sc sst-2}} & None Deferred & 0.84 / 0.84 & 98.34$\%$ \\
           &  All Deferred & 0.90 / 0.90 & 0$\%$\\
           &  Random & 0.87 / 0.87 & 47.53$\%$\\
           &  FrugalGPT & 0.86 / 0.86 & 87.16$\%$\\
           &  HybridLLM & 0.86 / 0.86 & 61.66$\%$\\
           &  Ours & 0.90 / 0.90 & 82.09$\%$\\
           \midrule
           \multirow{4}{*}{{\sc agnews}} & None Deferred & 0.56 / 0.49 & 98.16$\%$ \\
           &  All Deferred & 0.78 / 0.74 & 0$\%$\\
           &  Random & 0.68 / 0.63 & 47.35$\%$\\
           &  FrugalGPT & 0.70 / 0.70 & 12.72$\%$\\
           &  HybridLLM & 0.77 / 0.74 & 8.03$\%$\\
           &  Ours & 0.78 / 0.75 & 16.10$\%$\\
           \midrule
           \multirow{4}{*}{{\sc emotion}} & None Deferred & 0.46 / 0.36 & 98.85$\%$ \\
           &  All Deferred & 0.56 / 0.46 & 0$\%$\\
           &  Random & 0.52 / 0.43 & 48.0$\%$\\
           &  FrugalGPT & 0.49 / 0.42 & 65.16$\%$\\
           &  HybridLLM & 0.53 / 0.42 & 42.91$\%$\\
           &  Ours & 0.56 / 0.48 & 67.85$\%$\\
           \midrule
           \multirow{4}{*}{{\sc fakenews}} & None Deferred & 0.52 / 0.40 & 96.54$\%$ \\
           &  All Deferred & 0.90 / 0.90 & 0$\%$\\
           &  Random & 0.72 / 0.70 & 45.92$\%$\\
           &  FrugalGPT & 0.90 / 0.89 & 44.84$\%$\\
           &  HybridLLM & 0.85 / 0.84 & 55.29$\%$\\
           &  Ours & 0.88 / 0.87 & 45.53$\%$\\
          \bottomrule
        \end{tabular}
        \end{adjustbox}
    \end{small}
  \vskip -0.1in
\end{table}
\section{Conclusion} 
We introduced \emph{inference networks} of LLM experts, modeled as graphs, where nodes represent alternative models and edges capture conditional model activations. We formalized the design of optimal model activation as minimizing the expected inference cost subject to a target performance constraint and we characterized the structure of the optimal activation rule in a two-model chain, proving that it admits a \emph{threshold form} on a scalar confidence score: for discriminative settings the policy decomposes into class-specific thresholds, while for generative settings it reduces to a single threshold. Beyond establishing this structural result, we provided approaches to compute the thresholds through efficient low-dimensional search (and closed-form solutions in special cases), making the proposed policy directly implementable. To bridge theory and practice, we applied practical confidence estimation methods to both discriminative and generative outputs and evaluated the framework using open-source LLMs on text classification and generation benchmarks. The results demonstrated that the proposed policies reduce inference costs while meeting a specified performance budget, improving the cost–quality trade-off in realistic deployments. \\
Several directions remain open. Firstly, the sequential decision-making nature of the problem makes it possible to apply reinforcement learning (RL) methods for
learning the optimal threshold parameter(s).
Secondly, while we view inference networks as general directed graphs, this work fixes the topology to be serial; extending the analysis to richer topologies
and jointly optimizing routing and topology is a natural next step. Finally, improving confidence estimation for generative tasks can further reduce the discrepancy between predicted confidence and actual quality, enhancing both performance and reliability in LLM-based inference.
\bibliography{uai2026-template}

@article{friedman2000greedy,
  author = {Friedman, Jerome H.},
  journal = {Annals of Statistics},
  pages = {1189--1232},
  title = {Greedy {F}unction {A}pproximation: {A} {G}radient {B}oosting {M}achine},
  volume = 29,
  year = 2000
}

@inproceedings{tabi,
author = {Wang, Yiding and Chen, Kai and Tan, Haisheng and Guo, Kun},
title = {Tabi: {A}n {E}fficient {M}ulti-{L}evel {I}nference {S}ystem for {L}arge {L}anguage {M}odels},
year = {2023},
isbn = {9781450394871},
publisher = {Association for Computing Machinery},
address = {New York, NY, USA},
url = {https://doi.org/10.1145/3552326.3587438},
doi = {10.1145/3552326.3587438},
booktitle = {Proceedings of the Eighteenth European Conference on Computer Systems},
pages = {233–248},
numpages = {16},
location = {Rome, Italy},
series = {EuroSys '23}
}

@inproceedings{lin-2004-rouge,
    title = "{ROUGE}: A {P}ackage for {A}utomatic {E}valuation of {S}ummaries",
    author = "Lin, Chin-Yew",
    booktitle = "Text Summarization Branches Out",
    month = jul,
    year = "2004",
    address = "Barcelona, Spain",
    publisher = "Association for Computational Linguistics",
    url = "https://aclanthology.org/W04-1013/",
    pages = "74--81"
}

@inproceedings{DBLP:conf/nips/PaszkeGMLBCKLGA19,
  author       = {Adam Paszke and
                  Sam Gross and
                  Francisco Massa and
                  Adam Lerer and
                  James Bradbury and
                  Gregory Chanan and
                  Trevor Killeen and
                  Zeming Lin and
                  Natalia Gimelshein and
                  Luca Antiga and
                  Alban Desmaison and
                  Andreas K{\"{o}}pf and
                  Edward Z. Yang and
                  Zachary DeVito and
                  Martin Raison and
                  Alykhan Tejani and
                  Sasank Chilamkurthy and
                  Benoit Steiner and
                  Lu Fang and
                  Junjie Bai and
                  Soumith Chintala},
  editor       = {Hanna M. Wallach and
                  Hugo Larochelle and
                  Alina Beygelzimer and
                  Florence d'Alch{\'{e}}{-}Buc and
                  Emily B. Fox and
                  Roman Garnett},
  title        = {Py{T}orch: {A}n {I}mperative {S}tyle, {H}igh-{P}erformance {D}eep {L}earning {L}ibrary},
  booktitle    = {Advances in Neural Information Processing Systems 32: Annual Conference
                  on Neural Information Processing Systems 2019, NeurIPS 2019, December
                  8-14, 2019, Vancouver, BC, Canada},
  pages        = {8024--8035},
  year         = {2019},
  url          = {https://proceedings.neurips.cc/paper/2019/hash/bdbca288fee7f92f2bfa9f7012727740-Abstract.html},
  bibsource    = {dblp computer science bibliography, https://dblp.org}
}

@article{kde_est,
 ISSN = {00034851, 21688990},
 URL = {http://www.jstor.org/stable/2237390},
 author = {Murray Rosenblatt},
 journal = {The Annals of Mathematical Statistics},
 number = {3},
 pages = {832--837},
 publisher = {Institute of Mathematical Statistics},
 title = {Remarks on {S}ome {N}onparametric {E}stimates of a {D}ensity {F}unction},
 urldate = {2026-01-26},
 volume = {27},
 year = {1956}
}

@article{BLUM1973448,
title = {Time bounds for selection},
journal = {Journal of Computer and System Sciences},
volume = {7},
number = {4},
pages = {448-461},
year = {1973},
issn = {0022-0000},
doi = {https://doi.org/10.1016/S0022-0000(73)80033-9},
url = {https://www.sciencedirect.com/science/article/pii/S0022000073800339},
author = {Manuel Blum and Robert W. Floyd and Vaughan Pratt and Ronald L. Rivest and Robert E. Tarjan}
}

@inbook{Bierens_1994, place={Cambridge}, title={The {N}adaraya–{W}atson kernel regression function estimator}, booktitle={Topics in Advanced Econometrics: Estimation, Testing, and Specification of Cross-Section and Time Series Models}, publisher={Cambridge University Press}, author={Bierens, Herman J.}, year={1994}, pages={212–247}}

@article{frugal,
  author       = {Lingjiao Chen and
                  Matei Zaharia and
                  James Zou},
  title        = {Frugal{GPT}: {H}ow to {U}se {L}arge {L}anguage {M}odels {W}hile {R}educing {C}ost and
                  {I}mproving {P}erformance},
  journal      = {Trans. Mach. Learn. Res.},
  volume       = {2024},
  year         = {2024},
  url          = {https://openreview.net/forum?id=cSimKw5p6R},
  bibsource    = {dblp computer science bibliography, https://dblp.org}
}

@InProceedings{bojar-EtAl:2014:W14-33,
  author    = {Bojar, Ondrej  and  Buck, Christian  and  Federmann, Christian  and  Haddow, Barry  and  Koehn, Philipp  and  Leveling, Johannes  and  Monz, Christof  and  Pecina, Pavel  and  Post, Matt  and  Saint-Amand, Herve  and  Soricut, Radu  and  Specia, Lucia  and  Tamchyna, Ale
{s}},
  title     = {Findings of the 2014 {W}orkshop on {S}tatistical {M}achine {T}ranslation},
  booktitle = {Proceedings of the Ninth Workshop on Statistical Machine Translation},
  month     = {June},
  year      = {2014},
  address   = {Baltimore, Maryland, USA},
  publisher = {Association for Computational Linguistics},
  pages     = {12--58},
  url       = {http://www.aclweb.org/anthology/W/W14/W14-3302}
}

@inproceedings{committees,
  author       = {Xiaofang Wang and
                  Dan Kondratyuk and
                  Eric Christiansen and
                  Kris M. Kitani and
                  Yair Movshovitz{-}Attias and
                  Elad Eban},
  title        = {Wisdom of {C}ommittees: {A}n {O}verlooked {A}pproach {T}o {F}aster and {M}ore {A}ccurate {M}odels},
  booktitle    = {The Tenth International Conference on Learning Representations, {ICLR}
                  2022, Virtual Event, April 25-29, 2022},
  publisher    = {OpenReview.net},
  year         = {2022},
  url          = {https://openreview.net/forum?id=MvO2t0vbs4-},
  bibsource    = {dblp computer science bibliography, https://dblp.org}
}

@article{Anderson01121954,
author = {T. W. Anderson and D. A. Darling},
title = {A {T}est of {G}oodness of {F}it},
journal = {Journal of the American Statistical Association},
volume = {49},
number = {268},
pages = {765--769},
year = {1954},
publisher = {Taylor \& Francis},
doi = {10.1080/01621459.1954.10501232},


URL = { 
    
    
        https://www.tandfonline.com/doi/abs/10.1080/01621459.1954.10501232
    

},

}

@article{ks_test,
 ISSN = {01621459, 1537274X},
 URL = {http://www.jstor.org/stable/2280095},
 author = {Frank J. Massey},
 journal = {Journal of the American Statistical Association},
 number = {253},
 pages = {68--78},
 publisher = {[American Statistical Association, Taylor & Francis, Ltd.]},
 title = {The {K}olmogorov-{S}mirnov {T}est for {G}oodness of {F}it},
 urldate = {2026-01-26},
 volume = {46},
 year = {1951}
}

@inproceedings{DBLP:conf/nips/JitkrittumGMNRK23,
  author       = {Wittawat Jitkrittum and
                  Neha Gupta and
                  Aditya Krishna Menon and
                  Harikrishna Narasimhan and
                  Ankit Singh Rawat and
                  Sanjiv Kumar},
  editor       = {Alice Oh and
                  Tristan Naumann and
                  Amir Globerson and
                  Kate Saenko and
                  Moritz Hardt and
                  Sergey Levine},
  title        = {When {D}oes {C}onfidence-{B}ased {C}ascade {D}eferral {S}uffice?},
  booktitle    = {Advances in Neural Information Processing Systems 36: Annual Conference
                  on Neural Information Processing Systems 2023, NeurIPS 2023, New Orleans,
                  LA, USA, December 10 - 16, 2023},
  year         = {2023},
  url          = {http://papers.nips.cc/paper\_files/paper/2023/hash/1f09e1ee5035a4c3fe38a5681cae5815-Abstract-Conference.html},
  bibsource    = {dblp computer science bibliography, https://dblp.org}
}

@inproceedings{lora,
  author       = {Edward J. Hu and
                  Yelong Shen and
                  Phillip Wallis and
                  Zeyuan Allen{-}Zhu and
                  Yuanzhi Li and
                  Shean Wang and
                  Lu Wang and
                  Weizhu Chen},
  title        = {Lo{RA}: {L}ow-{R}ank {A}daptation of {L}arge {L}anguage {M}odels},
  booktitle    = {The Tenth International Conference on Learning Representations, {ICLR}
                  2022, Virtual Event, April 25-29, 2022},
  year         = {2022},
  url          = {https://openreview.net/forum?id=nZeVKeeFYf9},
}

@misc{liu2026ministral3,
      title={Ministral 3}, 
      author={Alexander H. Liu and Kartik Khandelwal and Sandeep Subramanian and Victor Jouault and Abhinav Rastogi and Adrien Sadé and Alan Jeffares and Albert Jiang and Alexandre Cahill and Alexandre Gavaudan and Alexandre Sablayrolles and Amélie Héliou and Amos You and Andy Ehrenberg and Andy Lo and Anton Eliseev and Antonia Calvi and Avinash Sooriyarachchi and Baptiste Bout and Baptiste Rozière and Baudouin De Monicault and Clémence Lanfranchi and Corentin Barreau and Cyprien Courtot and Daniele Grattarola and Darius Dabert and Diego de las Casas and Elliot Chane-Sane and Faruk Ahmed and Gabrielle Berrada and Gaëtan Ecrepont and Gauthier Guinet and Georgii Novikov and Guillaume Kunsch and Guillaume Lample and Guillaume Martin and Gunshi Gupta and Jan Ludziejewski and Jason Rute and Joachim Studnia and Jonas Amar and Joséphine Delas and Josselin Somerville Roberts and Karmesh Yadav and Khyathi Chandu and Kush Jain and Laurence Aitchison and Laurent Fainsin and Léonard Blier and Lingxiao Zhao and Louis Martin and Lucile Saulnier and Luyu Gao and Maarten Buyl and Margaret Jennings and Marie Pellat and Mark Prins and Mathieu Poirée and Mathilde Guillaumin and Matthieu Dinot and Matthieu Futeral and Maxime Darrin and Maximilian Augustin and Mia Chiquier and Michel Schimpf and Nathan Grinsztajn and Neha Gupta and Nikhil Raghuraman and Olivier Bousquet and Olivier Duchenne and Patricia Wang and Patrick von Platen and Paul Jacob and Paul Wambergue and Paula Kurylowicz and Pavankumar Reddy Muddireddy and Philomène Chagniot and Pierre Stock and Pravesh Agrawal and Quentin Torroba and Romain Sauvestre and Roman Soletskyi and Rupert Menneer and Sagar Vaze and Samuel Barry and Sanchit Gandhi and Siddhant Waghjale and Siddharth Gandhi and Soham Ghosh and Srijan Mishra and Sumukh Aithal and Szymon Antoniak and Teven Le Scao and Théo Cachet and Theo Simon Sorg and Thibaut Lavril and Thiziri Nait Saada and Thomas Chabal and Thomas Foubert and Thomas Robert and Thomas Wang and Tim Lawson and Tom Bewley and Tom Bewley and Tom Edwards and Umar Jamil and Umberto Tomasini and Valeriia Nemychnikova and Van Phung and Vincent Maladière and Virgile Richard and Wassim Bouaziz and Wen-Ding Li and William Marshall and Xinghui Li and Xinyu Yang and Yassine El Ouahidi and Yihan Wang and Yunhao Tang and Zaccharie Ramzi},
      year={2026},
      eprint={2601.08584},
      archivePrefix={arXiv},
      primaryClass={cs.CL},
      url={https://arxiv.org/abs/2601.08584}, 
}

@inproceedings{DBLP:conf/emnlp/ReimersG19,
  author       = {Nils Reimers and
                  Iryna Gurevych},
  editor       = {Kentaro Inui and
                  Jing Jiang and
                  Vincent Ng and
                  Xiaojun Wan},
  title        = {Sentence-{BERT}: {S}entence {E}mbeddings using {S}iamese {B}ERT-{N}etworks},
  booktitle    = {Proceedings of the 2019 Conference on Empirical Methods in Natural
                  Language Processing and the 9th International Joint Conference on
                  Natural Language Processing, {EMNLP-IJCNLP} 2019, Hong Kong, China,
                  November 3-7, 2019},
  pages        = {3980--3990},
  publisher    = {Association for Computational Linguistics},
  year         = {2019},
  url          = {https://doi.org/10.18653/v1/D19-1410},
  doi          = {10.18653/V1/D19-1410},
  bibsource    = {dblp computer science bibliography, https://dblp.org}
}

@inproceedings{puzzle_nas,
  author       = {Akhiad Bercovich and
                  Tomer Ronen and
                  Talor Abramovich and
                  Nir Ailon and
                  Nave Assaf and
                  Mohammad Dabbah and
                  Ido Galil and
                  Amnon Geifman and
                  Yonatan Geifman and
                  Izhak Golan and
                  Netanel Haber and
                  Ehud Karpas and
                  Roi Koren and
                  Itay Levy and
                  Pavlo Molchanov and
                  Shahar Mor and
                  Zach Moshe and
                  Najeeb Nabwani and
                  Omri Puny and
                  Ran Rubin and
                  Itamar Schen and
                  Ido Shahaf and
                  Oren Tropp and
                  Omer Ullman Argov and
                  Ran Zilberstein and
                  Ran El{-}Yaniv},
  title        = {Puzzle: {D}istillation-{B}ased {NAS} for {I}nference-{O}ptimized {LLMs}},
  booktitle    = {Forty-second International Conference on Machine Learning, {ICML}
                  2025, Vancouver, BC, Canada, July 13-19, 2025},
  year         = {2025},
  url          = {https://openreview.net/forum?id=RY5MMBHRqo},
}

@inproceedings{decoding_leviathan,
  author       = {Yaniv Leviathan and
                  Matan Kalman and
                  Yossi Matias},
  editor       = {Andreas Krause and
                  Emma Brunskill and
                  Kyunghyun Cho and
                  Barbara Engelhardt and
                  Sivan Sabato and
                  Jonathan Scarlett},
  title        = {Fast {I}nference from {T}ransformers via {S}peculative {D}ecoding},
  booktitle    = {International Conference on Machine Learning, {ICML} 2023, 23-29 July
                  2023, Honolulu, Hawaii, {USA}},
  series       = {Proceedings of Machine Learning Research},
  volume       = {202},
  pages        = {19274--19286},
  publisher    = {{PMLR}},
  year         = {2023},
  url          = {https://proceedings.mlr.press/v202/leviathan23a.html},
  bibsource    = {dblp computer science bibliography, https://dblp.org}
}

@inproceedings{llm-blender,
    title = "{LLM}-{B}lender: {E}nsembling {L}arge {L}anguage {M}odels with {P}airwise {R}anking and {G}enerative Fusion",
    author = "Jiang, Dongfu  and
      Ren, Xiang  and
      Lin, Bill Yuchen",
    editor = "Rogers, Anna  and
      Boyd-Graber, Jordan  and
      Okazaki, Naoaki",
    booktitle = "Proceedings of the 61st Annual Meeting of the Association for Computational Linguistics (Volume 1: Long Papers)",
    month = jul,
    year = "2023",
    address = "Toronto, Canada",
    publisher = "Association for Computational Linguistics",
    url = "https://aclanthology.org/2023.acl-long.792/",
    doi = "10.18653/v1/2023.acl-long.792",
    pages = "14165--14178",
}

@inproceedings{arora,
  author       = {Simran Arora and
                  Avanika Narayan and
                  Mayee F. Chen and
                  Laurel J. Orr and
                  Neel Guha and
                  Kush Bhatia and
                  Ines Chami and
                  Christopher R{\'{e}}},
  title        = {Ask {M}e {A}nything: {A} simple strategy for prompting language models},
  booktitle    = {The Eleventh International Conference on Learning Representations,
                  {ICLR} 2023, Kigali, Rwanda, May 1-5, 2023},
  publisher    = {OpenReview.net},
  year         = {2023},
  url          = {https://openreview.net/forum?id=bhUPJnS2g0X},
  bibsource    = {dblp computer science bibliography, https://dblp.org}
}

@inproceedings{gupta_uncertainty,
  author       = {Neha Gupta and
                  Harikrishna Narasimhan and
                  Wittawat Jitkrittum and
                  Ankit Singh Rawat and
                  Aditya Krishna Menon and
                  Sanjiv Kumar},
  title        = {Language {M}odel {C}ascades: {T}oken-{L}evel {U}ncertainty {A}nd {B}eyond},
  booktitle    = {The Twelfth International Conference on Learning Representations,
                  {ICLR} 2024, Vienna, Austria, May 7-11, 2024},
  publisher    = {OpenReview.net},
  year         = {2024},
  url          = {https://openreview.net/forum?id=KgaBScZ4VI},
  bibsource    = {dblp computer science bibliography, https://dblp.org}
}

@article{DBLP:journals/tmlr/FerrerR25,
  author       = {Luciana Ferrer and
                  Daniel Ramos},
  title        = {Evaluating {P}osterior {P}robabilities: {D}ecision {T}heory, {P}roper {S}coring
                  {R}ules, and {C}alibration},
  journal      = {Trans. Mach. Learn. Res.},
  volume       = {2025},
  year         = {2025},
  url          = {https://openreview.net/forum?id=qbrE0LR7fF},
  bibsource    = {dblp computer science bibliography, https://dblp.org}
}

@inproceedings{toolformer,
  author       = {Timo Schick and
                  Jane Dwivedi{-}Yu and
                  Roberto Dess{\`{\i}} and
                  Roberta Raileanu and
                  Maria Lomeli and
                  Eric Hambro and
                  Luke Zettlemoyer and
                  Nicola Cancedda and
                  Thomas Scialom},
  title        = {Toolformer: {L}anguage {M}odels {C}an {T}each {T}hemselves to {U}se {T}ools},
  booktitle    = {Advances in Neural Information Processing Systems 36: Annual Conference
                  on Neural Information Processing Systems 2023, NeurIPS 2023, New Orleans,
                  LA, USA, December 10 - 16, 2023},
  year         = {2023},
  url          = {http://papers.nips.cc/paper\_files/paper/2023/hash/d842425e4bf79ba039352da0f658a906-Abstract-Conference.html},
  bibsource    = {dblp computer science bibliography, https://dblp.org}
}

@article{mistral,
  author       = {Albert Q. Jiang and
                  Alexandre Sablayrolles and
                  Arthur Mensch and
                  Chris Bamford and
                  Devendra Singh Chaplot and
                  Diego de Las Casas and
                  Florian Bressand and
                  Gianna Lengyel and
                  Guillaume Lample and
                  Lucile Saulnier and
                  L{\'{e}}lio Renard Lavaud and
                  Marie{-}Anne Lachaux and
                  Pierre Stock and
                  Teven Le Scao and
                  Thibaut Lavril and
                  Thomas Wang and
                  Timoth{\'{e}}e Lacroix and
                  William El Sayed},
  title        = {Mistral {7B}},
  journal      = {CoRR},
  volume       = {abs/2310.06825},
  year         = {2023},
  url          = {https://doi.org/10.48550/arXiv.2310.06825},
  doi          = {10.48550/ARXIV.2310.06825},
  eprinttype    = {arXiv},
  eprint       = {2310.06825},
  bibsource    = {dblp computer science bibliography, https://dblp.org}
}

@article{qwen3,
  author       = {An Yang and
                  Anfeng Li and
                  Baosong Yang and
                  Beichen Zhang and
                  Binyuan Hui and
                  Bo Zheng and
                  Bowen Yu and
                  Chang Gao and
                  Chengen Huang and
                  Chenxu Lv and
                  Chujie Zheng and
                  Dayiheng Liu and
                  Fan Zhou and
                  Fei Huang and
                  Feng Hu and
                  Hao Ge and
                  Haoran Wei and
                  Huan Lin and
                  Jialong Tang and
                  Jian Yang and
                  Jianhong Tu and
                  Jianwei Zhang and
                  Jian Yang and
                  Jiaxi Yang and
                  Jingren Zhou and
                  Junyang Lin and
                  Kai Dang and
                  Keqin Bao and
                  Kexin Yang and
                  Le Yu and
                  Lianghao Deng and
                  Mei Li and
                  Mingfeng Xue and
                  Mingze Li and
                  Pei Zhang and
                  Peng Wang and
                  Qin Zhu and
                  Rui Men and
                  Ruize Gao and
                  Shixuan Liu and
                  Shuang Luo and
                  Tianhao Li and
                  Tianyi Tang and
                  Wenbiao Yin and
                  Xingzhang Ren and
                  Xinyu Wang and
                  Xinyu Zhang and
                  Xuancheng Ren and
                  Yang Fan and
                  Yang Su and
                  Yichang Zhang and
                  Yinger Zhang and
                  Yu Wan and
                  Yuqiong Liu and
                  Zekun Wang and
                  Zeyu Cui and
                  Zhenru Zhang and
                  Zhipeng Zhou and
                  Zihan Qiu},
  title        = {Qwen3 {T}echnical {R}eport},
  journal      = {CoRR},
  volume       = {abs/2505.09388},
  year         = {2025},
  url          = {https://doi.org/10.48550/arXiv.2505.09388},
  doi          = {10.48550/ARXIV.2505.09388},
  eprinttype    = {arXiv},
  eprint       = {2505.09388},
  bibsource    = {dblp computer science bibliography, https://dblp.org}
}

@article{deepseek,
  author       = {DeepSeek{-}AI},
  title        = {DeepSeek-{R1}: {I}ncentivizing {R}easoning {C}apability in {LLM}s via {R}einforcement {L}earning},
  journal      = {CoRR},
  volume       = {abs/2501.12948},
  year         = {2025},
  url          = {https://doi.org/10.48550/arXiv.2501.12948},
  doi          = {10.48550/ARXIV.2501.12948},
  eprinttype    = {arXiv},
  eprint       = {2501.12948},
  bibsource    = {dblp computer science bibliography, https://dblp.org}
}

@article{gpt4,
  author       = {OpenAI},
  title        = {{GPT-4} {T}echnical {R}eport},
  journal      = {CoRR},
  volume       = {abs/2303.08774},
  year         = {2023},
  url          = {https://doi.org/10.48550/arXiv.2303.08774},
  doi          = {10.48550/ARXIV.2303.08774},
  eprinttype    = {arXiv},
  eprint       = {2303.08774},
  bibsource    = {dblp computer science bibliography, https://dblp.org}
}

@inproceedings{ficler-goldberg-2017-controlling,
    title = "Controlling {L}inguistic {S}tyle {A}spects in {N}eural {L}anguage {G}eneration",
    author = "Ficler, Jessica  and
      Goldberg, Yoav",
    editor = "Brooke, Julian  and
      Solorio, Thamar  and
      Koppel, Moshe",
    booktitle = "Proceedings of the Workshop on Stylistic Variation",
    month = sep,
    year = "2017",
    address = "Copenhagen, Denmark",
    publisher = "Association for Computational Linguistics",
    url = "https://aclanthology.org/W17-4912/",
    doi = "10.18653/v1/W17-4912",
    pages = "94--104",
}

@inproceedings{fan-etal-2018-hierarchical,
    title = "Hierarchical {N}eural {S}tory {G}eneration",
    author = "Fan, Angela  and
      Lewis, Mike  and
      Dauphin, Yann",
    editor = "Gurevych, Iryna  and
      Miyao, Yusuke",
    booktitle = "Proceedings of the 56th Annual Meeting of the Association for Computational Linguistics (Volume 1: Long Papers)",
    month = jul,
    year = "2018",
    address = "Melbourne, Australia",
    publisher = "Association for Computational Linguistics",
    url = "https://aclanthology.org/P18-1082/",
    doi = "10.18653/v1/P18-1082",
    pages = "889--898",
}

@inproceedings{socher-etal-2013-recursive,
    title = "Recursive {D}eep {M}odels for {S}emantic {C}ompositionality {O}ver a {S}entiment {T}reebank",
    author = "Socher, Richard  and
      Perelygin, Alex  and
      Wu, Jean  and
      Chuang, Jason  and
      Manning, Christopher D.  and
      Ng, Andrew  and
      Potts, Christopher",
    booktitle = "Proceedings of the 2013 Conference on Empirical Methods in Natural Language Processing",
    month = oct,
    year = "2013",
    address = "Seattle, Washington, USA",
    publisher = "Association for Computational Linguistics",
    url = "https://www.aclweb.org/anthology/D13-1170",
    pages = "1631--1642",
}

@inproceedings{agnews,
author = {Zhang, Xiang and Zhao, Junbo and LeCun, Yann},
title = {Character-level {C}onvolutional {N}etworks for {T}ext
{C}lassification},
year = {2015},
publisher = {MIT Press},
address = {Cambridge, MA, USA},
booktitle = {Proceedings of the 29th International Conference on Neural Information Processing Systems - Volume 1},
pages = {649–657},
numpages = {9},
location = {Montreal, Canada},
series = {NIPS'15}
}

@misc{fakenews,
  author = {Pulkit Chauhan},
  title = {Fake{N}ews {D}ataset},
  howpublished = {\url{https://huggingface.co/Pulk17/Fake-News-Detection}},
  note = {Accessed: 2026-01-26},
  year={2025}
}

@inproceedings{DBLP:conf/iclr/HoltzmanBDFC20,
  author       = {Ari Holtzman and
                  Jan Buys and
                  Li Du and
                  Maxwell Forbes and
                  Yejin Choi},
  title        = {The {C}urious {C}ase of {N}eural {T}ext {D}egeneration},
  booktitle    = {8th International Conference on Learning Representations, {ICLR} 2020,
                  Addis Ababa, Ethiopia, April 26-30, 2020},
  publisher    = {OpenReview.net},
  year         = {2020},
  url          = {https://openreview.net/forum?id=rygGQyrFvH},
  bibsource    = {dblp computer science bibliography, https://dblp.org}
}

@article{llama3,
  author       = {Llama Team},
  title        = {The {L}lama 3 {H}erd of {M}odels},
  journal      = {CoRR},
  volume       = {abs/2407.21783},
  year         = {2024},
  url          = {https://doi.org/10.48550/arXiv.2407.21783},
  doi          = {10.48550/ARXIV.2407.21783},
  eprinttype    = {arXiv},
  eprint       = {2407.21783},
  bibsource    = {dblp computer science bibliography, https://dblp.org}
}

@inproceedings{saravia-etal-2018-carer,
    title = "{CARER}: {C}ontextualized {Af}fect {R}epresentations for {E}motion {R}ecognition",
    author = "Saravia, Elvis  and
      Liu, Hsien-Chi Toby  and
      Huang, Yen-Hao  and
      Wu, Junlin  and
      Chen, Yi-Shin",
    booktitle = "Proceedings of the 2018 Conference on Empirical Methods in Natural Language Processing",
    month = oct # "-" # nov,
    year = "2018",
    address = "Brussels, Belgium",
    publisher = "Association for Computational Linguistics",
    url = "https://www.aclweb.org/anthology/D18-1404",
    doi = "10.18653/v1/D18-1404",
    pages = "3687--3697",
}

@inproceedings{rajpurkar-etal-2016-squad,
    title = "{SQ}u{AD}: 100,000+ {Q}uestions for {M}achine {C}omprehension of {T}ext",
    author = "Rajpurkar, Pranav  and
      Zhang, Jian  and
      Lopyrev, Konstantin  and
      Liang, Percy",
    editor = "Su, Jian  and
      Duh, Kevin  and
      Carreras, Xavier",
    booktitle = "Proceedings of the 2016 Conference on Empirical Methods in Natural Language Processing",
    month = nov,
    year = "2016",
    address = "Austin, Texas",
    publisher = "Association for Computational Linguistics",
    url = "https://aclanthology.org/D16-1264",
    doi = "10.18653/v1/D16-1264",
    pages = "2383--2392",
    eprint={1606.05250},
    archivePrefix={arXiv},
    primaryClass={cs.CL},
}

@article{gemma3,
  author       = {Gemma Team},
  title        = {Gemma 3 {T}echnical {R}eport},
  journal      = {CoRR},
  volume       = {abs/2503.19786},
  year         = {2025},
  url          = {https://doi.org/10.48550/arXiv.2503.19786},
  doi          = {10.48550/ARXIV.2503.19786},
  eprinttype    = {arXiv},
  eprint       = {2503.19786},
  bibsource    = {dblp computer science bibliography, https://dblp.org}
}

@inproceedings{DBLP:conf/wsdm/SakotaP024,
  author       = {Marija Sakota and
                  Maxime Peyrard and
                  Robert West},
  editor       = {Luz Angelica Caudillo{-}Mata and
                  Silvio Lattanzi and
                  Andr{\'{e}}s Mu{\~{n}}oz Medina and
                  Leman Akoglu and
                  Aristides Gionis and
                  Sergei Vassilvitskii},
  title        = {Fly-{S}wat or {C}annon? {C}ost-{E}ffective {L}anguage {M}odel {C}hoice via {M}eta-{M}odeling},
  booktitle    = {Proceedings of the 17th {ACM} International Conference on Web Search
                  and Data Mining, {WSDM} 2024, Merida, Mexico, March 4-8, 2024},
  pages        = {606--615},
  publisher    = {{ACM}},
  year         = {2024},
  url          = {https://doi.org/10.1145/3616855.3635825},
  doi          = {10.1145/3616855.3635825},
  bibsource    = {dblp computer science bibliography, https://dblp.org}
}

@inproceedings{DBLP:conf/icml/ChenZ022,
  author       = {Lingjiao Chen and
                  Matei Zaharia and
                  James Zou},
  editor       = {Kamalika Chaudhuri and
                  Stefanie Jegelka and
                  Le Song and
                  Csaba Szepesv{\'{a}}ri and
                  Gang Niu and
                  Sivan Sabato},
  title        = {Efficient {O}nline {ML} {API} {S}election for {M}ulti-{L}abel {C}lassification
                  Tasks},
  booktitle    = {International Conference on Machine Learning, {ICML} 2022, 17-23 July
                  2022, Baltimore, Maryland, {USA}},
  series       = {Proceedings of Machine Learning Research},
  volume       = {162},
  pages        = {3716--3746},
  publisher    = {{PMLR}},
  year         = {2022},
  url          = {https://proceedings.mlr.press/v162/chen22ad.html},
  bibsource    = {dblp computer science bibliography, https://dblp.org}
}

@article{Shnitzer,
  author       = {Tal Shnitzer and
                  Anthony Ou and
                  M{\'{\i}}rian Silva and
                  Kate Soule and
                  Yuekai Sun and
                  Justin Solomon and
                  Neil Thompson and
                  Mikhail Yurochkin},
  title        = {Large {L}anguage {M}odel {R}outing with {B}enchmark {D}atasets},
  journal      = {CoRR},
  volume       = {abs/2309.15789},
  year         = {2023},
  url          = {https://doi.org/10.48550/arXiv.2309.15789},
  doi          = {10.48550/ARXIV.2309.15789},
  eprinttype    = {arXiv},
  eprint       = {2309.15789},
  bibsource    = {dblp computer science bibliography, https://dblp.org}
}

@inproceedings{bert,
  author       = {Jacob Devlin and
                  Ming{-}Wei Chang and
                  Kenton Lee and
                  Kristina Toutanova},
  editor       = {Jill Burstein and
                  Christy Doran and
                  Thamar Solorio},
  title        = {{BERT:} {P}re-training of {D}eep {B}idirectional {T}ransformers for {L}anguage
                  {U}nderstanding},
  booktitle    = {Proceedings of the 2019 Conference of the North American Chapter of
                  the Association for Computational Linguistics: Human Language Technologies,
                  {NAACL-HLT} 2019, Minneapolis, MN, USA, June 2-7, 2019, Volume 1 (Long
                  and Short Papers)},
  pages        = {4171--4186},
  publisher    = {Association for Computational Linguistics},
  year         = {2019},
  url          = {https://doi.org/10.18653/v1/n19-1423},
  doi          = {10.18653/V1/N19-1423},
  bibsource    = {dblp computer science bibliography, https://dblp.org}
}

@inproceedings{frugal_ML,
  author       = {Lingjiao Chen and
                  Matei Zaharia and
                  James Y. Zou},
  editor       = {Hugo Larochelle and
                  Marc'Aurelio Ranzato and
                  Raia Hadsell and
                  Maria{-}Florina Balcan and
                  Hsuan{-}Tien Lin},
  title        = {Frugal{ML}: {H}ow to use {ML} {P}rediction {API}s more accurately and cheaply},
  booktitle    = {Advances in Neural Information Processing Systems 33: Annual Conference
                  on Neural Information Processing Systems 2020, NeurIPS 2020, December
                  6-12, 2020, virtual},
  year         = {2020},
  url          = {https://proceedings.neurips.cc/paper/2020/hash/789ba2ae4d335e8a2ad283a3f7effced-Abstract.html},
  bibsource    = {dblp computer science bibliography, https://dblp.org}
}

@inproceedings{edge_query,
  author       = {Anil Kag and
                  Igor Fedorov and
                  Aditya Gangrade and
                  Paul N. Whatmough and
                  Venkatesh Saligrama},
  title        = {Efficient {E}dge {I}nference by {S}elective {Q}uery},
  booktitle    = {The Eleventh International Conference on Learning Representations,
                  {ICLR} 2023, Kigali, Rwanda, May 1-5, 2023},
  publisher    = {OpenReview.net},
  year         = {2023},
  url          = {https://openreview.net/forum?id=jpR98ZdIm2q},
  bibsource    = {dblp computer science bibliography, https://dblp.org}
}

@inproceedings{huang-etal-2025-routereval,
    title = "{R}outer{E}val: A {C}omprehensive {B}enchmark for Routing {LLM}s to {E}xplore {M}odel-level {S}caling {U}p in {LLM}s",
    author = "Huang, Zhongzhan  and
      Ling, Guoming  and
      Lin, Yupei  and
      Chen, Yandong  and
      Zhong, Shanshan  and
      Wu, Hefeng  and
      Lin, Liang",
    editor = "Christodoulopoulos, Christos  and
      Chakraborty, Tanmoy  and
      Rose, Carolyn  and
      Peng, Violet",
    booktitle = "Findings of the Association for Computational Linguistics: EMNLP 2025",
    month = nov,
    year = "2025",
    address = "Suzhou, China",
    publisher = "Association for Computational Linguistics",
    url = "https://aclanthology.org/2025.findings-emnlp.208/",
    doi = "10.18653/v1/2025.findings-emnlp.208",
    pages = "3860--3887",
    ISBN = "979-8-89176-335-7"
}

@inproceedings{efficient_classif,
  author       = {Gao Huang and
                  Danlu Chen and
                  Tianhong Li and
                  Felix Wu and
                  Laurens van der Maaten and
                  Kilian Q. Weinberger},
  title        = {Multi-{S}cale {D}ense {N}etworks for {R}esource {E}fficient {I}mage {C}lassification},
  booktitle    = {6th International Conference on Learning Representations, {ICLR} 2018,
                  Vancouver, BC, Canada, April 30 - May 3, 2018, Conference Track Proceedings},
  publisher    = {OpenReview.net},
  year         = {2018},
  url          = {https://openreview.net/forum?id=Hk2aImxAb},
  bibsource    = {dblp computer science bibliography, https://dblp.org}
}

@inproceedings{zooter,
  author       = {Keming Lu and
                  Hongyi Yuan and
                  Runji Lin and
                  Junyang Lin and
                  Zheng Yuan and
                  Chang Zhou and
                  Jingren Zhou},
  editor       = {Kevin Duh and
                  Helena G{\'{o}}mez{-}Adorno and
                  Steven Bethard},
  title        = {Routing to the {E}xpert: {E}fficient {R}eward-guided {E}nsemble of {L}arge {L}anguage {M}odels},
  booktitle    = {Proceedings of the 2024 Conference of the North American Chapter of
                  the Association for Computational Linguistics: Human Language Technologies
                  (Volume 1: Long Papers), {NAACL} 2024, Mexico City, Mexico, June 16-21,
                  2024},
  pages        = {1964--1974},
  publisher    = {Association for Computational Linguistics},
  year         = {2024},
  url          = {https://doi.org/10.18653/v1/2024.naacl-long.109},
  doi          = {10.18653/V1/2024.NAACL-LONG.109},
  bibsource    = {dblp computer science bibliography, https://dblp.org}
}

@ARTICLE{chow1970,
  author={Chow, C.},
  journal={IEEE Transactions on Information Theory}, 
  title={On optimum recognition error and reject tradeoff}, 
  year={1970},
  volume={16},
  number={1},
  pages={41-46},
  doi={10.1109/TIT.1970.1054406}}

@article{astrom,
  author       = {{Åström, Karl Johan}},
  issn         = {{0022-247X}},
  language     = {{eng}},
  pages        = {{174--205}},
  publisher    = {{Academic Press}},
  series       = {{Journal of Mathematical Analysis and Applications}},
  title        = {Optimal {C}ontrol of {M}arkov {P}rocesses with {I}ncomplete {S}tate {I}nformation {I}},
  url          = {{https://lup.lub.lu.se/search/files/5323668/8867085.pdf}},
  doi          = {{10.1016/0022-247X(65)90154-X}},
  volume       = {{10}},
  year         = {{1965}},
}

@ARTICLE{9560049,
  author={Han, Yizeng and Huang, Gao and Song, Shiji and Yang, Le and Wang, Honghui and Wang, Yulin},
  journal={IEEE Transactions on Pattern Analysis and Machine Intelligence}, 
  title={Dynamic {N}eural {N}etworks: {A} {S}urvey}, 
  year={2022},
  volume={44},
  number={11},
  pages={7436-7456},
  doi={10.1109/TPAMI.2021.3117837}}

@article{Shrivastava,
  author       = {Vaishnavi Shrivastava and
                  Percy Liang and
                  Ananya Kumar},
  title        = {{L}lamas {K}now {W}hat {GPT}s {D}on't {S}how: {S}urrogate {M}odels for {C}onfidence
                  {E}stimation},
  journal      = {CoRR},
  volume       = {abs/2311.08877},
  year         = {2023},
  url          = {https://doi.org/10.48550/arXiv.2311.08877},
  doi          = {10.48550/ARXIV.2311.08877},
  eprinttype    = {arXiv},
  eprint       = {2311.08877},
}

@inproceedings{hybridlllm,
  author       = {Dujian Ding and
                  Ankur Mallick and
                  Chi Wang and
                  Robert Sim and
                  Subhabrata Mukherjee and
                  Victor R{\"{u}}hle and
                  Laks V. S. Lakshmanan and
                  Ahmed Hassan Awadallah},
  title        = {Hybrid {LLM:} {C}ost-{E}fficient and {Q}uality-{A}ware {Q}uery {R}outing},
  booktitle    = {The Twelfth International Conference on Learning Representations,
                  {ICLR} 2024, Vienna, Austria, May 7-11, 2024},
  publisher    = {OpenReview.net},
  year         = {2024},
  url          = {https://openreview.net/forum?id=02f3mUtqnM},
  bibsource    = {dblp computer science bibliography, https://dblp.org}
}

@inproceedings{dpo,
  author       = {Rafael Rafailov and
                  Archit Sharma and
                  Eric Mitchell and
                  Christopher D. Manning and
                  Stefano Ermon and
                  Chelsea Finn},
  editor       = {Alice Oh and
                  Tristan Naumann and
                  Amir Globerson and
                  Kate Saenko and
                  Moritz Hardt and
                  Sergey Levine},
  title        = {Direct {P}reference {O}ptimization: {Y}our {L}anguage {M}odel is {S}ecretly a {R}eward {M}odel},
  booktitle    = {Advances in Neural Information Processing Systems 36: Annual Conference
                  on Neural Information Processing Systems 2023, NeurIPS 2023, New Orleans,
                  LA, USA, December 10 - 16, 2023},
  year         = {2023},
  url          = {http://papers.nips.cc/paper\_files/paper/2023/hash/a85b405ed65c6477a4fe8302b5e06ce7-Abstract-Conference.html},
  bibsource    = {dblp computer science bibliography, https://dblp.org}
}

@inproceedings{bartscore,
  author       = {Weizhe Yuan and
                  Graham Neubig and
                  Pengfei Liu},
  editor       = {Marc'Aurelio Ranzato and
                  Alina Beygelzimer and
                  Yann N. Dauphin and
                  Percy Liang and
                  Jennifer Wortman Vaughan},
  title        = {{BARTScore}: {Evaluating} {Generated} {Text} as {Text} {Generation}},
  booktitle    = {Advances in Neural Information Processing Systems 34: Annual Conference
                  on Neural Information Processing Systems 2021, NeurIPS 2021, December
                  6-14, 2021, virtual},
  pages        = {27263--27277},
  year         = {2021},
  url          = {https://proceedings.neurips.cc/paper/2021/hash/e4d2b6e6fdeca3e60e0f1a62fee3d9dd-Abstract.html},
  bibsource    = {dblp computer science bibliography, https://dblp.org}
}

@inproceedings{automix,
author = {Aggarwal, Pranjal and Madaan, Aman and Anand, Ankit and Potharaju, Srividya Pranavi and Mishra, Swaroop and Zhou, Pei and Gupta, Aditya and Rajagopal, Dheeraj and Kappaganthu, Karthik and Yang, Yiming and Upadhyay, Shyam and Faruqui, Manaal and Mausam},
title = {Auto{M}ix: {A}utomatically {M}ixing {L}anguage {M}odels},
year = {2024},
isbn = {9798331314385},
publisher = {Curran Associates Inc.},
address = {Red Hook, NY, USA},
articleno = {4164},
numpages = {35},
location = {Vancouver, BC, Canada},
series = {NIPS '24},
booktitle = {Proceedings of the 38th International Conference on Neural Information Processing Systems},
articleno = {4164},
}

@article{Kadavath,
  author       = {Saurav Kadavath and
                  Tom Conerly and
                  Amanda Askell and
                  Tom Henighan and
                  Dawn Drain and
                  Ethan Perez and
                  Nicholas Schiefer and
                  Zac Hatfield{-}Dodds and
                  Nova DasSarma and
                  Eli Tran{-}Johnson and
                  Scott Johnston and
                  Sheer El Showk and
                  Andy Jones and
                  Nelson Elhage and
                  Tristan Hume and
                  Anna Chen and
                  Yuntao Bai and
                  Sam Bowman and
                  Stanislav Fort and
                  Deep Ganguli and
                  Danny Hernandez and
                  Josh Jacobson and
                  Jackson Kernion and
                  Shauna Kravec and
                  Liane Lovitt and
                  Kamal Ndousse and
                  Catherine Olsson and
                  Sam Ringer and
                  Dario Amodei and
                  Tom Brown and
                  Jack Clark and
                  Nicholas Joseph and
                  Ben Mann and
                  Sam McCandlish and
                  Chris Olah and
                  Jared Kaplan},
  title        = {Language {M}odels ({M}ostly) {K}now {W}hat {T}hey {K}now},
  journal      = {CoRR},
  volume       = {abs/2207.05221},
  year         = {2022},
  url          = {https://doi.org/10.48550/arXiv.2207.05221},
  eprinttype    = {arXiv},
  eprint       = {2207.05221},
}

@inproceedings{DBLP:conf/iclr/KuhnGF23,
  author       = {Lorenz Kuhn and
                  Yarin Gal and
                  Sebastian Farquhar},
  title        = {Semantic {U}ncertainty: {L}inguistic {I}nvariances for {U}ncertainty {E}stimation in {N}atural {L}anguage {G}eneration},
  booktitle    = {The Eleventh International Conference on Learning Representations,
                  {ICLR} 2023, Kigali, Rwanda, May 1-5, 2023},
  publisher    = {OpenReview.net},
  year         = {2023},
  url          = {https://openreview.net/forum?id=VD-AYtP0dve},
  bibsource    = {dblp computer science bibliography, https://dblp.org}
}

@inproceedings{lm-polygraph,
    title = "{LM}-{P}olygraph: {U}ncertainty {E}stimation for {L}anguage {M}odels",
    author = "Fadeeva, Ekaterina  and
      Vashurin, Roman  and
      Tsvigun, Akim  and
      Vazhentsev, Artem  and
      Petrakov, Sergey  and
      Fedyanin, Kirill  and
      Vasilev, Daniil  and
      Goncharova, Elizaveta  and
      Panchenko, Alexander  and
      Panov, Maxim  and
      Baldwin, Timothy  and
      Shelmanov, Artem",
    editor = "Feng, Yansong  and
      Lefever, Els",
    booktitle = "Proceedings of the 2023 Conference on Empirical Methods in Natural Language Processing: System Demonstrations",
    month = dec,
    year = "2023",
    address = "Singapore",
    publisher = "Association for Computational Linguistics",
    url = "https://aclanthology.org/2023.emnlp-demo.41/",
    doi = "10.18653/v1/2023.emnlp-demo.41",
    pages = "446--461"
}

@inproceedings{rlhf,
  author       = {Long Ouyang and
                  Jeffrey Wu and
                  Xu Jiang and
                  Diogo Almeida and
                  Carroll L. Wainwright and
                  Pamela Mishkin and
                  Chong Zhang and
                  Sandhini Agarwal and
                  Katarina Slama and
                  Alex Ray and
                  John Schulman and
                  Jacob Hilton and
                  Fraser Kelton and
                  Luke Miller and
                  Maddie Simens and
                  Amanda Askell and
                  Peter Welinder and
                  Paul F. Christiano and
                  Jan Leike and
                  Ryan Lowe},
  editor       = {Sanmi Koyejo and
                  S. Mohamed and
                  A. Agarwal and
                  Danielle Belgrave and
                  K. Cho and
                  A. Oh},
  title        = {Training language models to follow instructions with human feedback},
  booktitle    = {Advances in Neural Information Processing Systems 35: Annual Conference
                  on Neural Information Processing Systems 2022, NeurIPS 2022, New Orleans,
                  LA, USA, November 28 - December 9, 2022},
  year         = {2022},
  url          = {http://papers.nips.cc/paper\_files/paper/2022/hash/b1efde53be364a73914f58805a001731-Abstract-Conference.html},
  bibsource    = {dblp computer science bibliography, https://dblp.org}
}

@article{DBLP:journals/cj/Brent71,
  author       = {Richard P. Brent},
  title        = {An lgorithm with Guaranteed Convergence for Finding a Zero of a Function},
  journal      = {Comput. J.},
  volume       = {14},
  number       = {4},
  pages        = {422--425},
  year         = {1971},
  url          = {https://doi.org/10.1093/comjnl/14.4.422},
  doi          = {10.1093/COMJNL/14.4.422},
  bibsource    = {dblp computer science bibliography, https://dblp.org}
}

@inproceedings{murong,
  author       = {Murong Yue and
                  Jie Zhao and
                  Min Zhang and
                  Liang Du and
                  Ziyu Yao},
  title        = {Large {L}anguage {M}odel {C}ascades with {M}ixture of {T}hought {R}epresentations
                  for Cost-Efficient Reasoning},
  booktitle    = {The Twelfth International Conference on Learning Representations,
                  {ICLR} 2024, Vienna, Austria, May 7-11, 2024},
  publisher    = {OpenReview.net},
  year         = {2024},
  url          = {https://openreview.net/forum?id=6okaSfANzh},
  bibsource    = {dblp computer science bibliography, https://dblp.org}
}

@inproceedings{gpt3,
  author       = {Tom B. Brown and
                  Benjamin Mann and
                  Nick Ryder and
                  Melanie Subbiah and
                  Jared Kaplan and
                  Prafulla Dhariwal and
                  Arvind Neelakantan and
                  Pranav Shyam and
                  Girish Sastry and
                  Amanda Askell and
                  Sandhini Agarwal and
                  Ariel Herbert{-}Voss and
                  Gretchen Krueger and
                  Tom Henighan and
                  Rewon Child and
                  Aditya Ramesh and
                  Daniel M. Ziegler and
                  Jeffrey Wu and
                  Clemens Winter and
                  Christopher Hesse and
                  Mark Chen and
                  Eric Sigler and
                  Mateusz Litwin and
                  Scott Gray and
                  Benjamin Chess and
                  Jack Clark and
                  Christopher Berner and
                  Sam McCandlish and
                  Alec Radford and
                  Ilya Sutskever and
                  Dario Amodei},
  editor       = {Hugo Larochelle and
                  Marc'Aurelio Ranzato and
                  Raia Hadsell and
                  Maria{-}Florina Balcan and
                  Hsuan{-}Tien Lin},
  title        = {Language {M}odels are {F}ew-{S}hot {L}earners},
  booktitle    = {Advances in Neural Information Processing Systems 33: Annual Conference
                  on Neural Information Processing Systems 2020, NeurIPS 2020, December
                  6-12, 2020, virtual},
  year         = {2020},
  url          = {https://proceedings.neurips.cc/paper/2020/hash/1457c0d6bfcb4967418bfb8ac142f64a-Abstract.html},
  bibsource    = {dblp computer science bibliography, https://dblp.org}
}

\newpage

\onecolumn

\title{Optimal Model Activation Policies for Inference Networks of Large Language Models\\(Supplementary Material)}
\maketitle

\appendix

\section{Additional Related Work}
\paragraph{\textbf{Model chaining.}} To address the high computational cost of large language models (LLMs), recent works have explored frameworks that adaptively allocate computational resources based on input difficulty. \citep{tabi} utilize a multi-level inference engine for \textit{discriminative models} (e.g., BERT \citep{bert} etc.), using confidence scores to decide whether to return predictions from an (initial) small model or escalate queries to larger ones. It also leverages attention-based token pruning to reduce latency by removing unnecessary works and weighted ensembling to maintain accuracy. FrugalGPT \citep{frugal} extends this idea to generative modeling by designing an algorithmic framework that selects among multiple LLMs based on a learned output quality judge. It invokes $K$-out-of-$N$ models in a sequence until a satisfactory answer quality threshold is reached, achieving cost savings without significantly compromising accuracy. In an orthogonal line of work, \citep{gupta_uncertainty} investigate measurements for developing deferral rules and
develop a mechanism based on quantiles of token probabilities 
Furthermore, AutoMix \citep{automix} presents a strategy using self-verification of the small model (framed as an entailment task) and a partially observable Markov decision process (POMDP)-based \citep{astrom} router to decide whether to escalate a query to a larger model. \\
Finally, apart from generative models, there are works that implement deferral rules for predictive ML tasks \citep{committees, DBLP:conf/nips/JitkrittumGMNRK23, frugal_ML, DBLP:conf/icml/ChenZ022, 9560049, efficient_classif}. For instance, \citep{frugal_ML} develops a system with an application in classification tasks. The nature of such tasks allow the estimation of the answer quality without querying the next model each time (e.g., via classification labels), something that is more challenging when it comes to generative tasks and models.
\paragraph{\textbf{Routing-based approaches}.} While model chaining frameworks focus on 
sequentially escalating queries through models of increasing capacity, model routing approaches aim to make an in-advance assignment of each query to the most suitable model in a one-shot decision \citep{huang-etal-2025-routereval}. Recent work on model routing explores various techniques to leverage performance diversity among LLMs. HybridLLM \citep{hybridlllm} proposes a trainable quality-aware router that dynamically adaptively routes queries to either a small or large model based on predicted difficulty. \citep{zooter} frames the routing problem as reward-guided model selection, distilling supervision from off-the-shelf reward models to learn which expert LLM best handles each query. 
Similarly, \citep{DBLP:conf/wsdm/SakotaP024} proposes a meta-modeling strategy that predicts both performance and cost of candidate LLMs to optimize query assignments under budget constraints, matching the accuracy of the strongest LLM while reducing costs. Finally, \citep{Shnitzer} introduce a benchmark-driven routing formulation, using binary classifiers trained on LLMs' benchmark results to predict which model is most likely to succeed on an incoming query, thus offering a low-cost method for effective model selection without repeated inference. Overall, these routing approaches enable efficient inference by invoking only a single model per query through informed, pre-inference model selection. There are are works that consider hybrid inference in classification settings. \citep{edge_query} propose a training framework where two models (a cheap one and a more expensive one) are trained from scratch along with the router model. This setting, albeit feasible for (most) widely used classifiers, is prohibitive for LLM use-cases due to the high cost of training and fine-tuning such big foundation models.

\section{Proofs}\label{app:proofs}

We provide extended versions of the main text's proofs.

\subsection{Single-Threshold Oracle case}\label{app:st-o-proofs}


\begin{proof}[Proof of Lemma~\ref{lemma:monotonicity}]
For $0\le \theta_1<\theta_2\le 1$,
\begin{equation}
C(\theta_2)-C(\theta_1)=c_m\!\int_{\theta_1}^{\theta_2}\! f(\beta)\,d\beta \;\ge\; 0,
\end{equation}
since $f\ge 0$. Thus, $C(\cdot)$ is non-decreasing.
Similarly,
\begin{equation}
g(\theta_2)-g(\theta_1)
= -\int_{\theta_1}^{\theta_2}\varepsilon(\beta)\,f(\beta)\,d\beta \;\le\; 0,
\end{equation}
because $\varepsilon(\beta)f(\beta)\ge 0$. Hence, $g(\cdot)$ is non-increasing.
\end{proof}


\begin{proof}[Proof of Theorem~\ref{thm:st-o}]
By Lemma~\ref{lemma:monotonicity}, $g(\cdot)$ is non-increasing, hence the feasible set $\mathcal F_b:=\{\theta: g(\theta)\le b\}$ is a right ray $[\theta^*,1]$ (or empty), where $\theta^*:=\inf\{\theta\in[0,1]: g(\theta)\le b\}$.
Since $C(\cdot)$ is non-decreasing, its minimum over $\mathcal F_b$ is attained at its left endpoint, i.e., at $\theta^*$.
Furthermore, $g(\cdot)$ is continuous so the constraint is tight, i.e., $g(\theta)=b$ and if it is strictly decreasing (e.g., $\varepsilon(\theta^*)f(\theta^*)>0$), $\theta^*$ is unique with $g(\theta)=b$.
\end{proof}



\begin{remark}[When ties can happen]
Let $\theta_{\min}=\inf\{\theta\in[0,1]: g(\theta)\le b\}$ denote the smallest feasible threshold.


\textbf{Non-uniqueness (ties).}
Non-uniqueness can occur only if $g$ is \emph{flat} at the budget level on an interval starting at $\theta_{\min}$:
there exists $[a,c]\subseteq[0,1]$ with $a=\theta_{\min}$ such that
\begin{equation}
g(\theta)=b \quad \text{for all }\ \theta\in[a,c],
\end{equation}
equivalently $\varepsilon(\theta)\,f(\theta)=0$ for almost every $\theta\in[a,c]$.
In this situation:
\begin{itemize}
\item If $f(\theta)>0$ on a set of positive measure inside $[a,c]$ (e.g., $g$ is flat because $\varepsilon(\theta)=0$ on that segment),
then $C(\theta)$ is strictly increasing on $[a,c]$, and the unique cost-minimizer is the \emph{leftmost} feasible point:
\(
\theta^* \;=\; a \;=\; \theta_{\min}.
\)
\end{itemize}
\textbf{Edge case.}
If $g(0)\le b$, then $\theta^*=0$. Moreover, if there exists $\delta>0$ with $f(\theta)=0$ for all $\theta\in[0,\delta]$, then
$C(\theta)$ is constant on $[0,\delta]$ and any $\theta\in[0,\delta]$ is also optimal.
\end{remark}

\paragraph{\textbf{Optimal threshold calculation example.}}The optimal threshold $\theta^*$ can be analytically computed if $f(\beta)$ belongs into a distribution family with closed-form PDF and $\varepsilon_s(\beta)$ is known. To make things concrete, we now present how to calculate $\theta^*$ (in the oracle scenario) under the assumption that $\beta \sim \text{Uniform}(0,1)$ and that the error rate of sLLM decreases linearly with confidence, e.g., $\varepsilon_s(\beta) = 1 - \beta$\footnote{This can be generally achieved and (asymptotically) ensured through calibration with proper scoring rules, under no distributional shift \citep{DBLP:journals/tmlr/FerrerR25}. 
}
We thus have that $f(\beta) = 1$ and 
\begin{equation}
    \text{Error}_\text{ST-O}(\theta) = \int_\theta^1 (1-\beta)\,d\beta = \left[  \beta - \frac{1}{2}\beta^2  \right]_\theta^1 = \frac{1}{2} - \theta + \frac{1}{2}\theta^2 \leq 1 - \xi\,.
\end{equation}

Solving the quadratic equality (the optimal $\theta^*$ occurs exactly where the error constraint is tight) and taking the smallest solution, we get:
\begin{equation}
\label{eq:single_thres_sol}
    \theta^* = 1 \pm \sqrt{2(1 - \xi)} \xRightarrow{\theta \in [0,1]} \theta^* = 1 - \sqrt{2(1 - \xi)} \,.
\end{equation}
This result can also be generalized if $\beta$ lies in an arbitrary interval in the set $ \mathcal{I} = \left\{ [x,y] \subseteq [0,1] \mid x \leq y \right\}$ and for any distribution family that provides a closed-form PDF expression as well as different forms for $\varepsilon_s(\beta)$. 

\subsection{Single-Threshold Non-Oracle case}\label{app:st-no-proofs}

\begin{proof}[Proof of Theorem~\ref{thm:st-no}]
(i) Since $\mathcal{F}_b$ is closed (by continuity of $g_{\mathrm{NO}}$) and non-empty, $\theta^*=\min\mathcal{F}_b$ exists and satisfies $\theta\ge \theta^*$ for all $\theta\in\mathcal{F}_b$.
Because $C(\cdot)$ is non-decreasing, $C(\theta)\ge C(\theta^*)$ for all feasible $\theta$, hence $\theta^*$ is optimal. \newline
(ii) If $g_{\mathrm{NO}}(0)\le b$, then $\theta=0$ is feasible and minimizes $C$, so $\theta^*=0$. \\
(iii) If $g_{\mathrm{NO}}(0)>b$ and $g_{\mathrm{NO}}(\theta^*)<b$, continuity of $g_{\mathrm{NO}}$ implies there exists $\delta>0$ such that $g_{\mathrm{NO}}(\theta)<b$ for all $\theta\in(\theta^*-\delta,\theta^*+\delta)$, yielding a feasible $\theta<\theta^*$, contradicting minimality of $\theta^*$. Therefore $g_{\mathrm{NO}}(\theta^*)=b$.
\end{proof}

\subsection{Multi-Thresholds Oracle case}\label{app:mt-o-proof}

\begin{proof}[Proof of Lemma \ref{lemma:coordinate_monotonicity}]
For $0\le \theta_k<\theta_k' \le 1$,
\begin{equation}
C(\theta_1,\!\dots,\theta_k',\!\dots,\theta_K)-C(\boldsymbol{\theta})
=c_m p_k\!\int_{\theta_k}^{\theta_k'}\! f_k(\beta)\,d\beta \ge 0,
\end{equation}
and
\begin{equation}
g(\theta_1,\!\dots,\theta_k',\!\dots,\theta_K)-g(\boldsymbol{\theta}) = -\,p_k\!\int_{\theta_k}^{\theta_k'} \varepsilon_k(\beta)\,f_k(\beta)\,d\beta \le 0,
\end{equation}
since $f_k\ge 0$ and $\varepsilon_k\in[0,1]$. The upper-orthant property follows immediately.
\end{proof}

\begin{proof}[Proof of Theorem~\ref{thm:mt-o}]
(i) By Lemma~\ref{lemma:coordinate_monotonicity}, decreasing any coordinate strictly decreases $C(\cdot)$ and increases $g(\cdot)$.
If an optimal $\hat{\boldsymbol{\theta}}$ had slack $g(\hat{\boldsymbol{\theta}})<b$, decreasing some coordinate slightly would reduce $C(\cdot)$ while maintaining feasibility-contradiction. Hence the constraint is tight at optimum unless $\boldsymbol{\theta}=\boldsymbol{0}$ is already feasible.

(ii) Suppose $\boldsymbol{\theta}$ is feasible with $g(\boldsymbol{\theta})=b$, and there exist two interior classes $i,j$ with $\varepsilon_{s,i}(\theta_i)>\varepsilon_{s,j}(\theta_j)$. Consider small moves that keep total error unchanged: increase $\theta_i$ by $d>0$ and decrease $\theta_j$ by $s>0$ chosen so that
$
p_i\varepsilon_{s,i}(\theta_i) f_i(\theta_i)\,d \;=\; p_j\varepsilon_{s,j}(\theta_j) f_j(\theta_j)\,s
$.
The cost change is:
\begin{equation}
\Delta C = c_m\big[p_i f_i(\theta_i) d - p_j f_j(\theta_j) s\big]
= c_m p_i f_i(\theta_i) d \Big(1-\frac{\varepsilon_{s,i}(\theta_i)}{\varepsilon_{s,j}(\theta_j)}\Big) < 0\,,
\end{equation}
a contradiction to optimality. Therefore all interior coordinates must satisfy $\varepsilon_{s,k}(\theta_k^*)=t$ for a common $t$.

(iii) With equalized boundary error $t$, pick any $\theta_k^*\in \varepsilon_{s,k}^{-1}(t)$ for each active class. The tightness condition $g(\boldsymbol{\theta}^*)=b$ uniquely fixes $t$ (unless some coordinates saturate at $0$ or $1$, as described).
\end{proof}

\begin{remark}[When ties can happen]
Let $\boldsymbol{\theta}^*$ be an optimal solution obtained by the equalized-boundary-error rule
$\varepsilon_{s,k}(\theta_k^*)=t^*$ on all active classes, with $g(\boldsymbol{\theta}^*)=b$.
The optimal threshold vector need not be unique under any of the following conditions:

\noindent\textbf{Zero-density plateaus (per-class indifference).}
If for some class $k$ there exists an interval $I\subset[0,1]$ with $\theta_k^*\in I$ and
$f_k(u)=0$ for almost every $u\in I$, then for any $\theta'_k\in I$, the vector
$\boldsymbol{\theta}^\prime=(\theta_1^*,\dots, \allowbreak \theta_{k-1}^*,\theta'_k,\theta_{k+1}^*,\dots,\theta_K^*)$ satisfies
$C(\boldsymbol{\theta}')=C(\boldsymbol{\theta}^*)$ and $g(\boldsymbol{\theta}')=g(\boldsymbol{\theta}^*)=b$. 

\noindent\textbf{Cross-class flat segments at the optimal level $t^*$.}
Suppose there exist two (or more) classes $i\neq j$ with intervals $I_i,I_j\subset[0,1]$
such that $\theta_i^*\in I_i$, $\theta_j^*\in I_j$, and
$\varepsilon_{s,i}(u)=t^*$, $\varepsilon_{s,j}(u)=t^*$ for a.e.\ $u\in I_i$ and $u\in I_j$, with
$f_i,f_j>0$ on sets of positive measure in those intervals.
Then for sufficiently small $\delta_i,\delta_j>0$ one can construct nontrivial perturbations
$\tilde\theta$ by decreasing $\theta_i^*$ within $I_i$ and increasing $\theta_j^*$ within $I_j$ so that
\begin{equation}
p_i\!\int_{\theta_i^*-\delta_i}^{\theta_i^*} f_i(u)\,du
\;=\;
p_j\!\int_{\theta_j^*}^{\theta_j^*+\delta_j} f_j(u)\,du,
\end{equation}
which preserves total error (because $\varepsilon_{s,i}=\varepsilon_{s,j}=t^*$ on the moved mass) and yields
\begin{equation}
\Delta C = 0 \Rightarrow c_m\Big[-p_i\int_{\theta_i^*-\delta_i}^{\theta_i^*} f_i(u)\,du + p_j\int_{\theta_j^*}^{\theta_j^*+\delta_j}f_j(u)\,du\Big] = 0\,.
\end{equation}
\noindent\textbf{Flat budget map around $t^*$ with cost indifference.}
Let $\theta_k(t):=\inf\{\beta\in[0,1]:\varepsilon_{s,k}(\beta)\le t\}$ and
$g(t):=\sum_k p_k\int_{\theta_k(t)}^{1}\varepsilon_k f_k$.
If there exists an interval $[t_-,t_+]$ with $t_-<t_+$ such that $g(t)=b$ for all $t\in[t_-,t_+]$
and, in addition, for every $t\in[t_-,t_+]$ the deferral mass
$\sum_k p_k F_k(\theta_k(t))$ is constant, 
then all $\theta(t)$ with $t\in[t_-,t_+]$ are optimal. 

\end{remark}

\begin{remark}[Computation]
Numerically (or analytically under certain assumptions that allow closed-form solutions), we solve the single equation $g(t)=b$ (e.g., by bisection/Brent). 
Equivalently, we can form the Lagrangian that yields the stationarity rule $\varepsilon_k(\theta_k^*)=c_m/\lambda^*$ for all active $k$, so $t^*=c_m/\lambda^*$ and the same construction follows. 
\end{remark}

\paragraph{\textbf{Optimal threshold vector calculation example.}} Following the example we used in the single-threshold policy,
we will show how to analytically compute the optimal vector of thresholds $\boldsymbol{\theta}^*$ (in the oracle case) under the assumption that $\beta_k \sim \text{Uniform}(0,1),\, \forall \, k = 1,\ldots,K$ (thus having that each $f_k(\beta) = 1$ and $\int_0^{\theta_k} f_k(\beta)\,d\beta=\theta_k$) and that we use $K$ linear error functions $\varepsilon_k(\beta) = 1 - \delta_k\beta,\, \delta_k > 0$. Since all the variables of the threshold vector are coupled through the shared constraint in (\ref{eq:multi_class_obj}), we will employ the method of \textit{Lagrange multipliers} which allows the handling of multiple variables simultaneously. 

Using the Uniform distribution assumption, we can define the (relaxed) Lagrangian objective as:
\begin{equation}
    \mathcal{L}_{rel}(\theta_1, \ldots, \theta_K, \lambda) = c_s + c_m \sum_{k=1}^K p_k \theta_k + \lambda \left( \sum_{k=1}^K p_k \int_{\theta_k}^1 \varepsilon_k(\beta)\,d\beta - (1 - \xi)  \right)\,, 
\end{equation}
and by setting the derivative w.r.t $\theta_k$ to zero for optimality, we obtain the optimal thresholds:
\begin{equation}
    \frac{\partial \mathcal{L}_{rel}}{\partial \theta_k} = 0 \Rightarrow c_m p_k - \lambda p_k \varepsilon_k(\theta_k) = 0 \Rightarrow \varepsilon_k(\theta_k^*) = \frac{c_m}{\lambda} \Rightarrow \theta^*_k = \varepsilon_k^{-1}(\frac{c_m}{\lambda})\,. 
\end{equation}
Since the inverse of a linear function can be calculated, using the linear error functions assumption, we have that 
\begin{equation}
    \theta^*_k = \frac{1-\frac{c_m}{\lambda}}{\delta_k}\,.
\end{equation}
Furthermore, the constraint (i.e., the expected error) of (\ref{eq:multi_class_obj}) becomes:
\begin{equation}
    \sum_{k=1}^K p_k \int_{\theta^*_k}^1 (1-\delta_k\beta)\,d\beta = \sum_{k=1}^K \left[ 1 - \theta^*_k - \frac{1}{2}\delta_k(1-(\theta^*_k)^2) \right] = \sum_{k=1}^K \left[ 1 - \frac{1-\frac{c_m}{\lambda}}{\delta_k} - \frac{1}{2}\delta_k\left(1-\left(\frac{1-\frac{c_m}{\lambda}}{\delta_k}\right)^2\right) \right]\,,
\end{equation}
and ultimately, the constraint equation is:
\begin{equation}
    \sum_{k=1}^K \left[ 1 - \frac{1-\frac{c_m}{\lambda}}{\delta_k} - \frac{1}{2}\delta_k\left(1-\left(\frac{1-\frac{c_m}{\lambda}}{\delta_k}\right)^2\right) \right] = 1 - \xi\,.
\end{equation}
This is a non-linear equation in $\lambda$ and it cannot be solved analytically in the general case because each $\theta^*_k$ involves a rational function of $\lambda$ and the resulting equation is a sum of non-identical, non-linear rational-quadratic terms (which are not invertible in closed-form). Due to the complexity, numerical root-finding methods can be used (e.g., Brent’s method \citep{DBLP:journals/cj/Brent71}) to solve for $\lambda$. Once $\lambda^*$ is obtained, each optimal threshold is recovered using:
\begin{equation}
\label{eq:opt_thr_mult}
    \theta^*_k = \frac{1-\frac{c_m}{\lambda^*}}{\delta_k}\,.
\end{equation}
If we assume symmetry across all classes where $\delta_k = \delta$ and $p_k = \frac{1}{K},\, \forall k$ then all thresholds become equal: $\theta^*_k = \theta^*$. The constraint equation simplifies to:
\begin{equation}
    \delta (\theta^*)^2 - 2\theta^* + (-\delta + 2\xi) = 0 
\end{equation}
Solving the quadratic equation\footnote{Under $\xi \le 1$, the discriminant is non-negative, so real roots always exist.}\footnote{To ensure at least one solution in \([0,1]\), it suffices to impose
  $0 < \delta \le 2\xi$ ($\xi = 1$ is necessary for $\theta^*_+ \leq 1$). 
  If \(\xi = 1\), this reduces to \(1 \le \delta \le 2\), which ensures that \(\theta^*_-\in[0,1]\) and \(\theta^*_+ = 1\), so we can choose $\theta^*_-$.}, we get:
\begin{equation}
    \theta^* = \frac{1 \pm \sqrt{1 + \delta^2 - 2\delta\xi}}{\delta} \xRightarrow{\theta \in [0,1]} \frac{1 - \sqrt{1 + \delta^2 - 2\delta\xi}}{\delta}\,,
    \label{eq:opt_theta_multi_special}
\end{equation}
and then:
\begin{equation}
    \lambda^* = \frac{c_m}{1 - \delta\theta^*}\,.
\end{equation}
By setting $\delta = 1$ in (\ref{eq:opt_theta_multi_special}), the solution $\theta^*$ reduces to the single-threshold policy's solution in (\ref{eq:single_thres_sol}).

\subsection{Multi-Thresholds Non-Oracle case}\label{app:mt-no-proof}


\begin{proof}[Proof of Theorem \ref{thm:mt-no}]
    (i) By Lemma~\ref{lemma:coordinate_monotonicity}, decreasing any coordinate strictly decreases $C(\cdot)$ and increases $g(\cdot)$.
If an optimal $\hat{\boldsymbol{\theta}}$ had slack $g(\hat{\boldsymbol{\theta}})<b$, decreasing some coordinate slightly would reduce $C(\cdot)$ while maintaining feasibility, a contradiction. Hence the constraint is tight at optimum unless $\boldsymbol{\theta}=\boldsymbol{0}$ is already feasible. \newline
(ii) Fix two interior classes $i,j$ with $\Delta_i(\theta_i^*)>0$ and $\Delta_j(\theta_j^*)>0$.
Consider a perturbation that increases $\theta_i$ by $d>0$ and decreases $\theta_j$ by $s>0$.
Having that
\begin{equation}
\frac{\partial g_{\mathrm{NO}}}{\partial \theta^*_k}
= p_k f_k(\theta_k^*)\big(\bar{\varepsilon}_{m,k}-\varepsilon_{s,k}(\theta_k^*)\big)
= -p_k f_k(\theta_k^*)\Delta_k(\theta_k^*),
\end{equation}
we get the first-order error change:
\begin{equation}
\Delta g_{\mathrm{NO}}
= -p_i f_i(\theta_i^*)\Delta_i(\theta_i^*)\,d
\;+\;
p_j f_j(\theta_j^*)\Delta_j(\theta_j^*)\,s.
\end{equation}
Furthermore, we choose $s$ so that $\Delta g_{\mathrm{NO}}=0$, namely:
\begin{equation}
p_i f_i(\theta_i^*)\Delta_i(\theta_i^*)\,d
=
p_j f_j(\theta_j^*)\Delta_j(\theta_j^*)\,s.
\label{eq:error_balance_no}
\end{equation}
The corresponding first-order cost change is
\[
\Delta C
= c_m\big[p_i f_i(\theta_i^*)\,d - p_j f_j(\theta_j^*)\,s\big].
\]
Substituting from \eqref{eq:error_balance_no} yields
\[
\Delta C
= c_m\,p_i f_i(\theta_i^*)\,d\left(1-\frac{\Delta_i(\theta_i^*)}{\Delta_j(\theta_j^*)}\right).
\]
If $\Delta_i(\theta_i^*)>\Delta_j(\theta_j^*)$ then $\Delta C<0$, contradicting optimality of $\boldsymbol{\theta}^*$.
By symmetry, we also cannot have $\Delta_j(\theta_j^*)>\Delta_i(\theta_i^*)$, hence
$\Delta_i(\theta_i^*)=\Delta_j(\theta_j^*)$. 
This proves \eqref{eq:excess_equal_all}. \newline
(iii) By (ii), all interior classes with $\Delta_k(\theta_k^*)>0$ share a common level $t$.
Classes with $\Delta_k(\beta)\le t$ for all $\beta$ cannot satisfy $\Delta_k(\theta_k)=t$ in the interior, so they must saturate at the lowest-cost boundary $\theta_k^*=0$ (always accept). Likewise, classes with
$\Delta_k(\beta)>t$ for all $\beta$ saturate at $\theta_k^*=1$ (always defer).
All remaining classes satisfy $\theta_k^*\in \Delta_k^{-1}(t)$. 
\end{proof}

\subsection{Multi-thresholds policy justification}

As discussed in Section~\ref{subsec:multi_class_clf}, performance over different classes can vary and models can display a different behavior per class. We will briefly show when the multi-thresholds policy can be beneficial with an illustrative example. Consider the scenario in the Fig. \ref{fig:multi_justification} that describes a binary classification setting (two classes):

\begin{figure}[!ht]
  \vskip 0.2in
  \centering
    \centerline{\includegraphics[width=0.45\textwidth]{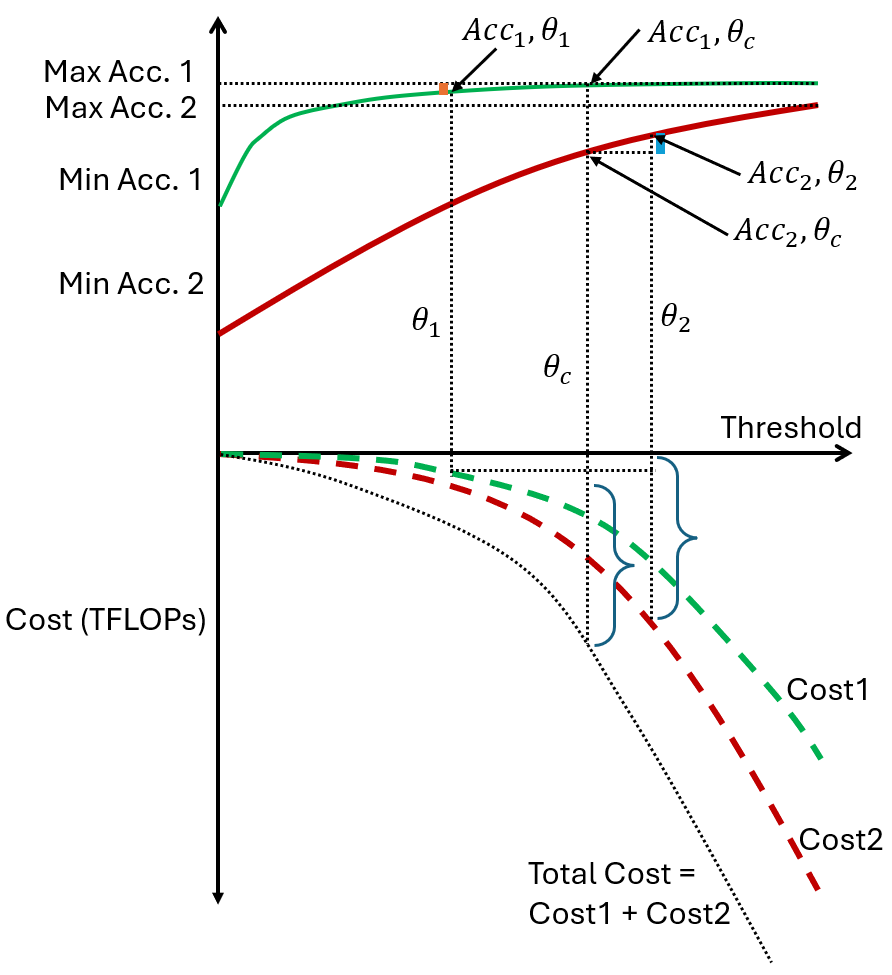}}
    \caption{
      Illustration of the accuracy–cost trade-off induced by a single threshold vs.\ multiple thresholds.
    }
    \label{fig:multi_justification}
\end{figure}

For simplicity, we assume both classes have same number of samples, so accuracy is just the average. 
The $x$-axis is the threshold value and the $y$-axis represents  accuracy (increasing upward), and cost (increasing downward). 
We assume \textsc{Max Acc. 1} to be the maximum accuracy achievable for class 1 with mLLM, while \textsc{Min Acc. 1} is the accuracy achieved with the sLLM. \textsc{Cost 1} is cost from class 1 classification with different threshold values. 

When we have one single threshold, $\theta_c$, then we achieve an accuracy of $\frac{1}{2}(\mathrm{Acc}_1, \theta_c + \mathrm{Acc}_2, \theta_c)$, but with two thresholds, we achieve higher accuracy, i.e., $\frac{1}{2}(\mathrm{Acc}_1, \theta_1 + \mathrm{Acc}_2, \theta_2)$, while keeping the same total cost.

\section{Optimality of multi-thresholds policy over generalized policies} \label{app:multi_general_policies}

We generalize the framework of the main text by allowing arbitrary (not specifically threshold-based) decision policies. We work in the same classification setting and retain the same cost and error models, and allow the mLLM to make mistakes (the oracle case follows naturally). We show that, under a mild monotone-risk assumption on the sLLM's confidence score, the Bayes-optimal \citep{} deferral policies are again \emph{per-class single-threshold policies} on a calibrated uncertainty score. We give an alternative proof based on Lagrangian relaxation and Bayes-optimal decision theory. 

\paragraph{Small model.}

For a given query, the small model outputs $(\hat{\boldsymbol{a}}^s, \beta_{\hat{\boldsymbol{a}}^s})$, where $\hat{\boldsymbol{a}}^s \in \{1,\dots,K\}$ is the predicted class and $\beta_{\hat{\boldsymbol{a}}^s} \in [0,1]$ is the scalar confidence for that class.
For each class $k$, let

\begin{equation}
p_k = \Pr(\hat{\boldsymbol{a}}^s = k), \qquad
f_k(\beta) = \text{density of }\beta_k \mid \hat{\boldsymbol{a}}^s=k\,,
\end{equation}
and define the \emph{class-wise error curve} of sLLM
\begin{equation}
\varepsilon_{s,k}(\beta) = \Pr({\boldsymbol{a}} \neq k \mid \hat{\boldsymbol{a}}^s = k, \beta_k = \beta)\,.
\end{equation}

\paragraph{Master model.}

Let $\hat{\boldsymbol{a}}^m$ denote the label predicted by mLLM when it is queried. We assume that the error of mLLM is independent of sLLM:
\begin{equation}
{\varepsilon}_{m,k}(\gamma) = \Pr({\boldsymbol{a}} \neq k \mid \hat{\boldsymbol{a}}^m =k, \gamma_k=\gamma)\,,
\end{equation}

and $\mathbb{E}[\varepsilon_{m,k}(\gamma)]$ is treated as a constant $\bar{\varepsilon}_{m,k}$. Note that we do \emph{not} assume $\bar{\varepsilon}_{m,k} \le \varepsilon_{s,k}(\beta)$ for all $\beta$; mLLM may be better or worse than sLLM depending on the region of the score space.

\paragraph{Deferral policies.}

A (possibly randomized) \emph{deferral policy} $\pi$ maps the observable
pair $(\hat{\boldsymbol{a}}^s,\beta_{\hat{\boldsymbol{a}}^s})$ to a deferral decision $D \in \{0,1\}$, where $D=0$ means ``accept sLLM'' and $D=1$ means ``defer to mLLM''.

- A deterministic policy is a measurable function
  \begin{equation}
  \pi : \{1,\dots,K\}\times[0,1] \to \{0,1\}\,.
  \end{equation}
  
- A randomized policy can be seen as a measurable function
  \begin{equation}
  \pi : \{1,\dots,K\}\times[0,1] \to [0,1]\,,
  \end{equation}
  where $\pi(k,\beta)$ is the probability of deferral when observing
  $(\hat{\boldsymbol{a}}^s=k,\beta_k=\beta)$.


\paragraph{Performance metrics.}

For a given policy $\pi$, define: \newline

- the \emph{final prediction} $\hat{\boldsymbol{a}}^\pi$ as
  \begin{equation}
  \hat{\boldsymbol{a}}^\pi =
  \begin{cases}
    \hat{\boldsymbol{a}}^s, & D=0, \\
    \hat{\boldsymbol{a}}^m, & D=1\,,
  \end{cases}
  \end{equation}
  
- the \emph{error indicator}
  \begin{equation}
  L = \mathds{1}[\hat{\boldsymbol{a}}^\pi \neq {\boldsymbol{a}}]\,,
  \end{equation}
  
- the \emph{per-sample cost}
  \begin{equation}
  C = c_s + c_m D\,.
  \end{equation}

Then the expected error and cost of policy $\pi$ are
\begin{equation}
\mathrm{Error}(\pi) = \mathbb{E}[L], \qquad
\mathrm{Cost}(\pi) = \mathbb{E}[C]\,.
\end{equation}

The optimization problem is:
\begin{equation}
  \label{eq:non-oracle-problem}
  \min_{\pi} \ \mathrm{Cost}(\pi)
  \quad\text{s.t.}\quad
  \mathrm{Error}(\pi) \le b,
\end{equation}

with $b\in[0,1]$ and the minimum is taken over all measurable policies $\pi$ based on $(\hat{\boldsymbol{a}}^s,\beta_{\hat{\boldsymbol{a}}^s})$. \newline

\noindent Before restricting attention to threshold policies, we analyze the more general
policy space using ideas from decision theory and convex analysis.

\subsection{The risk--cost region and convexity}

For any (possibly randomized) policy $\pi$, define the \emph{risk--cost vector}:
\begin{equation}
R(\pi) = \big( \mathrm{Error}(\pi), \; \mathrm{Cost}(\pi) \big) \in \mathbb{R}^2.
\end{equation}

Let $\mathcal{R}_{\mathrm{det}}$ be the set of all such pairs obtained from
deterministic policies, and let $\mathcal{R}$ be the set obtained when
randomization is allowed.

\begin{lemma}[Convexity via randomization]
  \label{lem:convexity}
  The set $\mathcal{R}$ is the convex hull of $\mathcal{R}_{\mathrm{det}}$ and
  is convex. More precisely, for any two policies $\pi_1,\pi_2$ and any
  $\alpha\in[0,1]$, define the randomized policy $\pi_\alpha$ that draws $U\sim
  \mathrm{Bernoulli}(\alpha)$ once and uses $\pi_1$ if $U=1$ and $\pi_2$ if
  $U=0$ for all samples. Then:
  $$
  R(\pi_\alpha) = \alpha R(\pi_1) + (1-\alpha) R(\pi_2).
  $$
\end{lemma}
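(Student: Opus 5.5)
The plan is to prove the displayed identity $R(\pi_\alpha)=\alpha R(\pi_1)+(1-\alpha)R(\pi_2)$ first, and then derive convexity of $\mathcal{R}$ and the convex-hull characterization from it. The identity follows from linearity of expectation together with the law of total expectation, conditioning on the mixing variable $U$. The key modeling point is that $U\sim\mathrm{Bernoulli}(\alpha)$ is drawn independently of the data $(\boldsymbol{a},\hat{\boldsymbol{a}}^s,\beta_{\hat{\boldsymbol{a}}^s},\hat{\boldsymbol{a}}^m)$ and of any internal randomization of $\pi_1,\pi_2$. Conditioned on $U=1$, the deferral decision $D$ has exactly the conditional law it has under $\pi_1$, and likewise for $U=0$ and $\pi_2$. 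Since the final prediction $\hat{\boldsymbol{a}}^{\pi}$ is determined by $D$ and the model outputs, the same holds for the error indicator $L$ and the per-sample cost $C=c_s+c_mD$.

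Writing $\mathbb{E}[L]=\Pr(U=1)\,\mathbb{E}[L\mid U=1]+\Pr(U=0)\,\mathbb{E}[L\mid U=0]$ then gives $\mathrm{Error}(\pi_\alpha)=\alpha\,\mathrm{Error}(\pi_1)+(1-\alpha)\,\mathrm{Error}(\pi_2)$. The identical computation for $C$ gives the cost component. For randomized $\pi$ with deferral probability $\pi(k,\beta)$, it is also useful to record the explicit linear representation
\[
\mathrm{Cost}(\pi)=c_s+c_m\sum_k p_k\int_0^1 \pi(k,\beta)f_k(\beta)\,d\beta,\qquad
\mathrm{Error}(\pi)=\sum_k p_k\int_0^1\big[(1-\pi(k,\beta))\varepsilon_{s,k}(\beta)+\pi(k,\beta)\bar{\varepsilon}_{m,k}\big]f_k(\beta)\,d\beta .
\]
This representation uses the independence assumption on mLLM, exactly as in (\ref{eq:exp_error_4}). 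It shows that $R$ is affine in the function $\pi$, and so gives a second route to the identity for pointwise mixtures $\alpha\pi_1+(1-\alpha)\pi_2$.

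Convexity of $\mathcal{R}$ is then immediate: mixing two randomized policies yields another randomized policy, and its risk--cost vector is the corresponding convex combination. Hence $\mathcal{R}$ is convex, and since $\mathcal{R}_{\mathrm{det}}\subseteq\mathcal{R}$, we get $\mathrm{conv}(\mathcal{R}_{\mathrm{det}})\subseteq\mathcal{R}$.

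The main obstacle is the reverse inclusion $\mathcal{R}\subseteq\mathrm{conv}(\mathcal{R}_{\mathrm{det}})$, because a randomized $\pi:\{1,\dots,K\}\times[0,1]\to[0,1]$ is not obviously a finite mixture of deterministic policies. I would handle it by realizing $\pi$ with an auxiliary $V\sim\mathrm{Unif}(0,1)$ independent of everything else, setting $D=\mathds{1}[V<\pi(k,\beta)]$. For each fixed $v$, the rule $\pi_v(k,\beta)=\mathds{1}[v<\pi(k,\beta)]$ is deterministic and measurable, and $R(\pi)=\int_0^1 R(\pi_v)\,dv$ by Fubini. A point that is an integral average of points of $\mathcal{R}_{\mathrm{det}}\subset\mathbb{R}^2$ lies in the closed convex hull of that set. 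To remove the closure, I would use the affine representation above to show that $\mathcal{R}_{\mathrm{det}}$ is already convex. By Lyapunov's convexity theorem applied to the non-atomic vector measure $A\mapsto\big(\int_A\ldots,\int_A\ldots\big)$ on $\{1,\dots,K\}\times[0,1]$, assuming the densities $f_k$ make it atomless, any value $R(\pi)$ is attained by some indicator $\pi=\mathds{1}_A$. This gives equality of the sets. If that technicality proves troublesome, I would state the result with the closed convex hull, which is enough for the subsequent Lagrangian argument.
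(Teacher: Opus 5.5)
Your proof of the displayed identity is the paper's: linearity of expectation, i.e.\ conditioning on the coin $U$. On the convex-hull claim, though, you go further than the paper does.

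The paper infers $\mathcal{R}=\mathrm{conv}(\mathcal{R}_{\mathrm{det}})$ from two facts: $\mathcal{R}$ is closed under mixtures, and deterministic policies are particular randomized ones. That only gives $\mathrm{conv}(\mathcal{R}_{\mathrm{det}})\subseteq\mathcal{R}$. You correctly identify the reverse inclusion as the real content. You obtain it from the representation $R(\pi)=\int_0^1 R(\pi_v)\,dv$ with deterministic rules $\pi_v=\mathds{1}[v<\pi]$. You also handle a point the paper skips: the once-drawn-coin policy $\pi_\alpha$ is not itself a map $\{1,\dots,K\}\times[0,1]\to[0,1]$, but by affineness it has the same risk--cost vector as the pointwise mixture $\alpha\pi_1+(1-\alpha)\pi_2$.

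Your Lyapunov step buys a strictly stronger statement, $\mathcal{R}_{\mathrm{det}}=\mathcal{R}$, meaning randomization is superfluous when the $f_k$ are densities. However, neither Lyapunov nor the retreat to the closed hull is needed for the lemma, because in $\mathbb{R}^n$ an average of points of a set $S$ lies in $\mathrm{conv}(S)$ itself:
\begin{itemize}
\item Suppose $m=\int_0^1 R(\pi_v)\,dv\notin\mathrm{conv}(\mathcal{R}_{\mathrm{det}})$. Note that $m$ lies in the affine hull of $\mathcal{R}_{\mathrm{det}}$.
\item Weakly separate $m$ from $\mathrm{conv}(\mathcal{R}_{\mathrm{det}})$ by an affine $\ell$ that is non-constant on that hull.
\item Then $\int_0^1\bigl(\ell(R(\pi_v))-\ell(m)\bigr)\,dv=0$ with a nonnegative integrand. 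So for a.e.\ $v$ the points $R(\pi_v)$ lie in $\mathcal{R}_{\mathrm{det}}\cap\{\ell=\ell(m)\}$, a set of smaller affine dimension.
\item Induction on that dimension gives a contradiction.
\end{itemize}

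This route needs no atomlessness, which matters for the finite-sample setting the paper actually optimizes over. With discrete $\beta$, $\mathcal{R}_{\mathrm{det}}$ is a finite set of points; a single atom gives only ``accept all'' and ``defer all''. So Lyapunov is unavailable there, yet $\mathcal{R}=\mathrm{conv}(\mathcal{R}_{\mathrm{det}})$ still holds.

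Finally, affineness of $R$ in $\pi$ does not rely on the mLLM-independence convention. With $X=(\hat{\boldsymbol{a}}^s,\beta_{\hat{\boldsymbol{a}}^s})$ one always has $\mathbb{E}[L\mid X]=(1-\pi(X))\Pr(\hat{\boldsymbol{a}}^s\neq\boldsymbol{a}\mid X)+\pi(X)\Pr(\hat{\boldsymbol{a}}^m\neq\boldsymbol{a}\mid X)$.
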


\begin{proof}
  The equality follows from linearity of expectation:
  \begin{equation}
  \mathrm{Error}(\pi_\alpha)
  = \mathbb{E}[\mathds{1}[\hat{\boldsymbol{a}}^{\pi_\alpha}\neq {\boldsymbol{a}}]]
  = \alpha\,\mathrm{Error}(\pi_1) + (1-\alpha)\,\mathrm{Error}(\pi_2),
  \end{equation}
  and similarly for $\mathrm{Cost}(\pi_\alpha)$. Thus $\mathcal{R}$ is closed
  under convex combinations. Since deterministic policies are particular
  randomized policies, we have
  $\mathcal{R}=\mathrm{conv}(\mathcal{R}_{\mathrm{det}})$.
\end{proof}

Geometrically, $\mathcal{R}$ is a closed convex subset of $\mathbb{R}^2$. A
policy $\pi$ is (weakly) Pareto-optimal if its risk--cost pair $(e,c)$ lies on
the lower-left boundary of $\mathcal{R}$, i.e.\ if there is no other policy
$\pi'$ with $\mathrm{Err}(\pi')\le e$, $\mathrm{Cost}(\pi')\le c$, and at least
one strict inequality.

\subsection{Lagrangian formulation}


Introducing a Lagrange multiplier $\lambda\ge 0$, we define the Lagrangian of the constrained problem \eqref{eq:non-oracle-problem}:
\begin{equation}
\mathcal{L}(\pi,\lambda)
= \mathrm{Cost}(\pi) + \lambda\big(\mathrm{Error}(\pi)-b\big)\,.
\end{equation}

For fixed $\lambda$, minimizing $\mathcal{L}(\pi,\lambda)$ over $\pi$ is
equivalent to minimizing
\begin{equation}
J_\lambda(\pi) = \mathbb{E}\big[ c_s + c_m D + \lambda L \big]\,,
\end{equation}
since the term $-\lambda b$ does not depend on $\pi$. The functional $J_\lambda$ is the risk of policy $\pi$ under the loss function $\ell_\lambda$:
\begin{equation}
\ell_\lambda(D,{\boldsymbol{a}}, \hat{\boldsymbol{a}}^s, \hat{\boldsymbol{a}}^m)
=
\begin{cases}
c_s + \lambda \,\mathds{1}[\hat{\boldsymbol{a}}^s\neq {\boldsymbol{a}}] & \text{if } D=0
  \ (\text{accept}), \\
c_s + c_m + \lambda \,\mathds{1}[\hat{\boldsymbol{a}}^m\neq {\boldsymbol{a}}] & \text{if } D=1
  \ (\text{defer})\,.
\end{cases}
\end{equation}

\paragraph{Connection between the constrained and Lagrangian problems.}

By Lemma~\ref{lem:convexity}, $\mathcal{R}$ is convex. The objective in
\eqref{eq:non-oracle-problem} is to find, among all $(e,c)\in\mathcal{R}$ with
$e\le b$, a point with minimal $c$. Any such point on the lower-left boundary of $\mathcal{R}$ is Pareto-optimal.
The supporting hyperplane theorem implies that for every Pareto-optimal $(e^*,c^*)\in \mathcal{R}$ there exists some $\lambda\ge 0$ such that $(e^*,c^*)$ minimizes the linear functional
$
(e,c)\mapsto c + \lambda e
$
over $\mathcal{R}$. Translating back to policies, this means:

\begin{proposition}[Pareto points are Lagrangian minimizers]
  \label{prop:pareto-lagrange}
  Let $\pi^*$ be a policy such that $R(\pi^*)$ is Pareto-optimal in
  $\mathcal{R}$. Then there exists $\lambda\ge 0$ such that
  \[
  \pi^* \in
  \arg\min_{\pi} J_\lambda(\pi)
  = \arg\min_{\pi} \mathbb{E}[c_s + c_m D + \lambda L].
  \]
\end{proposition}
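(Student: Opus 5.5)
The plan is to prove Proposition~\ref{prop:pareto-lagrange} by applying the supporting hyperplane theorem to the convex region $\mathcal{R}$ from Lemma~\ref{lem:convexity}, and then to check that the resulting normal vector can be normalized to the form $(\lambda,1)$ with $\lambda\ge 0$.

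\textbf{Step 1: a convex set with the right geometry.} By Lemma~\ref{lem:convexity}, $\mathcal{R}$ is convex. Rather than separating from $\mathcal{R}$ itself, I would work with its upward closure $\mathcal{R}^+=\mathcal{R}+\mathbb{R}_{\ge 0}^2$. This set is also convex, and it has the same Pareto-optimal points as $\mathcal{R}$.

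\textbf{Step 2: separation.} Let $(e^*,c^*)=R(\pi^*)$ be Pareto-optimal. Then the open orthant $Q=\{(e,c): e<e^*,\ c<c^*\}$ is disjoint from $\mathcal{R}^+$. If some point of $\mathcal{R}^+$ lay in $Q$, it would dominate a point of $\mathcal{R}$, which in turn would dominate $(e^*,c^*)$ strictly; this contradicts Pareto-optimality. Since $Q$ is convex with nonempty interior, the separating hyperplane theorem gives a nonzero $(\mu,\nu)$ with
\[
\mu e+\nu c\;\ge\;\mu e^*+\nu c^* \quad\text{for all } (e,c)\in\mathcal{R}^+ .
\]
Because $\mathcal{R}^+$ is closed under adding $\mathbb{R}_{\ge0}^2$, the normal must satisfy $\mu,\nu\ge 0$.

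\textbf{Step 3: normalization, the main obstacle.} If $\nu>0$, set $\lambda=\mu/\nu\ge 0$. Then $c+\lambda e$ is minimized over $\mathcal{R}$ at $R(\pi^*)$. Since $J_\lambda(\pi)=\mathrm{Cost}(\pi)+\lambda\,\mathrm{Error}(\pi)$, this means $\pi^*\in\arg\min_\pi J_\lambda(\pi)$.

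The hard case is $\nu=0$, a vertical supporting line. It occurs when $e^*$ equals the minimal achievable error $e_{\min}$ while cheaper policies exist only at larger error, so the frontier has infinite slope there. I would handle it in two ways:
\begin{itemize}
\item Use the structure of this problem to bound the slope. Error and cost are both linear in the randomized policy $\pi(k,\beta)\in[0,1]$. The Pareto frontier is the lower convex envelope, parametrized by ranking points by $\Delta_k(\beta)$. Moving deferral mass changes cost at rate $c_m$ per unit mass and error at rate $\Delta_k(\beta)$, so the frontier slope is $c_m/\Delta_k(\beta)$. This is infinite only if $\Delta_k(\beta)$ approaches $0$ at the error-minimizing point.
\item Alternatively, state the result for weakly Pareto points in the closure, where $\lambda\in[0,\infty]$. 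One can then note that at $e^*=e_{\min}$ one may still take $\lambda$ large enough whenever the optimal deferral set has $\Delta_k$ bounded away from $0$; otherwise add this as a mild nondegeneracy assumption.
\end{itemize}

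Closedness of $\mathcal{R}$ is not needed for the argument, because $(e^*,c^*)$ is itself attained by $\pi^*$.
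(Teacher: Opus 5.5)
Your Steps 1--3 are the paper's argument made rigorous. The paper's entire justification is the sentence before the proposition: it applies the supporting hyperplane theorem to the convex set $\mathcal{R}$ and simply asserts a normal of the form $(\lambda,1)$. Your refinements are all correct: passing to $\mathcal{R}+\mathbb{R}^2_{\ge 0}$ forces $\mu,\nu\ge 0$, separating from the open orthant is valid, and closedness is indeed unnecessary.

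The case $\nu=0$ that you flag is exactly what the paper skips, and it cannot be removed. The proposition is false there, so your first bullet cannot succeed without an extra hypothesis. Take the paper's own worked example: $\beta\sim\mathrm{Uniform}(0,1)$, $\varepsilon_s(\beta)=1-\beta$, $\bar\varepsilon_m=0$.
\begin{itemize}
\item Defer-all has $R=(0,\,c_s+c_m)$. Any zero-error policy must defer almost everywhere, so this point is Pareto-optimal; it is the paper's $\theta^*=1$ at $\xi=1$.
\item $J_0$ is minimized by accept-all, so $\lambda=0$ fails.
\item For $\lambda>0$, the policy accepting $\beta\ge\theta$ has $J_\lambda=c_s+c_m\theta+\tfrac{\lambda}{2}(1-\theta)^2$. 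This is strictly below $c_s+c_m$ once $0<1-\theta<2c_m/\lambda$.
\end{itemize}
So no finite $\lambda\ge 0$ works. The same failure occurs generically in the non-oracle setting, whenever $\Delta_k$ passes continuously through $0$ at a point where the score density is positive.

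The statement therefore has to be restricted, along the lines of your second bullet, and your slope heuristic gives exactly the right condition. Since $\nu=0$ forces $e^*=\min_{\mathcal{R}} e$, every Pareto point whose error exceeds the minimal error is handled by your Step 3 with no further assumption. The only problematic point is the error-minimizing endpoint, namely the policy that defers exactly on $\{\Delta_k>0\}$. There a finite multiplier exists if and only if one of the following holds:
\begin{itemize}
\item $\{\Delta_k>0\}$ is null; or
\item $\delta_0:=\operatorname{ess\,inf}\{\Delta_k(\beta):\Delta_k(\beta)>0\}>0$, the essential infimum taken under the law of $(k,\beta)$.
\end{itemize}
In the second case $\lambda=c_m/\delta_0$ works. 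The paper's Bayes rule then accepts on $\{\Delta_k\le\delta_0\}$, which differs from $\{\Delta_k\le 0\}$ only by the null set $\{0<\Delta_k<\delta_0\}$ and the tie set $\{\Delta_k=\delta_0\}$, where either action is optimal. Otherwise you must allow $\lambda=\infty$, read as lexicographic minimization of error first and then cost. Whichever version you adopt should be written into the hypotheses of the proposition rather than left as a side remark.
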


Thus, to characterize the solutions of the original constrained problem
\eqref{eq:non-oracle-problem}, it suffices to characterize the minimizers of
$J_\lambda$ for each $\lambda$.

\subsection{Bayes-optimal policy 
}

We now fix $\lambda>0$ and determine the policies that minimize $J_\lambda$. The \emph{conditional risks} of the two actions at a given pair $(\hat{\boldsymbol{a}}^s=k, \beta_k=\beta)$ are defined as:

- If we \textbf{accept} sLLM ($D=0$),
  \begin{equation}
  r_s(k,\beta) = \mathbb{E}[c_s + \lambda L \mid \hat{\boldsymbol{a}}^s=k,\beta_k=\beta, D=0]
  = c_s + \lambda \varepsilon_{s,k}(\beta)\,.
  \end{equation}

- If we \textbf{defer} to mLLM ($D=1$),
  \begin{equation}
  r_m = \mathbb{E}[c_s + c_m + \lambda L \mid  D=1]
  = c_s + c_m + \lambda \bar{\varepsilon}_{m,k}\,.
  \end{equation}

A policy $\pi$ induces a decision $D=\pi(\hat{\boldsymbol{a}}^s,\beta_{\hat{\boldsymbol{a}}^s})$.
The risk can be written as
\begin{equation}
J_\lambda(\pi)
= \mathbb{E} [ r_s(\hat{\boldsymbol{a}}^s,\beta_{\hat{\boldsymbol{a}}^s})\,\mathds{1}[D=0]
             + r_m\,\mathds{1}[D=1] ].
\end{equation}


\begin{proposition}[Bayes-optimal deferral rule]
  \label{prop:bayes-rule}
  For fixed $\lambda>0$, define the policy
  $\pi^*_\lambda$ by
  \begin{equation}
  \pi^*_\lambda(k,\beta) =
  \begin{cases}
    0 & \text{if } r_s(k,\beta)\le r_m,\\
    1 & \text{if } r_s(k,\beta) > r_m\,.
  \end{cases}
  \end{equation}
  Then $\pi^*_\lambda$ is a global minimizer of $J_\lambda$, i.e.,
  \begin{equation}
  J_\lambda(\pi^*_\lambda)
  \le J_\lambda(\pi)
  \quad \text{for all policies $\pi$}\,.
  \end{equation}
\end{proposition}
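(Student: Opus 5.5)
The plan is a pointwise (conditional) minimization argument. We start from the representation of $J_\lambda$ given just above the statement,
\[
J_\lambda(\pi)=\mathbb{E}\big[r_s(\hat{\boldsymbol{a}}^s,\beta_{\hat{\boldsymbol{a}}^s})\,\mathds{1}[D=0]+r_m\,\mathds{1}[D=1]\big].
\]
We extend it to randomized policies by conditioning on the observable pair $(k,\beta)$. Writing $\pi(k,\beta)\in[0,1]$ for the deferral probability, the tower property gives
\[
J_\lambda(\pi)=\sum_{k=1}^K p_k\int_0^1\Big[(1-\pi(k,\beta))\,r_s(k,\beta)+\pi(k,\beta)\,r_{m,k}\Big]f_k(\beta)\,d\beta,
\]
where $r_{m,k}=c_s+c_m+\lambda\bar{\varepsilon}_{m,k}$. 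The only modelling input needed here is the independence assumption from the setup. It guarantees that, conditional on $D=1$ and on $(k,\beta)$, the expected mLLM error equals the constant $\bar{\varepsilon}_{m,k}$. As a result, $r_m$ does not depend on $\beta$ or on the policy, and the deferral decision does not alter the conditional error law of either branch.

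Next, we observe that for each fixed $(k,\beta)$ the integrand is affine in $\pi(k,\beta)\in[0,1]$. Its minimum over $[0,1]$ is therefore $\min\{r_s(k,\beta),r_{m,k}\}$. This minimum is attained by choosing $0$ when $r_s\le r_m$ and $1$ when $r_s>r_m$, which is exactly $\pi^*_\lambda(k,\beta)$. Hence, for every policy $\pi$ and pointwise,
\[
(1-\pi)r_s+\pi r_m\;\ge\;\min\{r_s,r_m\}\;=\;(1-\pi^*_\lambda)r_s+\pi^*_\lambda r_m.
\]
Since $f_k\ge 0$ and $p_k\ge 0$, integrating against $f_k$ and summing with weights $p_k$ preserves this inequality. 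This yields $J_\lambda(\pi^*_\lambda)\le J_\lambda(\pi)$.

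Two technical points must be settled for this comparison to be valid. The first is that $\pi^*_\lambda$ is measurable. This holds because $\{\beta: c_s+\lambda\varepsilon_{s,k}(\beta)>r_{m,k}\}$ is a superlevel set of the measurable function $\varepsilon_{s,k}$, and $k$ ranges over a finite set. The second is that all integrands are bounded, since $\varepsilon_{s,k},\bar{\varepsilon}_{m,k}\in[0,1]$. Consequently every expectation is finite and the comparison is legitimate.

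The main obstacle is the conditioning step in the first paragraph, namely justifying $\mathbb{E}[L\mid \hat{\boldsymbol{a}}^s=k,\beta_k=\beta,D]$ branch by branch. When $D=0$, this requires that the policy's internal randomization be independent of the true label given $(k,\beta)$, so that the conditional error is $\varepsilon_{s,k}(\beta)$. When $D=1$, it requires that $\hat{\boldsymbol{a}}^m$ be conditionally independent of $(\beta,D)$, so that the conditional error is $\bar{\varepsilon}_{m,k}$. We will state both facts explicitly as consequences of the policy depending only on observables plus exogenous randomness, together with the assumed independence of mLLM errors. Once these are in place, the remainder of the argument is pointwise minimization of an affine function.
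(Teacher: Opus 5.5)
Your proposal is correct and follows essentially the same route as the paper: compare the conditional risks $r_s(k,\beta)$ and $r_m$ pointwise at each observable $(k,\beta)$, then integrate over the distribution of $(\hat{\boldsymbol{a}}^s,\beta_{\hat{\boldsymbol{a}}^s})$. Your affine-in-$\pi(k,\beta)$ formulation is a compact version of the paper's three-case split. It also covers randomized policies, measurability, and the branch-wise conditioning step explicitly, which the paper leaves implicit.
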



\begin{proof}
We show that $\pi^*_\lambda$ is a global minimizer of $J_\lambda$ over all policies. The argument is pointwise and uses only the conditional risks $r_s$ and $r_m$. Recall that for a given observation $(\hat{\boldsymbol{a}}^s,\beta_{\hat{\boldsymbol{a}}^s})=(k,\beta)$, the
conditional risks of the two actions are
\begin{equation}
r_s(k,\beta) = c_s + \lambda \varepsilon_{s,k}(\beta)
\quad\text{and}\quad
r_m      = c_s + c_m + \lambda \bar{\varepsilon}_{m,k}\,.
\end{equation}
Any (possibly randomized) policy $\pi$ induces a decision $D\in\{0,1\}$ at
$(k,\beta)$, where $D=0$ means ``accept'' and $D=1$ means ``defer''. The
associated conditional contribution to the Lagrangian loss is
\begin{equation}
R(k,\beta;D) = \mathbb{E}[c_s + c_m D + \lambda L \mid \hat{\boldsymbol{a}}^s=k,\beta_k=\beta, D]
= \begin{cases}
   r_s(k,\beta) & \text{if } D=0,\\
   r_m       & \text{if } D=1
  \end{cases}
\end{equation}

By definition, the Bayes policy $\pi^*_\lambda$ chooses, at each $(k,\beta)$,
\begin{equation}
D^* := \pi^*_\lambda(k,\beta)
=
\begin{cases}
0 & \text{if } r_s(k,\beta) \le r_m,\\
1 & \text{if } r_s(k,\beta) > r_m
\end{cases}
\end{equation}

Let $\pi'$ be any other policy, and let $D'=\pi(k,\beta)$ be its
decision at $(k,\beta)$. The excess conditional loss is defined as 

\begin{equation}
\Delta(k,\beta) = R(k,\beta;D'(k,\beta)) - R(k,\beta;D^*(k,\beta)).
\end{equation}

There are three exhaustive cases:

\begin{enumerate}
  \item \emph{Case 1: $\pi'$ and $\pi^*_\lambda$ choose the same action.} \\
    Then $D'(k,\beta) = D^*(k,\beta)$ and hence
    \[
    \Delta(k,\beta) = 0.
    \]

  \item \emph{Case 2: $\pi'$ accepts while $\pi^*_\lambda$ defers.} \\
    Here $D'(k,\beta)=0$ and $D^*(k,\beta)=1$. By the definition of $\pi^*_\lambda$,
    $D^*(k,\beta)=1$ implies $r_m \le r_s(k,\beta)$. Therefore, 
    \[
    \Delta(k,\beta)
    = R(k,\beta;0) - R(k,\beta;1)
    = r_S(k,\beta) - r_M(k) \;\ge 0,
    \]

  \item \emph{Case 3: $\pi'$ defers while $\pi^*_\lambda$ accepts.} \\
    Here $D'(k,\beta)=1$ and $D^*(k,\beta)=0$. By the definition of $\pi^*_\lambda$,
    $D^*(k,\beta)=0$ implies $r_s(k,\beta) \le r_m$. Therefore
    \[
    \Delta(k,\beta)
    = R(k,\beta;1) - R(k,\beta;0)
    = r_m - r_s(k,\beta) \;\ge 0,
    \]
\end{enumerate}

In all three cases we have $\Delta(k,\beta)\ge 0$, and $\Delta(k,\beta)>0$
whenever $\pi$ and $\pi^*_\lambda$ choose different actions at a point where
$r_s(k,\beta)\neq r_m$.

Integrating w.r.t.\ the joint distribution of $(\hat{\boldsymbol{a}}^s,\beta_{\hat{\boldsymbol{a}}^s})$ gives:
\begin{equation}
J_\lambda(\pi') - J_\lambda(\pi^*_\lambda)
= \mathbb{E}\big[ R(\hat{\boldsymbol{a}}^s,\beta_{\hat{\boldsymbol{a}}^s};D') - R(\hat{\boldsymbol{a}}^s,\beta_{\hat{\boldsymbol{a}}^s};D^*) \big]
= \mathbb{E}[\Delta(\hat{\boldsymbol{a}}^s,\beta_{\hat{\boldsymbol{a}}^s})] \;\ge 0.
\end{equation}

Thus no policy $\pi$ can have strictly smaller Lagrangian risk than
$\pi^*_\lambda$, and any policy with the same risk must agree with
$\pi^*_\lambda$ almost surely except possibly on the set $\{(k,\beta): r_s(k,\beta)=r_m\}$. This proves that $\pi^*_\lambda$ is a
(global) Bayes-optimal minimizer of $J_\lambda$.
\end{proof}

The decision rule of $\pi^*_\lambda$ can be written explicitly. The inequality
$r_s(k,\beta)\le r_m$ is
\begin{equation}
c_s + \lambda \varepsilon_{s,k}(\beta)
\;\le\;
c_s + c_m + \lambda \bar{\varepsilon}_{m,k}
\quad\Longleftrightarrow\quad
\varepsilon_{s,k}(\beta) - \bar{\varepsilon}_{m,k} \le \frac{c_m}{\lambda}.
\end{equation}

Thus,
\begin{equation}
\pi^*_\lambda(k,\beta) =
\begin{cases}
0 & \text{if } \varepsilon_{s,k}(\beta) - \bar{\varepsilon}_{m,k}
                \le \dfrac{c_m}{\lambda}, \\
1 & \text{otherwise}.
\end{cases}
\end{equation}

Equivalently:

\begin{equation}
  \label{eq:bayes-acceptance-region}
  \text{sLLM is accepted at $(k,\beta)$ iff}
  \quad
  \Delta_k(\beta) \le \frac{c_m}{\lambda},
\end{equation}
where $\Delta_k(\beta) = \varepsilon_{s,k}(\beta)-\bar{\varepsilon}_{m,k}$ is the
\emph{excess error} of sLLM over mLLM.

\paragraph{Interpretation.}
The Bayes-optimal policy accepts the small model at $(k,\beta)$ whenever its
excess error over the master is smaller than the cost-normalized margin
$c_m/\lambda$; it defers if sLLM is sufficiently worse than mLLM to justify
paying the additional cost $c_m$. 

\subsection{From Lagrangian minimizers back to the constrained problem}

Combining Proposition~\ref{prop:pareto-lagrange} and
Proposition~\ref{prop:bayes-rule}, we obtain:

\begin{theorem}[Lagrangian characterization of optimal policies]
  \label{thm:lagrange-characterization}
  Let $\pi^*$ be a policy whose risk--cost vector $R(\pi^*)$ is
  Pareto-optimal in $\mathcal{R}$. Then there exists $\lambda\ge 0$ such that
  $\pi^*$ coincides
  with the Bayes rule $\pi^*_\lambda$, i.e.,
  \[
  \pi^*(k,\beta) =
  \begin{cases}
   0 & \text{if } \varepsilon_{s,k}(\beta) - \bar{\varepsilon}_{m,k}
                    \le c_m/\lambda,\\
   1 & \text{otherwise}.
  \end{cases}
  \]
\end{theorem}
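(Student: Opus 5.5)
The argument chains Proposition~\ref{prop:pareto-lagrange} with Proposition~\ref{prop:bayes-rule}, and then upgrades ``both are minimizers of $J_\lambda$'' to ``they coincide''.

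\textbf{Step 1.} Fix a policy $\pi^*$ whose vector $R(\pi^*)$ is Pareto-optimal. Since $\mathcal{R}$ is convex by Lemma~\ref{lem:convexity}, Proposition~\ref{prop:pareto-lagrange} supplies a $\lambda\ge 0$ with $\pi^*\in\arg\min_\pi J_\lambda(\pi)$.

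\textbf{Step 2.} Treat the degenerate case $\lambda=0$ separately. Then $J_0(\pi)=c_s+c_m\Pr(D=1)$, so $\pi^*$ defers with probability zero. This is the threshold rule with margin $c_m/\lambda=+\infty$, which accepts everywhere. I will state explicitly that this convention is adopted for $\lambda=0$.

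\textbf{Step 3.} For $\lambda>0$, Proposition~\ref{prop:bayes-rule} gives $J_\lambda(\pi^*_\lambda)\le J_\lambda(\pi)$ for every $\pi$, so $J_\lambda(\pi^*)=J_\lambda(\pi^*_\lambda)$. The key identity is
\[
J_\lambda(\pi^*)-J_\lambda(\pi^*_\lambda)=\mathbb{E}\big[\Delta(\hat{\boldsymbol{a}}^s,\beta_{\hat{\boldsymbol{a}}^s})\big].
\]
For randomized $\pi^*$ I will write $\Delta$ as the convex combination $(1-\pi^*(k,\beta))\,r_s(k,\beta)+\pi^*(k,\beta)\,r_m-\min\{r_s(k,\beta),r_m\}$. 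This integrand is nonnegative, so the identity forces $\Delta=0$ almost surely. Hence $\pi^*(k,\beta)=\pi^*_\lambda(k,\beta)$ for almost every $(k,\beta)$ outside the tie set $T_\lambda=\{(k,\beta): \Delta_k(\beta)=c_m/\lambda\}$. That is exactly the displayed rule, read as accepting iff $\varepsilon_{s,k}(\beta)-\bar{\varepsilon}_{m,k}\le c_m/\lambda$.

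\textbf{Main obstacle.} The difficulty is the tie set $T_\lambda$. On $T_\lambda$ the two actions have equal conditional risk, so $\pi^*$ may legitimately randomize there. That randomization can even be necessary to meet the budget exactly, since Pareto points on a flat face of $\mathcal{R}$ are mixtures. So ``coincides'' must be understood as almost-sure agreement off $T_\lambda$, with arbitrary (possibly randomized) behaviour on $T_\lambda$. I will either state this explicitly or impose the mild assumption that $T_\lambda$ has probability zero, e.g.\ when $\Delta_k$ has no flat pieces where $f_k>0$. Under that assumption the agreement is genuinely almost sure.

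\textbf{Step 4 (threshold structure).} Finally, I will connect this to the per-class threshold form of Theorem~\ref{thm:mt-no}. If $\varepsilon_{s,k}$ is non-increasing in $\beta$ (the monotone-risk assumption), the acceptance set $\{\beta:\Delta_k(\beta)\le c_m/\lambda\}$ is an up-ray $[\theta_k,1]$. Its endpoint is $\theta_k=\inf\{\beta:\Delta_k(\beta)\le c_m/\lambda\}$, so the policy is a per-class threshold policy with common level $t=c_m/\lambda$.
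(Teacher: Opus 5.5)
Your proposal is correct and follows the paper's route. The paper proves this theorem simply by combining Proposition~\ref{prop:pareto-lagrange} with Proposition~\ref{prop:bayes-rule}, and your conclusion that $\pi^*$ agrees with $\pi^*_\lambda$ almost surely off the tie set $\{r_s(k,\beta)=r_m\}$ is exactly the uniqueness remark that closes the paper's proof of Proposition~\ref{prop:bayes-rule}; your $\lambda=0$ convention and tie-set qualification make precise what the theorem's word ``coincides'' glosses over.

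There is one omission, at the endpoint opposite to $\lambda=0$. The minimum-error Pareto point may be supported only by a vertical line: the frontier's slope tends to $-\infty$ there, e.g.\ when the law of $\Delta_k(\beta)$ has positive density just above $0$. It then minimizes no $J_\lambda$ with finite $\lambda$, and you need the convention $c_m/\lambda=0$ (accept iff $\Delta_k(\beta)\le 0$). Proposition~\ref{prop:pareto-lagrange}, as stated, silently omits this case, so citing it does not cover it.
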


In particular, any optimal solution of the constrained problem
\eqref{eq:non-oracle-problem} that lies on the error--cost frontier must accept
sLLM exactly on the set
\[
A_\lambda = \{(k,\beta): \varepsilon_{s,k}(\beta) - \bar{\varepsilon}_{m,k}
                        \le c_m/\lambda\}
\]
for some $\lambda\ge 0$. 

At this stage, we have characterized the optimal acceptance region in terms
of a \emph{threshold in excess error} $\Delta_k(\beta)$. We now show how a
monotone-risk assumption allows us to represent $A_\lambda$ as a per-class
\emph{single threshold in $\beta$}.


\subsection{Monotone per-class error curves}

For each class $k$, consider the map
$
\beta \;\mapsto\; \varepsilon_{s,k}(\beta)
= \Pr(\boldsymbol{a} \neq k\mid \hat{\boldsymbol{a}}^s=k,\beta_k=\beta).
$

We say that the per-class score $\beta_k$ satisfies the \emph{monotone-risk
property} if, for each $k$, this map is (essentially) non-increasing on $[0,1]$:
\begin{equation}
  \label{eq:monotone-risk}
  \beta_1 \le \beta_2 \;\Rightarrow\;
  \varepsilon_k^s(\beta_1) \ge \varepsilon_k^s(\beta_2)
  \quad\text{for almost all $\beta_1,\beta_2$}.
\end{equation}

Equivalently, the correctness probability
\[
 \Pr(\boldsymbol{a}= k \mid \hat{\boldsymbol{a}}^s=k,\beta_k=\beta) = 1 - \varepsilon_{s,k}(\beta)
\]
is non-decreasing in $\beta$: higher $\beta$ means higher probability that the
prediction is correct, within class $k$.


\subsection{From error threshold to confidence-threshold}

The Bayes acceptance condition
\eqref{eq:bayes-acceptance-region} can be written as
\begin{equation}
\Delta_k(\beta) \le c_m/\lambda
\quad\Longleftrightarrow\quad
\varepsilon_{s,k}(\beta) \le \bar{\varepsilon}_{m,k} + \frac{c_m}{\lambda} = t_k(\lambda).
\end{equation}

For any $t\in\mathbb{R}$, define the sublevel set
\[
S_k(t) := \{\beta\in[0,1] : \varepsilon_{s,k}(\beta) \le t\}\,.
\]

Under Assumption~\eqref{eq:monotone-risk}, each $S_k(t)$ is either empty,
all of $[0,1]$, or an interval of the form $[\theta,1]$. More precisely:





\begin{lemma}[Structure of sublevel sets under monotone risk]\label{lem:sublevel-interval}
Suppose $\varepsilon_{s,k} : [0, 1] \to \mathbb{R}$ is non-increasing. For any $t \in \mathbb{R}$, the set $S_k(t)$ is either empty, $[0, 1]$, or an interval whose left endpoint is
$
\theta = \inf\, S_k(t)
$. Specifically,
$
S_k(t) = (\theta, 1]$ or $ S_k(t) = [\theta, 1]$.
\end{lemma}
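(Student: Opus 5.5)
The key observation is that, for a non-increasing function, a sublevel set is an up-set in $[0,1]$: it contains every point to the right of any of its members. Fix $k$ and $t\in\mathbb{R}$, and write $S=S_k(t)$.

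\emph{Step 1 (up-closure).} Let $\beta\in S$ and $\beta'\in[\beta,1]$. Monotonicity gives
\[
\varepsilon_{s,k}(\beta')\le \varepsilon_{s,k}(\beta)\le t ,
\]
so $\beta'\in S$.

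\emph{Step 2 (case split).} If $S=\emptyset$, we are in the first case and there is nothing to prove. Otherwise set $\theta=\inf S\in[0,1]$.

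\emph{Step 3 (identification of $S$).} First, $S\subseteq[\theta,1]$ by the definition of the infimum. Next, take any $\beta\in(\theta,1]$. Since $\beta>\theta=\inf S$, there is some $\beta_0\in S$ with $\beta_0<\beta$. Step 1 then gives $\beta\in S$, so $(\theta,1]\subseteq S$. Together these yield
\[
(\theta,1]\subseteq S\subseteq[\theta,1],
\]
and $S$ is therefore either $(\theta,1]$ or $[\theta,1]$, according to whether $\theta\in S$. When $\theta=0$ and $0\in S$, this gives $S=[0,1]$, which is the remaining listed case.

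\emph{Which endpoint occurs.} Whether $\theta\in S$ depends only on whether $\varepsilon_{s,k}(\theta)\le t$. No continuity is assumed, so both outcomes are possible: for example, a non-increasing step function with a jump at $\theta$ can take a value above $t$ exactly at $\theta$. This endpoint case is the only delicate point, and I handle it by not forcing closedness and instead allowing both forms in the conclusion.

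\emph{Monotonicity only almost everywhere.} If one uses the "almost all" version of \eqref{eq:monotone-risk}, I would first replace $\varepsilon_{s,k}$ by a non-increasing version that agrees with it almost everywhere. This changes neither the risks $r_s(k,\beta)$ under $f_k\,d\beta$ nor the Bayes acceptance region $A_\lambda$ up to a null set. The argument above then applies to that version, which matches the lemma's hypothesis that $\varepsilon_{s,k}$ is non-increasing.
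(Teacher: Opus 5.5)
Your proof is correct and follows the same route as the paper's: both use the infimum together with monotonicity to show $(\theta,1]\subseteq S_k(t)\subseteq[\theta,1]$, and decide the endpoint by whether $\theta\in S_k(t)$. Your step-function example and your remark on almost-everywhere monotonicity are harmless additions; the paper instead notes that right-continuity forces $S_k(t)=[\theta,1]$.
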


\begin{proof}
If $S_k(t) = \varnothing$ or $S_k(t) = [0,1]$, the conclusion follows immediately.

Assume now that $S_k(t)$ is non-empty and not all of $[0,1]$. 
Define $\theta = \inf S_k(t)$. Since $S_k(t)$ is a non-empty subset of $[0,1]$, we have $\theta \in [0,1]$.

Let $\beta > \theta$. By definition of the infimum, there exists $\beta' \in S_k(t)$ such that $\theta \leq \beta' < \beta$. 
Because $\varepsilon_{s,k}$ is non-increasing, $\varepsilon_{s,k}(\beta) \leq \varepsilon_{s,k}(\beta') \leq t$. 
Hence $\beta \in S_k(t)$, so $(\theta, 1] \subseteq S_k(t)$.

If $\beta < \theta$, then $\beta$ is less than the infimum of $S_k(t)$, so $\beta \notin S_k(t)$.
Thus $S_k(t) \subseteq [\theta, 1]$.

Combining these inclusions, we have $S_k(t) = (\theta, 1]$ if $\theta \notin S_k(t)$, and $S_k(t) = [\theta, 1]$ if $\theta \in S_k(t)$.

If $\varepsilon_{s,k}(\cdot)$ is additionally right-continuous or continuous, then the existence of a sequence $\beta_n \downarrow \theta$ with $\varepsilon_k^s(\beta_n) \leq t$ implies $\varepsilon_{s,k}(\theta) \leq t$, so in that case $S_k(t) = [\theta, 1]$ always holds.
\end{proof}

Applying Lemma~\ref{lem:sublevel-interval} to $t_k(\lambda) = \bar{\varepsilon}_{m,k} + c_m /\lambda$ shows that, for each $k$, the acceptance set in $\beta$ is (up to null sets) an interval of the form $[\theta_k(\lambda), 1]$ or $(\theta_k(\lambda), 1]$. 
That is, there exists a scalar threshold $\theta_k(\lambda) \in [0, 1]$ such that:

\begin{equation}
\pi_\lambda^*(k, \beta) =
\begin{cases} 
0 & \beta > \theta_k(\lambda), \\
1 &  \beta < \theta_k(\lambda)
\end{cases}
\end{equation}

and the decision at \(\beta = \theta_k(\lambda)\) is either 0 or 1 without affecting the Bayes risk. 
If, in addition, \(\varepsilon_{s,k}\) is right-continuous or continuous at \(\theta_k(\lambda)\), then the acceptance set is exactly \([\theta_k(\lambda), 1]\).

Thus, under the monotone-risk assumption, the Bayes-optimal policies for the
Lagrangian problem are \emph{per-class single-threshold policies in $\beta$}.


Now that we know that optimal policies are of threshold form in $\beta$, we can
parameterize them by $\boldsymbol{\theta}\in[0,1]^K$ as in the oracle framework, and
compute the corresponding error and cost integrals.

Following the multi-thresholds policy definition in Section~\ref{subsec:multi_class_clf}  
, we obtain
\begin{align}
  \mathrm{Error}(\boldsymbol{\theta})
  &= \sum_{k=1}^K p_k\bigg[
      \int_{\theta_k}^1 \varepsilon_{s,k}(\beta) f_k(\beta)\,d\beta
    + \bar{\varepsilon}_{m,k} \int_0^{\theta_k} f_k(\beta)\,d\beta
    \bigg]\,, \label{eq:err-theta-final}\\
  \mathrm{Cost}(\boldsymbol{\theta})
  &= c_s + c_m \sum_{k=1}^K p_k \int_0^{\theta_k} f_k(\beta)\,d\beta\,.
   \label{eq:cost-theta-final}
\end{align}


\subsection{Lagrangian in terms of \texorpdfstring{$\boldsymbol{\theta}$}{}}

For $\lambda \geq 0$, define
\begin{equation}
L_\lambda(\boldsymbol{\theta})
:= \mathrm{Cost}(\boldsymbol{\theta}) +
   \lambda\big(\mathrm{Error}(\boldsymbol{\theta}) - b\big).
\end{equation}

Up to constants independent of $\theta$, 
we can write $L_\lambda(\theta)$ as
\begin{equation}
L_\lambda(\theta)
= c_s + \lambda\Big( \sum_{k=1}^K p_k \varepsilon_{M,k} - b \Big)
 + \sum_{k=1}^K p_k
    \Big[ c_m \int_0^{\theta_k} f_k(\beta)\,d\beta
         + \lambda \int_{\theta_k}^1 \Delta_k(\beta) f_k(\beta)\,d\beta
    \Big],
\end{equation}
where $\Delta_k(\beta) = \varepsilon_{s,k}(\beta)-\bar{\varepsilon}_{m,k}$.

Differentiating 
w.r.t.\ $\theta_k$ gives
\begin{align}
\frac{\partial}{\partial\theta_k}
\Big( c_m p_k\int_0^{\theta_k} f_k(\beta)\,d\beta \Big)
&= c_m p_k f_k(\theta_k)\,,\\
\frac{\partial}{\partial\theta_k}
\Big( \lambda p_k\int_{\theta_k}^1 \Delta_k(\beta) f_k(\beta)\,d\beta \Big)
&= -\lambda p_k \Delta_k(\theta_k) f_k(\theta_k)\,.
\end{align}

Assuming $p_k>0$ and $f_k(\theta_k)>0$, the stationarity
condition $\partial L_\lambda/\partial\theta_k = 0$ yields
\begin{equation}
  \label{eq:boundary-equalization-excess}
  \Delta_k(\theta_k^*)
  = \frac{c_m}{\lambda}
  \quad\text{for all } k \text{ with } 0<\theta_k^*<1\,.
\end{equation}

Thus, at any interior optimum, the \emph{excess error} of sLLM over mLLM at the
thresholds is equalized across all classes $k$:
\[
\varepsilon_{s,k}(\theta_k^*) - \bar{\varepsilon}_{m,k}
= \varepsilon_{s,k'}(\theta_{k'}^*) - \bar{\varepsilon}_{m,k'}
\quad\forall\,k,k' \text{ with } 0<\theta_k^*,\theta_{k'}^*<1.
\]

In the oracle case, $\bar{\varepsilon}_{m,k} = 0$, so
\eqref{eq:boundary-equalization-excess} reduces to equalization of the boundary
errors $\varepsilon_k(\theta_k^*)=c_m/\lambda$ as in the original framework. 

So, under the monotone-risk assumption, the per-class
threshold policy is optimal \emph{among all measurable policies that gate using $(\hat{\boldsymbol{a}}^s,\beta_{\hat{\boldsymbol{a}}^s})$}.

\subsection{Computation strategy via Lagrange}

The Bayes acceptance condition implies that for each class $k$,
the threshold $\theta_k(\lambda)$ (in the non-oracle case) is determined by
\begin{equation}
  \label{eq:theta-k-lambda-general}
  \varepsilon_{s,k}(\theta_k(\lambda)) - \bar{\varepsilon}_{m,k} = \frac{c_m}{\lambda}\,,
\end{equation}
whenever $0<\theta_k(\lambda)<1$ (with $\theta_k(\lambda)=0$ or $1$ if the
equation has no solution in $(0,1)$ and the Bayes region collapses to always
accept or always defer for that class).

If each $\varepsilon_{s,k}$ is invertible, and we can write
\begin{equation}
\theta_k(\lambda)
= (\varepsilon_{s,k})^{-1}\Big(\bar{\varepsilon}_{m,k} + \frac{c_m}{\lambda}\Big).
\end{equation}

Thus, for fixed $\lambda$, the thresholds $\theta_k(\lambda)$ are obtained by
\emph{per-class inversion} of the error curves at the common ``excess-error
level'' $\bar{\varepsilon}_{m,k}+c_m/\lambda$.

Plugging $\theta_k(\lambda)$ into \eqref{eq:err-theta-final} yields
\begin{equation}
\mathrm{Error}(\boldsymbol{\theta}(\lambda))
= \sum_{k=1}^K p_k\bigg[
    \int_{\theta_k(\lambda)}^1 \varepsilon_{s,k}(\beta) f_k(\beta)\,d\beta
  + \bar{\varepsilon}_{m,k} \int_0^{\theta_k(\lambda)} f_k(\beta)\,d\beta
  \bigg].
\end{equation}

Given an error budget $b\in[0,1]$, the target is to find $\lambda^*$ such that
\[
\mathrm{Error}(\boldsymbol{\theta}(\lambda^*)) = b.
\]

This is a scalar equation in $\lambda$. Under mild regularity conditions,
$\mathrm{Error}(\boldsymbol{\theta}(\lambda))$ is monotone in $\lambda$ (as $\lambda$
increases, the Bayes rule becomes more conservative and defers more often,
decreasing the use of sLLM in high-error regions), so this equation can be
solved by standard one-dimensional root-finding (bisection, Brent, etc.).

Once $\lambda^*$ is found, the optimal thresholds are
\[
\theta_k^* = \theta_k(\lambda^*)
= (\varepsilon_{s,k})^{-1}\Big(\bar{\varepsilon}_{m,k} + \frac{c_m}{\lambda^*}\Big).
\]

This is the direct analogue of the oracle case, where the target level is
$c_m/\lambda$ instead of $\bar{\varepsilon}_{m,k}+c_m/\lambda$.

\section{Optimality of single-threshold policy over generalized policies} \label{app:single_general_policies}

We now generalize the simpler single-threshold framework by allowing the existence arbitrary decision policies. We again work in the under the same setting and retain the same cost and error formulations as in the main text. We show that, under a mild monotone-risk assumption on the sLLM’s confidence score, the
Bayes-optimal deferral policy is a single-threshold one. The presentation will be more brief as it uses similar notions as Appendix \ref{app:multi_general_policies}

\paragraph{Small model.}

For a given query, the small model now outputs $(\hat{\boldsymbol{a}}^s, \beta$, where $\hat{\boldsymbol{a}}^s \in \mathcal{A}$ is the generated text and $\beta \in [0,1]$ is the scalar confidence for the `quality' of the generation. Let
$f(\beta)$ be the density of $\beta$ and the \emph{error curve} of sLLM:
$
\varepsilon_{s}(\beta) = \Pr({\boldsymbol{a}} \neq \hat{\boldsymbol{a}}^s \mid \beta)
$.

\paragraph{Master model.}

Let $\hat{\boldsymbol{a}}^m$ denote the generation of mLLM when it is queried. The error of mLLM is independent of sLLM:
\begin{equation}
{\varepsilon}_{m}(\gamma) = \Pr({\boldsymbol{a}} \neq \hat{\boldsymbol{a}}^m \mid \gamma)\,,
\end{equation}

and $\mathbb{E}[\varepsilon_{m}(\gamma)]$ is treated as a constant $\bar{\varepsilon}_{m}$. Note that we do \emph{not} assume $\bar{\varepsilon}_{m} \le \varepsilon_{s}(\beta)$ for all $\beta$; mLLM may be better or worse than sLLM depending on the region of the score space. \newline

\noindent Again, rather than restricting ourselves a priori to threshold policies, we now allow
any measurable policy $\pi:[0,1]\to\{0,1\}$, where
\begin{equation}
\pi(\beta) =
\begin{cases}
0 & \text{accept sLLM at score $\beta$},\\
1 & \text{defer to mLLM at score $\beta$}.
\end{cases}
\end{equation}

\paragraph{Cost and error formulations for a general policy.}

Under $\pi$, the expected cost is
\begin{align}
\mathrm{Cost}(\pi)
&= \mathbb{E}\big[c_s + c_m \pi(\beta)\big] \notag\\
&= c_s
 + c_m \int_0^1 \pi(\beta) f(\beta)\,d\beta.
\label{eq:binary-cost-pi}
\end{align}

The expected error is
\begin{align}
\mathrm{Error}(\pi)
&= \mathbb{E}\Big[
     \varepsilon_s(\beta)\,\mathbf{1}[\pi(\beta)=0]
   + \bar\varepsilon_m \,\mathbf{1}[\pi(\beta)=1]
   \Big] \notag\\
&= \int_0^1
    \Big(
      \varepsilon_s(\beta)\mathbf{1}[\pi(\beta)=0]
     +\bar\varepsilon_m \mathbf{1}[\pi(\beta)=1]
    \Big) f(\beta)\,d\beta.
\label{eq:binary-error-pi}
\end{align}

The constrained problem can thus be written as:
\begin{equation}
\min_{\pi} \mathrm{Cost}(\pi)
\quad\text{s.t.}\quad
\mathrm{Err}(\pi)\le b\,.
\end{equation}





\subsection{Lagrangian formulation}

Introducing a Lagrange multiplier $\lambda\ge 0$:
\begin{equation}
\mathcal{L}(\pi,\lambda)
= \mathrm{Cost}(\pi) + \lambda(\mathrm{Error}(\pi)-b).
\end{equation}

For fixed $\lambda$, minimizing $\mathcal{L}(\pi,\lambda)$ over $\pi$ is
equivalent to minimizing:
\begin{equation}
J_\lambda(\pi)
= \mathbb{E}\big[c_s + c_mD+ \lambda L\big]\,
\end{equation}
where $L$ is the final error indicator. That is, $J_\lambda$ is the
expected value of the loss function $\ell_\lambda$:
\begin{equation}
\ell_\lambda(D,{\boldsymbol{a}}, \hat{\boldsymbol{a}}^s, \hat{\boldsymbol{a}}^m)
=
\begin{cases}
c_s + \lambda \,\mathds{1}[\hat{\boldsymbol{a}}^s\neq {\boldsymbol{a}}] & \text{if } D=0
  \ (\text{accept}), \\
c_s + c_m + \lambda \,\mathds{1}[\hat{\boldsymbol{a}}^m\neq {\boldsymbol{a}}] & \text{if } D=1
  \ (\text{defer})\,.
\end{cases}
\end{equation}


\subsection{Bayes-optimal deferral rule in the binary case}

For a fixed score $\beta$, the conditional risks of the two actions are:

\begin{align*}
r_s(\beta) &= c_s + \lambda \varepsilon_s(\beta),
\quad\text{(accept sLLM)}\\
r_m        &= c_s + c_m + \lambda \bar\varepsilon_m,
\quad\text{(defer to mLLM)}.
\end{align*}

\begin{proposition}[Bayes-optimal deferral rule]
  For fixed $\lambda>0$, define
  \[
  \pi^*_\lambda(\beta)
  =
  \begin{cases}
   0 & \text{if } r_s(\beta) \le r_m,\\
   1 & \text{if } r_s(\beta) > r_m.
  \end{cases}
  \]
  Then $\pi^*_\lambda$ is a global minimizer of $J_\lambda(\pi)$ over all
  measurable policies $\pi$.
\end{proposition}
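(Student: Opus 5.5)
The argument is pointwise, exactly as in the proof of Proposition~\ref{prop:bayes-rule}, now specialized to a scalar score with no class index. First I rewrite $J_\lambda(\pi)$ in terms of the score alone. The deferral decision $D=\pi(\beta)$ depends only on $\beta$. I condition on $\beta$ and use the law of total expectation together with the definitions of $\varepsilon_s(\beta)$ and $\bar\varepsilon_m$; here the independence assumption gives $\mathbb{E}[\mathds{1}[\hat{\boldsymbol{a}}^m\neq\boldsymbol{a}]\mid\beta]=\bar\varepsilon_m$. This yields
\begin{equation*}
J_\lambda(\pi)=\int_0^1\Big(r_s(\beta)\,\mathds{1}[\pi(\beta)=0]+r_m\,\mathds{1}[\pi(\beta)=1]\Big)f(\beta)\,d\beta ,
\end{equation*}
which is just \eqref{eq:binary-cost-pi} plus $\lambda$ times \eqref{eq:binary-error-pi}. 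For a randomized policy $\pi:[0,1]\to[0,1]$ the integrand becomes the convex combination $(1-\pi(\beta))r_s(\beta)+\pi(\beta)r_m$.

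Second, I compare integrands pointwise. For every $\beta$, the integrand under $\pi^*_\lambda$ equals $\min\{r_s(\beta),r_m\}$, since $\pi^*_\lambda$ accepts exactly when $r_s(\beta)\le r_m$. For any other policy the integrand is a value, or a convex combination, of $r_s(\beta)$ and $r_m$, and so is at least this minimum. The three cases (same action, $\pi$ accepts where $\pi^*_\lambda$ defers, $\pi$ defers where $\pi^*_\lambda$ accepts) each give a nonnegative excess $\Delta(\beta)$, exactly as before. Since $f\ge0$, integrating gives $J_\lambda(\pi)-J_\lambda(\pi^*_\lambda)=\int_0^1\Delta(\beta)f(\beta)\,d\beta\ge0$. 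I will also note that equality forces agreement $f$-a.e. outside the tie set $\{r_s(\beta)=r_m\}$.

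The only point needing care is measurability, so that $\pi^*_\lambda$ is an admissible policy and the integrals are defined. $\pi^*_\lambda$ is the indicator of $\{\beta:\varepsilon_s(\beta)-\bar\varepsilon_m>c_m/\lambda\}$, a superlevel set of the measurable function $\varepsilon_s$, and hence Borel. All integrands are bounded, because $\varepsilon_s\in[0,1]$ and $f$ is integrable. I expect no real obstacle beyond stating these facts. Finally, I will record the explicit form: accept iff $\Delta(\beta):=\varepsilon_s(\beta)-\bar\varepsilon_m\le c_m/\lambda$. This prepares the reduction to a single threshold via Lemma~\ref{lem:sublevel-interval} under monotone risk.
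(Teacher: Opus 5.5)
Your proposal is correct and follows the paper's proof: rewrite $J_\lambda$ as an integral of the conditional risks against $f$, observe that $\pi^*_\lambda$ attains $\min\{r_s(\beta),r_m\}$ pointwise, use the same three-case comparison to get $\Delta(\beta)\ge 0$, and integrate. Your extra remarks on randomized policies and on the measurability of $\pi^*_\lambda$ are harmless additions; one small nit is that the integrands $r_s f$ and $r_m f$ are integrable rather than bounded, since $f$ itself need not be bounded.
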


\begin{proof}
Let $\pi'$ be any policy and $\pi^*_\lambda$ the policy defined above. For each
$\beta$, $D'(\beta)=\pi'(\beta)$ and $D^*(\beta)=\pi^*_\lambda(\beta)$. The
associated conditional contribution of a deferral $D$ to the Lagrangian loss is
\begin{equation}
R(\beta;D) = \mathbb{E}[c_s + c_m D + \lambda L \mid \beta, D]
= \begin{cases}
   r_s(\beta) & \text{if } D=0,\\
   r_m       & \text{if } D=1
  \end{cases}
\end{equation}
Thus, the excess conditional loss is defined as: 

\begin{equation}
\Delta(\beta) = R(\beta;D'(\beta)) - R(\beta;D^*(\beta)).
\end{equation}

Since $\mathbb{E}[\ell_\lambda]$ equals $r_s(\beta)$ when $D=0$ and $r_m$ when $D=1$, we have
\begin{equation}
\Delta(\beta)
= \begin{cases}
  0                         & \text{if } D'(\beta)=D^*(\beta),\\
  r_s(\beta)-r_m \ge 0 & \text{if } D'(\beta)=0,\ D^*(\beta)=1,\\
  r_m-r_s(\beta) \ge 0 & \text{if } D'(\beta)=1,\ D^*(\beta)=0
  \end{cases}
\end{equation}
with strict inequality when $D(\beta)\neq D^*(\beta)$ and $r_s(\beta)\neq
r_m$. Thus $\Delta(\beta)\ge 0$ for all $\beta$.

Integrating w.r.t. the distribution of $\beta$ yields
\begin{equation}
J_\lambda(\pi') - J_\lambda(\pi^*_\lambda)
= \mathbb{E}[\Delta(\beta)] \ge 0\,,
\end{equation}
Hence $\pi^*_\lambda$ minimizes
$J_\lambda$ over all policies.
\end{proof}

The decision rule of $\pi^*_\lambda$ can be written explicitly:
\begin{align*}
r_s(\beta) \le r_m
&\iff c_s + \lambda \varepsilon_s(\beta)
     \le c_s + c_m + \lambda \bar\varepsilon_m \\
&\iff \varepsilon_s(\beta) - \bar\varepsilon_m \le \frac{c_m}{\lambda}\,.
\end{align*}


Then, for each $\beta$,
\begin{equation}
  \label{eq:binary-bayes-acceptance}
  \pi^*_\lambda(\beta) =
  \begin{cases}
    0 & \text{if } \varepsilon_s(\beta) - \bar\varepsilon_m \le c_m/\lambda,\\
    1 & \text{if } \varepsilon_s(\beta) - \bar\varepsilon_m > c_m/\lambda
  \end{cases}
\end{equation}

\subsection{Monotone-risk assumption and threshold structure in confidence score}

So far, $\pi^*_\lambda$ is defined as a pointwise decision in the error space
\eqref{eq:binary-bayes-acceptance}. We now identify the condition under which this Bayes rule coincides with a \emph{single-threshold policy} in $\beta$. 

\paragraph{Per-score monotone risk.}

We assume that the small-model error curve is non-increasing in the score:
\begin{equation}
  \label{eq:binary-monotone-risk}
  \beta_1 \le \beta_2 \;\Rightarrow\;
  \varepsilon_s(\beta_1) \ge \varepsilon_s(\beta_2)
  \quad\text{for almost all }\beta_1,\beta_2\in[0,1]\,.
\end{equation}
Equivalently, the correctness probability
$1-\varepsilon_s(\beta)$ is non-decreasing in $\beta$. This
monotone-risk property captures the idea that higher confidence scores should
not be more error-prone than lower ones. 

Under \eqref{eq:binary-monotone-risk}, the excess error
$\varepsilon_s(\beta)-\bar\varepsilon_m$ is also non-increasing
in $\beta$.

\paragraph{Shape of the acceptance set.}

For each $t\in\mathbb{R}$, define the sublevel set
\begin{equation}
S(t) = \{\beta\in[0,1]: \varepsilon_s(\beta) - \bar\varepsilon_m\le t\}\,.
\end{equation}

Since the excess error function is non-increasing (and $\beta$ is a continuous variable), each $S(t)$ is either empty, all of $[0,1]$,
or an interval of the form $[\theta(t),1]$More precisely, letting
\begin{equation}
\theta(t) := \inf\{\beta\in[0,1]: \varepsilon_s(\beta) - \bar\varepsilon_m\le t\}\,,
\end{equation}
we have $S(t)=[\theta(t),1]$ whenever $S(t)$ is non-empty and not all of
$[0,1]$.

For a given $\lambda$, the Bayes rule accepts on the set
\begin{equation}
S\big(c_m/\lambda\big)
= \{\beta : \varepsilon_s(\beta) - \bar\varepsilon_m\le c_m/\lambda\}\,.
\end{equation}
Thus, under monotone-risk, there exists a threshold $\theta(\lambda)$ such that
\begin{equation}
\pi^*_\lambda(\beta) =
\begin{cases}
0 & \text{if } \beta \ge \theta(\lambda),\\
1 & \text{if } \beta <   \theta(\lambda)
\end{cases}
\end{equation}
Thus, under the monotone-risk assumption on the small-model score, the Bayes-optimal policy for each $\lambda$ is a single-threshold policy in $\beta$.

\section{Continuity of the constraint function} \label{app:continuity}

We present a proof about the continuity of the constraint function of our optimization problems.

\begin{lemma}[Absolute continuity of the single-threshold oracle and non-oracle error functions]
\label{lem:absolute_continuity_g}
Let $f:[0,1]\to[0,\infty)$ be a probability density, i.e.\ $f$ is measurable and
$\int_0^1 f(\beta)\,d\beta = 1$. Let $\varepsilon_s:[0,1]\to[0,1]$ be measurable and
let $\bar\varepsilon_m\in[0,1]$ be a constant. Define the oracle and non-oracle error maps
\begin{align}
g(\theta) &= \int_{\theta}^{1}\varepsilon_s(\beta)\,f(\beta)\,d\beta\,,
\label{eq:g_def_ac}
\\
g_{\mathrm{NO}}(\theta) &= \int_{\theta}^{1}\varepsilon_s(\beta)\,f(\beta)\,d\beta
\;+\;\bar\varepsilon_m \int_{0}^{\theta} f(\beta)\,d\beta\,.
\label{eq:gno_def_ac}
\end{align}
Then $g$ and $g_{\mathrm{NO}}$ are absolutely continuous on $[0,1]$ (and hence continuous).
Moreover, both are differentiable almost everywhere and satisfy
\begin{align}
g'(\theta) &= -\,\varepsilon_s(\theta)\,f(\theta) \quad\text{for a.e. }\theta\in[0,1]\,,
\label{eq:g_deriv_ae}
\\
g_{\mathrm{NO}}'(\theta) &= -\big(\varepsilon_s(\theta)-\bar\varepsilon_m\big)\,f(\theta)
\quad\text{for a.e. }\theta\in[0,1]\,.
\label{eq:gno_deriv_ae}
\end{align}
\end{lemma}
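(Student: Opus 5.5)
The plan is to reduce both statements to the fundamental theorem of calculus for Lebesgue integrals. We write each map as a constant plus or minus an indefinite integral of an $L^1$ function. Since $f\ge 0$ is measurable with $\int_0^1 f=1$, we have $f\in L^1[0,1]$. Since $0\le\varepsilon_s\le 1$ is measurable, the product $h:=\varepsilon_s f$ is measurable and satisfies $0\le h\le f$, so $h\in L^1[0,1]$ as well. Setting $H(\theta)=\int_0^\theta h(\beta)\,d\beta$ and $F(\theta)=\int_0^\theta f(\beta)\,d\beta$, we obtain
\begin{equation*}
g(\theta)=H(1)-H(\theta),\qquad g_{\mathrm{NO}}(\theta)=H(1)-H(\theta)+\bar\varepsilon_m F(\theta).
\end{equation*}
This rewriting uses only the additivity of the Lebesgue integral over $[0,\theta]\cup[\theta,1]$; the single point $\{\theta\}$ has measure zero.

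Next I show that $H$ and $F$ are absolutely continuous on $[0,1]$. Fix $\epsilon>0$. By absolute continuity of the Lebesgue integral for the integrable $f$, there is $\delta>0$ with $\int_E f<\epsilon$ whenever $|E|<\delta$. Given finitely many disjoint intervals $(a_i,b_i)$ with $\sum(b_i-a_i)<\delta$, take $E=\bigcup(a_i,b_i)$. Then $\sum|F(b_i)-F(a_i)|=\int_E f<\epsilon$, and the bound $0\le h\le f$ gives the same estimate for $H$ with the same $\delta$. Linear combinations of absolutely continuous functions are absolutely continuous, and constants are trivially so. Hence $g$ and $g_{\mathrm{NO}}$ are absolutely continuous, and therefore continuous, indeed uniformly continuous on $[0,1]$.

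For the derivatives I invoke the Lebesgue differentiation theorem: for $u\in L^1[0,1]$, the function $\theta\mapsto\int_0^\theta u$ is differentiable at almost every $\theta$ with derivative $u(\theta)$. Applying this to $h$ and to $f$ gives $H'=\varepsilon_s f$ off a null set $N_1$ and $F'=f$ off a null set $N_2$. On the complement of $N_1\cup N_2$, which is still of full measure, linearity of differentiation yields $g'(\theta)=-\varepsilon_s(\theta)f(\theta)$ and $g_{\mathrm{NO}}'(\theta)=-\varepsilon_s(\theta)f(\theta)+\bar\varepsilon_m f(\theta)=-(\varepsilon_s(\theta)-\bar\varepsilon_m)f(\theta)$, which are the claimed formulas.

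The only delicate point is the derivative claim. Without continuity of $f$ or $\varepsilon_s$, the derivative cannot be obtained pointwise everywhere; it holds only almost everywhere, through Lebesgue differentiation. I must also take care to intersect the two full-measure sets rather than treating them separately. Everything else is routine measure theory resting on the integrability bound $0\le\varepsilon_s f\le f$.
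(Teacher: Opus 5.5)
Your proof is correct and follows essentially the same route as the paper. Both arguments use the bound $0\le \varepsilon_s f\le f$ for integrability, then absolute continuity of the Lebesgue integral over a finite union of disjoint intervals to show the indefinite (or tail) integrals are absolutely continuous, and finally a linear combination with $\bar\varepsilon_m F$ for $g_{\mathrm{NO}}$. The one difference is that you explicitly prove the a.e.\ derivative formulas via the Lebesgue differentiation theorem, intersecting the two full-measure sets; the paper's written proof states these formulas but never actually derives them.
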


\begin{proof}
Let $h(\beta)=\varepsilon_s(\beta)f(\beta)$. Since $0\le \varepsilon_s\le 1$ and $f\in L^1([0,1])$
with $\int_0^1 f=1$, we have $0\le h(\beta)\le f(\beta)$ and therefore $h\in L^1([0,1])$.

\smallskip
\noindent\textbf{Tail integrals of $L^1$ functions are absolutely continuous.}
Let $H(\theta)=\int_\theta^1 h(\beta)\,d\beta$. For any finite collection of pairwise disjoint intervals
$\{(a_i,b_i)\}_{i=1}^n\subset[0,1]$:
\begin{equation}
H(b_i)-H(a_i)=\int_{b_i}^1 h(\beta)\,d\beta-\int_{a_i}^1 h(\beta)\,d\beta
= -\int_{a_i}^{b_i} h(\beta)\,d\beta\,,
\end{equation}
hence
\begin{equation}
|H(b_i)-H(a_i)|\le \int_{a_i}^{b_i}|h(\beta)|\,d\beta\,.
\end{equation}
Summing and using disjointness yields
\begin{equation}
\sum_{i=1}^n |H(b_i)-H(a_i)| \;\le\; \int_{\cup_i (a_i,b_i)} |h(\beta)|\,d\beta.
\end{equation}
Because $h\in L^1([0,1])$, the integral $\int_E |h|$ can be made arbitrarily small whenever
the Lebesgue measure $|E|$ is sufficiently small (absolute continuity of the Lebesgue integral).
Therefore, for every $\eta>0$ there exists $\delta>0$ such that
$\sum_i(b_i-a_i)<\delta$ implies $\sum_i |H(b_i)-H(a_i)|<\eta$.
This is precisely the definition of absolute continuity of $H$.
Thus $H$ is absolutely continuous on $[0,1]$. Applying this with $H=g$ and $h=\varepsilon_s f$ proves that $g$ is absolutely continuous.

\noindent\textbf{Absolute continuity of $g_{\mathrm{NO}}$.}
Let $F(\theta)=\int_0^\theta f(\beta)\,d\beta$. Since $f\in L^1([0,1])$, the same argument as above
(with $h=f$) shows that $F$ is absolutely continuous. Hence $g_{\mathrm{NO}}(\theta)=g(\theta)+\bar\varepsilon_m F(\theta)$
is a sum of absolutely continuous functions and is therefore absolutely continuous on $[0,1]$.
\end{proof}

\noindent\textbf{Absolute continuity of the multi-thresholds oracle and non-oracle error functions.}
Let $K\in\mathbb{N}$. For each class $k\in\{1,\dots,K\}$, let
$f_k:[0,1]\to[0,\infty)$ be measurable with $\int_0^1 f_k(\beta)\,d\beta = 1$,
let $\varepsilon_{s,k}:[0,1]\to[0,1]$ be measurable, and let $\bar{\varepsilon}_{m,k}\in[0,1]$ be a constant.
Let $p_k\ge 0$ with $\sum_{k=1}^K p_k = 1$. 
The oracle and non-oracle class-wise error contributions are:
\begin{align}
g_k(\theta_k)
&:= p_k\int_{\theta_k}^{1}\varepsilon_{s,k}(\beta)\,f_k(\beta)\,d\beta\,,
\label{eq:gk_oracle}
\\
g_{\mathrm{NO},k}(\theta_k)
&:= p_k\left(
\int_{\theta_k}^{1}\varepsilon_{s,k}(\beta)\,f_k(\beta)\,d\beta
+\bar{\varepsilon}_{m,k}\int_{0}^{\theta_k} f_k(\beta)\,d\beta
\right)\,,
\label{eq:gk_no}
\end{align}
and the total multiclass error maps on $[0,1]^K$:
\begin{equation}
g(\boldsymbol{\theta}) = \sum_{k=1}^K g_k(\theta_k),
\qquad
g_{\mathrm{NO}}(\boldsymbol{\theta}) = \sum_{k=1}^K g_{\mathrm{NO},k}(\theta_k),
\quad \boldsymbol{\theta}=(\theta_1,\dots,\theta_K)\,.
\label{eq:g_multisum}
\end{equation}
Then, $g$ and $g_{\mathrm{NO}}$ are absolutely continuous in each coordinate separately; in particular, they are continuous on $[0,1]^K$. 
For a class $k$, Since $0\le \varepsilon_{s,k}\le 1$ and $f_k\in L^1([0,1])$ with $\int_0^1 f_k=1$,
the product $h_k(\beta)=\varepsilon_{s,k}(\beta)f_k(\beta)$ belongs to $L^1([0,1])$.
By Lemma~\ref{lem:absolute_continuity_g} applied to $h_k$, the map
$\theta_k\mapsto \int_{\theta_k}^1 h_k(\beta)\,d\beta$ is absolutely continuous (hence continuous) on $[0,1]$,
and its derivative equals $-h_k(\theta_k)$ for almost every $\theta_k$.
Multiplying by the constant $p_k$ gives the absolute continuity of $g_k$. 
Similarly, $\int_0^{\theta_k} f_k(\beta)\,d\beta$ is absolutely continuous with derivative
$f_k(\theta_k)$ for almost every $\theta_k$. Therefore $g_{\mathrm{NO},k}$ is a sum of absolutely continuous
functions, hence absolutely continuous. 
Finally, $g(\boldsymbol{\theta})$ and $g_{\mathrm{NO}}(\boldsymbol{\theta})$ are finite sums of continuous functions of separate coordinates, hence continuous on $[0,1]^K$. The stated partial derivatives hold coordinate-wise almost everywhere.

\section{Details about the experimental setup}\label{app:experiments_additional}

In this section, we present additional information about the datasets, the algorithms that were used as well as supplementary results.

\subsection{Datasets}\label{app:datasets}

We experimented with five NLP datasets covering both discriminative and generative tasks. For the former, we focused on classification tasks, using \textsc{SST-2} \citep{socher-etal-2013-recursive} and \textsc{FakeNews} \citep{fakenews} for binary classification, and \textsc{AGNews} \citep{agnews} and \textsc{Emotion} \citep{saravia-etal-2018-carer} for multi-class classification. For the latter, we used the \textsc{SQuAD} question-answering dataset \citep{rajpurkar-etal-2016-squad}.
\begin{enumerate}
    \item[(i)] \textsc{SST-2}: The Stanford Sentiment Treebank is a dataset of movie review sentences with human-annotated sentiment labels. The task involves predicting sentence-level sentiment under a binary (positive/negative) classification setting.
    \item[(ii)] \textsc{FakeNews}: Contains a collection of English-language news articles that have been manually labeled according to their veracity. It is designed for binary text classification tasks and includes full article text paired with corresponding ground-truth labels indicating whether the content is misleading (fake) or factual (real). The dataset covers a variety of news topics and is commonly used for research in automated fake news detection.
    \item [(iii)] \textsc{Emotion}: Consists of English-language Twitter messages, each annotated with one of six basic emotion categories: anger, fear, joy, love, sadness, and surprise. It is designed for emotion classification tasks, where the goal is to predict the dominant emotional label of a given tweet based on its text. 
    \item[(iv)] \textsc{AGNews}: Consists of news article texts paired with one of four topic labels, World, Sports, Business, and Science/Technology, representing major categories of news content. Designed for multi-class text classification, the task is to predict the correct topic label given the content of a news article.
    \item[(v)] \textsc{SQuAD}: The Stanford Question Answering Dataset (SQuAD) is a large-scale reading comprehension corpus built from English Wikipedia articles. It consists of questions written by crowdworkers paired with context passages, where the task is to extract the correct answer as a contiguous span of text from the passage. Given our use of LLMs, we adopt a free-form answer format, enabling responses that are not limited to fixed labels or spans, like in traditional QA.
    \item[(vi)] \textsc{WMT}: The Workshop on Machine Translation (WMT) dataset is a large-scale benchmark for machine translation, constructed from parallel corpora collected from diverse sources such as news articles, parliamentary proceedings, and web crawls. It consists of sentence-aligned text pairs in multiple language combinations (e.g., English–German, English–French, English–Russian), where the task is to translate a source-language sentence into the corresponding target language.
\end{enumerate}

\begin{table}[h]
  \caption{Summary of datasets and corresponding tasks.}
  \label{tab:datasets}
  \begin{center}
      \begin{adjustbox}{max width=\textwidth}
        \begin{tabular}{lc}
          \toprule
          \bf Dataset & \bf Task \\
          \midrule
            \textsc{SST-2} & 2-way classification \\
            \textsc{FakeNews} & 2-way classification \\
            \textsc{Emotion} & 6-way classification \\
            \textsc{AGNews} & 4-way classification \\
            \textsc{SQuAD} & Question-Answering \\
            \textsc{WMT} & Machine-Translation \\
          \bottomrule
        \end{tabular}
        \end{adjustbox}
  \end{center}
  \vskip -0.1in
\end{table}

\subsection{FrugalGPT and HybridLLM details}\label{app:frugal_hybrid_details}
We used DistilBERT as the encoder for the auxiliary routing model. The router was trained as a binary classifier, and we constructed its training datasets following the descriptions provided in FrugalGPT \citep{frugal} and Hybrid-LLM \citep{hybridlllm} papers. \\
For the HybridLLM setting, we assigned a label of `1' when sLLM produced an answer that was closer to the ground-truth answer than mLLM. Closeness was measured using BARTScore \citep{bartscore}, computed between the ground-truth answer and each model’s generated answer. Specifically, a label of `1' was assigned if the BARTScore between the ground-truth answer and the sLLM output was higher than that between the ground-truth answer and the mLLM output; otherwise, a label of `0' was assigned. For the FrugalGPT setting, we assigned a label of `1' if the ground-truth answer matched the sLLM’s generated answer, and `0' otherwise.\\
For both methods, the router model was trained for up to 20 epochs with early stopping (patience = 3 epochs) and an initial learning rate of $\eta = 2 \times 10^{-5}$.

\subsection{Additional results}\label{app:additional_results}

We present additional results with different model pairs in Tables \ref{tab:add_res1}, \ref{tab:add_res2}, \ref{tab:add_res3}, \ref{tab:add_res4}, \ref{tab:add_res5}, 
\ref{tab:add_res7}, \ref{tab:add_res8}, \ref{tab:add_res9}, \ref{tab:add_res10}, 
\ref{tab:add_res11}, 
\ref{tab:add_res13}, 
\ref{tab:add_res14}, \ref{tab:add_res15}, and \ref{tab:add_res16}.

\begin{table}[h!]
  \caption{Performance comparison of our policy with other policies on classification tasks using the \textsc{gemma3-1b} (sLLM) and \textsc{gemma3-12b} (mLLM) models pair.}
  \label{tab:add_res1}
  \centering
      \begin{adjustbox}{max width=\columnwidth}
        \begin{tabular}{cccc}
          \toprule
          \bf Dataset & \bf Policy & \bf Performance ($\bf Accuracy$ / $\bf F_1$) & \bf Cost Saved ($\%$) \\
          \midrule 
           \multirow{4}{*}{{\sc SST-2}} & None Deferred & 0.84 / 0.84 & 99.32$\%$ \\
           &  All Deferred & 0.92 / 0.91 & 0$\%$\\
           &  Random & 0.88 / 0.88 & 48.51$\%$\\
           & FrugalGPT & 0.91 / 0.91 & 28.32$\%$\\
           & HybridLLM & 0.89 / 0.89 & 32.64$\%$\\
           &  Ours & 0.92 / 0.91 & 27.05$\%$\\
           \midrule
           \multirow{4}{*}{{\sc AGNews}} & None Deferred & 0.56 / 0.49 & 99.19$\%$ \\
           &  All Deferred & 0.84 / 0.82 & 0$\%$\\
           &  Random & 0.71 / 0.68 & 48.38$\%$\\
           & FrugalGPT & 0.77 / 0.75 & 6.44$\%$\\
           & HybridLLM & 0.81 / 0.81 & 15.94$\%$\\
           &  Ours & 0.84 / 0.82 & 6.73$\%$\\
           \midrule
           \multirow{4}{*}{{\sc Emotion}} & None Deferred & 0.46 / 0.36 & 99.61$\%$ \\
           &  All Deferred & 0.58 / 0.48 & 0$\%$\\
           &  Random & 0.53 / 0.45 & 48.79$\%$\\
           & FrugalGPT & 0.51 / 0.45 & 44.17$\%$\\
           & HybridLLM & 0.51 / 0.40 & 79.17$\%$\\
           &  Ours & 0.59 / 0.49 & 44.23$\%$\\
           \midrule
           \multirow{4}{*}{{\sc FakeNews}} & None Deferred & 0.52 / 0.40 & 97.26$\%$ \\
           &  All Deferred & 0.83 / 0.82 & 0$\%$\\
           &  Random & 0.69 / 0.68 & 46.64$\%$\\
           & FrugalGPT & 0.86 / 0.84 & 41.51$\%$\\
           & HybridLLM & 0.77 / 0.76 & 69.26$\%$\\
           &  Ours & 0.82 / 0.81 & 43.26$\%$\\
          \bottomrule
        \end{tabular}
        \end{adjustbox}
  \vskip -0.1in
\end{table}

\begin{table}[h!]
  \caption{Performance comparison of our policy with other policies on classification tasks using the \textsc{gemma3-4b} (sLLM) and \textsc{gemma3-12b} (mLLM) models pair. In the case of \textsc{fakenews}, \textsc{gemma3-4b} was better than \textsc{gemma3-12b} so the policy did not defer any sample.}
  \label{tab:add_res2}
  \centering
      \begin{adjustbox}{max width=\columnwidth}
        \begin{tabular}{cccc}
          \toprule
          \bf Dataset & \bf Policy & \bf Performance ($\bf Accuracy$ / $\bf F_1$) & \bf Cost Saved ($\bf \%$) \\
          \midrule 
           \multirow{4}{*}{{\sc SST-2}} & None Deferred & 0.90 / 0.90 & 98.69$\%$ \\
           &  All Deferred & 0.92 / 0.92 & 0$\%$\\
           &  Random & 0.90 / 0.90 & 26.20$\%$\\
           &  FrugalGPT & 0.89 / 0.87 & 43.26$\%$\\
           &  HybridLLM & 0.90 / 0.89 & 16.95$\%$\\
           &  Ours & 0.91 / 0.91 & 41.70$\%$\\
           \midrule
           \multirow{4}{*}{{\sc agnews}} & None Deferred & 0.78 / 0.74 & 75.27$\%$ \\
           &  All Deferred & 0.83 / 0.82 & 0$\%$\\
           &  Random & 0.81 / 0.78 & 24.46$\%$\\
           &  FrugalGPT & 0.83 / 0.81 & 36.02$\%$\\
           &  HybridLLM & 0.81 / 0.80 & 38.58$\%$\\
           &  Ours & 0.84 / 0.83 & 33.96$\%$\\
           \midrule
           \multirow{4}{*}{{\sc emotion}} & None Deferred & 0.56 / 0.46 & 80.72$\%$ \\
           &  All Deferred & 0.58 / 0.48 & 0$\%$\\
           &  Random & 0.57 / 0.48 & 29.90$\%$\\
           &  FrugalGPT & 0.56 / 0.46 & 77.65$\%$\\
           &  HybridLLM & 0.56 / 0.46 & 80.72$\%$\\
           &  Ours & 0.58 / 0.48 & 73.09$\%$\\
           \midrule
           \multirow{4}{*}{{\sc fakenews}} & None Deferred & 0.90 / 0.88 & 55.20$\%$ \\
           &  All Deferred & 0.83 / 0.82 & 0$\%$\\
           &  Random & 0.88 / 0.88 & 4.58$\%$\\
           &  FrugalGPT & 0.90 / 0.90 & 55.20$\%$\\
           &  HybridLLM & 0.90 / 0.90 & 55.20$\%$\\
           &  Ours & 0.90 / 0.90 & 55.20$\%$\\
          \bottomrule
        \end{tabular}
        \end{adjustbox}
  \vskip -0.1in
\end{table}

\begin{table}[h!]
  \caption{Performance comparison of our policy with other policies on classification tasks using the \textsc{qwen3-4b} (sLLM) and \textsc{gemma3-12b} (mLLM) models pair.}
  \label{tab:add_res3}
  \centering
      \begin{adjustbox}{max width=\columnwidth}
        \begin{tabular}{cccc}
          \toprule
          \bf Dataset & \bf Policy & \bf Performance ($\bf Accuracy$ / $\bf F_1$) & \bf Cost Saved ($\bf \%$) \\
          \midrule 
           \multirow{4}{*}{{\sc SST-2}} & None Deferred & 0.86 / 0.86 & 74.73$\%$ \\
           &  All Deferred & 0.92 / 0.92 & 0$\%$\\
           &  Random & 0.88 / 0.88 & 23.92$\%$\\
           &  FrugalGPT & 0.89 / 0.89 & 59.80$\%$\\
           &  HybridLLM & 0.91 / 0.91 & 2.17$\%$\\
           &  Ours & 0.92 / 0.92 & 58.67$\%$\\
           \midrule
           \multirow{4}{*}{{\sc agnews}} & None Deferred & 0.82 / 0.81 & 73.32$\%$ \\
           &  All Deferred & 0.83 / 0.82 & 0$\%$\\
           &  Random & 0.83 / 0.82 & 22.50$\%$\\
           &  FrugalGPT & 0.83 / 0.82 & 70.07$\%$\\
           &  HybridLLM & 0.82 / 0.81 & 72.63$\%$\\
           &  Ours &  0.84 / 0.83 & 70.44$\%$\\
           \midrule
           \multirow{4}{*}{{\sc emotion}} & None Deferred & 0.55 / 0.49 & 78.69$\%$ \\
           &  All Deferred & 0.58 / 0.48 & 0$\%$\\
           &  Random & 0.57 / 0.49 & 27.87$\%$\\
           &  FrugalGPT & 0.55 / 0.50 & 72.87$\%$\\
           &  HybridLLM & 0.57 / 0.50 & 49.25$\%$\\
           &  Ours & 0.58 / 0.51 & 74.06$\%$\\
           \midrule
           \multirow{4}{*}{{\sc fakenews}} & None Deferred & 0.83 / 0.82 & 54.49$\%$ \\
           &  All Deferred & 0.83 / 0.82 & 0$\%$\\
           &  Random & 0.83 / 0.83 & 3.87$\%$\\
           &  FrugalGPT & 0.83 / 0.82 & 54.49$\%$\\
           &  HybridLLM & 0.84 / 0.82 & 52.37$\%$\\
           &  Ours & 0.83 / 0.82 & 54.49$\%$\\
          \bottomrule
        \end{tabular}
        \end{adjustbox}
  \vskip -0.1in
\end{table}

\begin{table}[h!]
  \caption{Performance comparison of our policy with other policies on classification tasks using the \textsc{gemma3-1b} (sLLM) and \textsc{qwen3-4b} (mLLM) models pair.}
  \label{tab:add_res4}
  \centering
      \begin{adjustbox}{max width=\columnwidth}
        \begin{tabular}{cccc}
          \toprule
          \bf Dataset & \bf Policy & \bf Performance ($\bf Accuracy$ / $\bf F_1$) & \bf Cost Saved ($\bf \%$) \\
          \midrule 
           \multirow{4}{*}{{\sc SST-2}} & None Deferred & 0.84 / 0.84 & 98.49$\%$ \\
           &  All Deferred & 0.86 / 0.86 & 0$\%$\\
           &  Random & 0.85 / 0.84 &  47.68$\%$\\
           &  FrugalGPT & 0.85 / 0.82 & 93.24$\%$\\
           &  HybridLLM & 0.84 / 0.82 & 98.24$\%$\\
           &  Ours & 0.88 / 0.88 & 89.31$\%$\\
           \midrule
           \multirow{4}{*}{{\sc agnews}} & None Deferred & 0.56 / 0.49 & 98.29$\%$ \\
           &  All Deferred & 0.82 / 0.81 & 0$\%$\\
           &  Random &  0.70 / 0.67 & 47.48$\%$\\
           &  FrugalGPT & 0.63 / 0.61 & 25.92$\%$\\
           &  HybridLLM & 0.81 / 0.80 & 44.23$\%$\\
           &  Ours &  0.83 / 0.81 &  32.79$\%$\\
           \midrule
           \multirow{4}{*}{{\sc emotion}} & None Deferred & 0.46 / 0.36 & 98.96$\%$ \\
           &  All Deferred & 0.55 / 0.49 & 0$\%$\\
           &  Random & 0.51 / 0.44 & 48.15$\%$\\
           &  FrugalGPT & 0.48 / 0.42 & 65.90$\%$\\
           &  HybridLLM & 0.48 / 0.38 & 92.77$\%$\\
           &  Ours &  0.56 / 0.48 & 68.77$\%$\\
           \midrule
           \multirow{4}{*}{{\sc fakenews}} & None Deferred & 0.52 / 0.40 & 96.60$\%$ \\
           &  All Deferred & 0.82 / 0.82 & 0$\%$\\
           &  Random & 0.68 / 0.66 & 45.97$\%$\\
           &  FrugalGPT & 0.85 / 0.83 & 39.85$\%$\\
           &  HybridLLM & 0.75 / 0.74 & 63.60$\%$\\
           &  Ours & 0.78 / 0.77 & 41.67$\%$\\
          \bottomrule
        \end{tabular}
        \end{adjustbox}
  \vskip -0.1in
\end{table}

\begin{table}[h!]
  \caption{Performance comparison of our policy with other policies on classification tasks using the \textsc{gemma3-1b} (sLLM) and \textsc{ministral3-3b} (mLLM) models pair.}
  \label{tab:add_res5}
  \centering
      \begin{adjustbox}{max width=\columnwidth}
        \begin{tabular}{cccc}
          \toprule
          \bf Dataset & \bf Policy & \bf Performance ($\bf Accuracy$ / $\bf F_1$) & \bf Cost Saved ($\bf \%$) \\
          \midrule 
           \multirow{4}{*}{{\sc SST-2}} & None Deferred & 0.84 / 0.84 & 97.88$\%$ \\
           &  All Deferred & 0.84 / 0.84 & 0$\%$\\
           &  Random & 0.84 / 0.84 & 47.07$\%$\\
           &  FrugalGPT & 0.84 / 0.84 & 97.88$\%$\\
           &  HybridLLM & 0.84 / 0.84 & 97.88$\%$\\
           &  Ours & 0.84 / 0.84 & 97.88$\%$\\
           \midrule
           \multirow{4}{*}{{\sc agnews}} & None Deferred & 0.56 / 0.49 & 97.58$\%$ \\
           &  All Deferred & 0.82 / 0.81 & 0$\%$\\
           &  Random &  0.70 / 0.67 & 46.77$\%$\\
           &  FrugalGPT & 0.60 / 0.58 & 56.96$\%$\\
           &  HybridLLM & 0.81 / 0.80 & 45.02$\%$\\
           &  Ours &  0.82 / 0.80 & 59.90$\%$\\
           \midrule
           \multirow{4}{*}{{\sc emotion}} & None Deferred & 0.46 / 0.36 & 98.52$\%$ \\
           &  All Deferred & 0.59 / 0.50 & 0$\%$\\
           &  Random & 0.54 / 0.45 & 47.71$\%$\\
           &  FrugalGPT & 0.51 / 0.44 & 48.96$\%$\\
           &  HybridLLM & 0.54 / 0.40 & 75.33$\%$\\
           &  Ours &  0.60 / 0.49 & 52.77$\%$\\
           \midrule
           \multirow{4}{*}{{\sc fakenews}} & None Deferred & 0.52 / 0.40 & 95.74$\%$ \\
           &  All Deferred & 0.81 / 0.80 & 0$\%$\\
           &  Random & 0.66 / 0.63 & 45.12$\%$\\
           &  FrugalGPT & 0.83 / 0.83 & 38.99$\%$\\
           &  HybridLLM & 0.78 / 0.78 & 62.74$\%$\\
           &  Ours & 0.81 / 0.81 & 40.15$\%$\\
          \bottomrule
        \end{tabular}
        \end{adjustbox}
  \vskip -0.1in
\end{table}

\begin{table}[h!]
  \caption{Performance comparison of our policy with other policies on generation tasks using the \textsc{gemma3-1b} (sLLM) and \textsc{gemma3-12b} (mLLM) models pair. In the case of \textsc{WMT}, \textsc{gemma3-1b} was better than \textsc{gemma3-12b} so the policy did not defer any sample.}
  \label{tab:add_res7}
  \centering
      \begin{adjustbox}{max width=\columnwidth}
        \begin{tabular}{cccc}
          \toprule
          \bf Dataset & \bf Policy & \bf Performance (\textsc{rouge-l} / \textsc{cosine}) & \bf Cost Saved ($\bf \%$) \\
          \midrule 
           \multirow{4}{*}{{\sc SQuAD}} & None Deferred & 0.40 / 0.49 & 99.82$\%$ \\
           &  All Deferred & 0.77 / 0.79 & 0$\%$\\
           &  Random & 0.58 / 0.64 & 49.01$\%$\\
           &  FrugalGPT & 0.75 / 0.76 & 11.51$\%$\\
           &  HybridLLM & 0.77 / 0.79 & 0.70$\%$\\
           &  Ours & 0.75 / 0.77 & 11.95$\%$\\
           \midrule 
           \multirow{4}{*}{{\sc WMT}} & None Deferred & 0.37 / 0.62 & 98.39$\%$ \\
           &  All Deferred & 0.18 / 0.30 & 0$\%$\\
           &  Random & 0.28 / 0.45 & 47.58$\%$\\
           &  FrugalGPT & 0.37 / 0.62 & 98.39$\%$\\
           &  HybridLLM & 0.37 / 0.62 & 98.39$\%$\\
           &  Ours & 0.37 / 0.62 & 98.39$\%$\\
          \bottomrule
        \end{tabular}
        \end{adjustbox}
  \vskip -0.1in
\end{table}

\begin{table}[t!]
  \caption{Performance comparison of our policy with other policies on generation tasks using the \textsc{gemma3-4b} (sLLM) and \textsc{gemma3-12b} (mLLM) models pair. In the case of \textsc{WMT}, \textsc{gemma3-4b} was better than \textsc{gemma3-12b} so the policy did not defer any sample.}
  \label{tab:add_res8}
  \centering
      \begin{adjustbox}{max width=\columnwidth}
        \begin{tabular}{cccc}
          \toprule
          \bf Dataset & \bf Policy & \bf Performance (\textsc{rouge-l} / \textsc{cosine}) & \bf Cost Saved ($\bf \%$) \\
          \midrule 
           \multirow{4}{*}{{\sc SQuAD}} & None Deferred & 0.57 / 0.65 & 80.65$\%$ \\
           &  All Deferred & 0.77 / 0.79 & 0$\%$\\
           &  Random & 0.68 / 0.72 & 29.83$\%$\\
           &  FrugalGPT & 0.75 / 0.78 & 1.15$\%$\\
           &  HybridLLM & 0.71 / 0.74 & 9.77$\%$\\
           &  Ours & 0.75 / 0.77 & 12.02$\%$\\
           \midrule 
           \multirow{4}{*}{{\sc WMT}} & None Deferred & 0.18 / 0.32 & 46.68$\%$ \\
           &  All Deferred & 0.18 / 0.30 & 0$\%$\\
           &  Random & 0.19 / 0.31 & -4.12$\%$\\
           &  FrugalGPT & 0.18 / 0.32 & 46.68$\%$\\
           &  HybridLLM & 0.19 / 0.32 & -0.56$\%$\\
           &  Ours & 0.18 / 0.32 & 46.68$\%$\\
          \bottomrule
        \end{tabular}
        \end{adjustbox}
  \vskip -0.1in
\end{table}

\begin{table}[t!]
  \caption{Performance comparison of our policy with other policies on generation tasks using the \textsc{qwen3-4b} (sLLM) and \textsc{gemma3-12b} (mLLM) models pair. In the case of \textsc{WMT}, \textsc{qwen3-4b} was better than \textsc{gemma3-12b} so the policy did not defer any sample.}
  \label{tab:add_res9}
  \centering
      \begin{adjustbox}{max width=\columnwidth}
        \begin{tabular}{cccc}
          \toprule
          \bf Dataset & \bf Policy & \bf Performance (\textsc{rouge-l} / \textsc{cosine}) & \bf Cost Saved ($\bf \%$) \\
          \midrule 
           \multirow{4}{*}{{\sc SQuAD}} & None Deferred & 0.41 / 0.58 & 82.87$\%$ \\
           &  All Deferred & 0.77 / 0.79 & 0$\%$\\
           &  Random & 0.60 / 0.69 & 32.05$\%$\\
           &  FrugalGPT & 0.69 / 0.76 & 10.18$\%$\\
           &  HybridLLM & 0.53 / 0.65 & 48.99$\%$\\
           &  Ours & 0.67 / 0.75 & 12.87$\%$\\
           \midrule 
           \multirow{4}{*}{{\sc WMT}} & None Deferred & 0.37 / 0.63 & 56.68$\%$ \\
           &  All Deferred & 0.18 / 0.30 & 0$\%$\\
           &  Random & 0.27 / 0.46 & 5.87$\%$\\
           &  FrugalGPT & 0.37 / 0.63 & 56.68$\%$\\
           &  HybridLLM & 0.37 / 0.63 & 56.68$\%$\\
           &  Ours & 0.37 / 0.63 & 56.68$\%$\\
          \bottomrule
        \end{tabular}
        \end{adjustbox}
  \vskip -0.1in
\end{table}

\begin{table}[t!]
  \caption{Performance comparison of our policy with other policies on generation tasks using the \textsc{gemma3-1b} (sLLM) and \textsc{qwen3-4b} (mLLM) models pair. }
  \label{tab:add_res10}
  \centering
      \begin{adjustbox}{max width=\columnwidth}
        \begin{tabular}{cccc}
          \toprule
          \bf Dataset & \bf Policy & \bf Performance (\textsc{rouge-l} / \textsc{cosine}) & \bf Cost Saved ($\bf \%$) \\
          \midrule 
           \multirow{4}{*}{{\sc SQuAD}} & None Deferred & 0.39 / 0.48 & 98.98$\%$ \\
           &  All Deferred & 0.41 / 0.58 & 0$\%$\\
           &  Random &  0.39 / 0.53 & 48.16$\%$\\
           &  FrugalGPT & 0.48 / 0.58 & 40.91$\%$\\
           &  HybridLLM & 0.40 / 0.57 & 3.91$\%$\\
           &  Ours & 0.46 / 0.57 & 43.23$\%$\\
           \midrule 
           \multirow{4}{*}{{\sc WMT}} & None Deferred & 0.37 / 0.62 & 96.29$\%$ \\
           &  All Deferred & 0.37 / 0.63 & 0$\%$\\
           &  Random &  0.37 / 0.62 & 45.48$\%$\\
           &  FrugalGPT & 0.38 / 0.62 & 63.17$\%$\\
           &  HybridLLM & 0.38 / 0.62 & 71.36$\%$\\
           &  Ours & 0.38 / 0.64 & 40.17$\%$\\
          \bottomrule
        \end{tabular}
        \end{adjustbox}
  \vskip -0.1in
\end{table}

\begin{table}[h!]
  \caption{Performance comparison of our policy with other policies on generation tasks using the \textsc{gemma3-1b} (sLLM) and \textsc{ministral3-3b} (mLLM) models pair. In the case of \textsc{WMT}, \textsc{gemma3-1b} was better than \textsc{ministral3-3b} so the policy did not defer any sample.}
  \label{tab:add_res11}
  \centering
      \begin{adjustbox}{max width=\columnwidth}
        \begin{tabular}{cccc}
          \toprule
          \bf Dataset & \bf Policy & \bf Performance (\textsc{rouge-l} / \textsc{cosine}) & \bf Cost Saved ($\bf \%$) \\
          \midrule 
           \multirow{4}{*}{{\sc SQuAD}} & None Deferred & 0.40 / 0.48 & 98.92$\%$ \\
           &  All Deferred & 0.82 / 0.85 & 0$\%$\\
           &  Random &  0.60 / 0.67 & 48.10$\%$\\
           &  FrugalGPT & 0.79 / 0.81 & 10.10$\%$\\
           &  HybridLLM & 0.82 / 0.84 & 0.85$\%$\\
           &  Ours & 0.81 / 0.82 & 11.04$\%$\\
           \midrule 
           \multirow{4}{*}{{\sc WMT}} & None Deferred & 0.37 / 0.62 & 97.93$\%$ \\
           &  All Deferred & 0.09 / 0.16 & 0$\%$\\
           &  Random &  0.23 / 0.39 & 47.12$\%$\\
           &  FrugalGPT & 0.37 / 0.62 & 97.93$\%$\\
           &  HybridLLM & 0.37 / 0.62 & 97.93$\%$\\
           &  Ours & 0.37 / 0.62 & 97.93$\%$\\
          \bottomrule
        \end{tabular}
        \end{adjustbox}
  \vskip -0.1in
\end{table}


\begin{table}[t!]
  \caption{Performance comparison of our policy with other policies on classification tasks using the \textsc{gemma3-4b} (sLLM) and \textsc{gemma3-27b} (mLLM) models pair in the \textsc{sst-2} dataset.}
  \label{tab:add_res13}
  \centering
      \begin{adjustbox}{max width=\columnwidth}
        \begin{tabular}{cccc}
          \toprule
          \bf Dataset & \bf Policy & \bf Performance ($\bf Accuracy$ / $\bf F_1$)  & \bf Cost Saved ($\bf \%$) \\
          \midrule 
           \multirow{4}{*}{{\sc sst-2}} & None Deferred & 0.90 / 0.90 & 88.33$\%$ \\
           &  All Deferred & 0.90 / 0.90 & 0$\%$\\
           &  Random &  0.89 / 0.89 & 37.51$\%$\\
           &  FrugalGPT & 0.90 / 0.90 & 84.08$\%$\\
           &  HybridLLM & 0.90 / 0.90 & 77.95$\%$\\
           &  Ours & 0.91 / 0.91 & 82.89$\%$\\
          \bottomrule
        \end{tabular}
        \end{adjustbox}
  \vskip -0.1in
\end{table}

\begin{table}[t!]
  \caption{Performance comparison of our policy with other policies on classification tasks using the \textsc{gemma3-12b} (sLLM) and \textsc{gemma3-27b} (mLLM) models pair in the \textsc{sst-2} dataset. In this case, \textsc{gemma3-12b} was better than \textsc{gemma3-27b}.}
  \label{tab:add_res14}
  \centering
      \begin{adjustbox}{max width=\columnwidth}
        \begin{tabular}{cccc}
          \toprule
          \bf Dataset & \bf Policy & \bf Performance ($\bf Accuracy$ / $\bf F_1$)  & \bf Cost Saved ($\bf \%$) \\
          \midrule 
           \multirow{4}{*}{{\sc sst-2}} & None Deferred & 0.91 / 0.91 & 61.20$\%$ \\
           &  All Deferred & 0.90 / 0.90 & 0$\%$\\
           &  Random &  0.89 / 0.89 & 10.39$\%$\\
           &  FrugalGPT & 0.91 / 0.91 & 61.20$\%$\\
           &  HybridLLM & 0.91 / 0.91 & 61.20$\%$\\
           &  Ours & 0.91 / 0.91 & 61.20$\%$\\
          \bottomrule
        \end{tabular}
        \end{adjustbox}
  \vskip -0.1in
\end{table}
\begin{table}[t!]
  \caption{Classification results for our policy and baselines in the \textsc{sst-2} dataset, using using the \textsc{qwen3-4b} (sLLM) and \textsc{gemma3-27b} (mLLM) models pair.}
  \label{tab:add_res15}
  \centering
      \begin{adjustbox}{max width=\columnwidth}
        \begin{tabular}{cccc}
          \toprule
          \bf Dataset & \bf Policy & \bf Performance ($\bf Accuracy$ / $\bf F_1$)  & \bf Cost Saved ($\bf \%$) \\
          \midrule 
           \multirow{4}{*}{{\sc sst-2}} & None Deferred & 0.86 / 0.86 & 88.89$\%$ \\
           &  All Deferred & 0.89 / 0.89 & 0$\%$\\
           &  Random &  0.88 / 0.88 & 38.08$\%$\\
           &  FrugalGPT & 0.88 / 0.88 & 83.58$\%$\\
           &  HybridLLM & 0.88 / 0.88 & 61.77$\%$\\
           &  Ours & 0.90 / 0.90 & 82.46$\%$\\
          \bottomrule
        \end{tabular}
        \end{adjustbox}
  \vskip -0.1in
\end{table}

\begin{table}[t!]
  \caption{Classification results for our policy and baselines in the \textsc{sst-2} dataset, using using the \textsc{ministral3-3b} (sLLM) and \textsc{gemma3-27b} (mLLM) models pair.}
  \label{tab:add_res16}
  \centering
      \begin{adjustbox}{max width=\columnwidth}
        \begin{tabular}{cccc}
          \toprule
          \bf Dataset & \bf Policy & \bf Performance ($\bf Accuracy$ / $\bf F_1$)  & \bf Cost Saved ($\bf \%$) \\
          \midrule 
           \multirow{4}{*}{{\sc sst-2}} & None Deferred & 0.84 / 0.84 & 100$\%$ \\
           &  All Deferred & 0.90 / 0.90 & 0$\%$\\
           &  Random &  0.87 / 0.87 & 40.50$\%$\\
           &  Ours & 0.91 / 0.91 & 64.33$\%$\\
          \bottomrule
        \end{tabular}
        \end{adjustbox}
  \vskip -0.1in
\end{table}
\subsection{Monte Carlo sampling}\label{app:mc_sampling_details}
The theoretical cost and error curves can be written as expectations of functions of the random tuple $(\boldsymbol{a}, \hat{\boldsymbol{a}}^s, \beta, \hat{\boldsymbol{a}}^m)$ under the data-generating distribution. We approximate these expectations by sample averages. Concretely, for any policy parameter $\theta$ (scalar or vector), let $g(z;\theta)$ denote the per-sample error indicator (defined below). Then
\begin{equation}
\mathbb{E}[g(Z;\theta)] \;\approx\; \widehat{\mathbb{E}}_N[g(Z;\theta)]
\;:=\; \frac{1}{N}\sum_{i=1}^N g(z_i;\theta),
\label{eq:mc_expectation}
\end{equation}
where $z_i$ denotes the observed tuple on example $i$ and $N$ is the number of available tuples. This direct MC evaluation 
approximates the full expectations appearing in the integrals by sample averages of indicator functions.

\subsubsection{Single-threshold policy}
\label{sec:mc_single_threshold}

In the single-threshold policy, we accept sLLM's prediction if $\beta_i\ge \theta$ and defer otherwise:
\begin{equation}
A_i(\theta) = \mathds{1}\{\beta_i \ge \theta\}, \qquad D_i(\theta) = 1-A_i(\theta).
\label{eq:accept_def_single}
\end{equation}
Let $e_i^{s} = \mathds{1}\{\hat{\boldsymbol{a}}^s_i\neq \boldsymbol{a}_i\}$ and
$e_i^{m} = \mathds{1}\{\hat{\boldsymbol{a}}^m_i\neq \boldsymbol{a}_i\}$ denote per-sample error indicators for sLLM and mLLM, respectively. In practice, we use the single-threshold policy with generative tasks where models provide open-ended answers. Thus, exact equality with a reference output can be difficult to acquire; there may be multiple valid responses, partial correctness, and semantic variation. As a result, the binary indicator functions may require thresholding of the performance metric. To address this, one can relax these binary error indicators into continuous ones, i.e., $
e_i^s = 1 - M(\hat{\boldsymbol{a}}^s_i, \boldsymbol{a}_i)$ and $
e_i^m = 1 - M(\hat{\boldsymbol{a}}^m_i, \boldsymbol{a}_i),
$, where $M(\cdot, \cdot)$ is the performance metric of choice (e.g., cosine similarity between answers' embeddings). We leave this formulation for future work. 

\paragraph{Oracle error (single-threshold).}
In the oracle regime, deferred examples incur no error, hence the MC estimate of the error is
\begin{equation}
\widehat{\mathrm{Err}}_{\mathrm{ST-O}}(\theta)
\;=\;
\frac{1}{N}\sum_{i=1}^N A_i(\theta)\, e_i^{s}.
\label{eq:mc_err_single_oracle}
\end{equation}

\paragraph{Non-oracle error (single-threshold).}
In the non-oracle regime, deferred examples use mLLM that may err:
\begin{equation}
\widehat{\mathrm{Err}}_{\mathrm{ST-NO}}(\theta)
\;=\;
\frac{1}{N}\sum_{i=1}^N\Big(
A_i(\theta)\, e_i^{s} \;+\; D_i(\theta)\, e_i^{m}
\Big).
\label{eq:mc_err_single_nonoracle}
\end{equation}

\paragraph{MC cost (single-threshold).}
We use a cost model with constant sLLM cost $c_s>0$ and additional mLLM cost $c_m>0$ when deferring. Thus, the expected per-sample cost and its MC estimate are:
\begin{equation}
\widehat{\mathrm{Cost}}(\theta)
\;=\;
c_s \;+\; c_m \cdot \widehat{\mathrm{Def}}(\theta),
\qquad
\widehat{\mathrm{Def}}(\theta) := \frac{1}{N}\sum_{i=1}^N D_i(\theta)\,,
\label{eq:mc_cost_single}
\end{equation}
where $\widehat{\mathrm{Def}}(\theta)$ is the empirical deferral rate (and $1-\widehat{\mathrm{Def}}(\theta)$ the empirical coverage).

\subsubsection{Multi-thresholds policy}
\label{sec:mc_multi_threshold}

In the multi-thresholds policy,  we employ a class-conditional threshold vector
$\boldsymbol{\theta}=(\theta_1,\dots,\theta_K)$, where $\theta_k$ is applied to examples whose sLLM predicted label equals $k$. The accept/deferral rule is:
\begin{equation}
A_i(\boldsymbol{\theta}) = \mathds{1}\{\beta_i \ge \theta_{\hat{\boldsymbol{a}}^s_i}\},
\qquad
D_i(\boldsymbol{\theta}) = 1-A_i(\boldsymbol{\theta}).
\label{eq:accept_def_multi}
\end{equation}

\paragraph{Oracle error (multi-thresholds).}
Under the oracle regime,
\begin{equation}
\widehat{\mathrm{Err}}_{\mathrm{MT-O}}(\boldsymbol{\theta})
\;=\;
\frac{1}{N}\sum_{i=1}^N A_i(\boldsymbol{\theta})\, e_i^{s}.
\label{eq:mc_err_multi_oracle}
\end{equation}

\paragraph{Non-oracle error (multi-thresholds).}
Under the non-oracle regime,
\begin{equation}
\widehat{\mathrm{Err}}_{\mathrm{MT-NO}}(\boldsymbol{\theta})
\;=\;
\frac{1}{N}\sum_{i=1}^N\Big(
A_i(\boldsymbol{\theta})\, e_i^{s} \;+\; D_i(\boldsymbol{\theta})\, e_i^{m}
\Big).
\label{eq:mc_err_multi_nonoracle}
\end{equation}

\paragraph{MC cost (multi-thresholds).}
The cost expression is identical in form to the single-threshold case  (\ref{eq:mc_cost_single}).

\subsubsection{Finite-sample effects and empirical feasibility}
\label{sec:mc_discussion}
Finite-sample MC estimates yield stepwise functions of the threshold(s), since the accept/defer sets change only when thresholds cross observed confidence values. Consequently, the optimal empirical solution to
\begin{equation}
\min_{\theta}~\widehat{\mathrm{Cost}}(\theta)
\quad \text{s.t.}\quad
\widehat{\mathrm{Err}}(\theta)\le b
\label{eq:mc_constrained}
\end{equation}
may exhibit \emph{slack} ($\widehat{\mathrm{Err}}(\theta^*) < b$), because there may be no threshold configuration whose empirical error equals $b$ exactly. This is a discretization effect inherent to optimizing over thresholds on a finite dataset and does not contradict the population-level optimality condition that places the optimum on the boundary when the constraint is active and the error curve is continuous. 
Finally, note that the class prior terms $p_k=\Pr(\hat{\boldsymbol{a}}^s=k)$ appearing in the theoretical integral decompositions are automatically accounted for by MC evaluation: summing over all samples weights each predicted class proportionally to its empirical frequency $n_k/N$, yielding the standard sample-average approximation of the population expectation.

\subsection{Details about implemented algorithms}\label{app:algo_details}

We briefly describe the algorithms used to solve the empirical constrained optimization problem (\ref{eq:mc_constrained}, together with their computational complexity. In all cases, we minimize the expected per-sample cost (equivalently, the deferral rate) subject to a global error budget. A key observation is that on a finite dataset the error and deferral rate are \emph{stepwise} functions of the threshold(s): the accept/defer partition changes only when a threshold crosses an observed confidence value (or a tie-group of equal confidences). Hence, the search can be restricted to a finite set of candidates induced by the sample.

\subsubsection{Single-threshold policy}
\label{sec:alg_single_threshold}

Consider the single-threshold rule $A_i(\theta)=\mathds{1}\{\beta_i\ge \theta\}$. Since the empirical cost
$\widehat{\mathrm{Cost}}(\theta)$ is non-decreasing in $\theta$ (larger thresholds defer more), the minimum-cost feasible solution is obtained by selecting the \emph{smallest} threshold that satisfies the constraint $\widehat{\mathrm{Err}}(\theta)\le b$, when such a threshold exists. On finite samples, it suffices to evaluate $\widehat{\mathrm{Err}}(\theta)$ at the distinct observed $\beta$ values (plus the two endpoints corresponding to ``accept all'' and ``defer all'').

\paragraph{Sorting-based scan ($\mathcal{O}(N\log N)$).}
We sort examples by $\beta_i$ and compute cumulative sums of error indicators to evaluate all candidate thresholds in a single pass. 
Grouping ties in $\beta$ yields a tie-aware scan over unique confidence levels. The runtime is dominated by sorting: $\mathcal{O}(N\log N)$ time and $\mathcal{O}(N)$ memory (or $\mathcal{O}(N)$ time after sorting for the cumulative updates). An implementation outline is presented in Algorithm~\ref{alg:single_oracle_compact} (for the oracle case) and Algorithm~\ref{alg:single_nonoracle_compact} (for the non-oracle case).
\begin{algorithm}[t]
\caption{Single-threshold selection (oracle, MC, sorting-based)}
\label{alg:single_oracle_compact}
\begin{algorithmic}[1]
\STATE {\bfseries Input:} Dataset with generated answers $\{(\beta_i,\hat{\boldsymbol{a}}^m_i,\hat{\boldsymbol{a}}^s_i)\}_{i=1}^N$, accuracy lower bound $\xi\in[0,1]$, costs $(c_s,c_m)$
\STATE {\bfseries Output:} Optimal threshold $\theta^*$ minimizing cost w.r.t the error constraint 
\STATE Set error budget $b \gets 1 - \xi$
\STATE Compute $e_i^{s} \gets \mathds{1}\{\hat{\boldsymbol{a}}^s_i\neq \hat{\boldsymbol{a}}^m_i\}$
\STATE Sort samples by $\beta$: $\beta_{1}\le \dots \le \beta_{N}$ and reorder $e^{s}$ accordingly
\STATE Let $u_1<\dots<u_L$ be distinct confidence values; define tie-groups $G_\ell=\{i:\beta_{i}=u_\ell\}$
\STATE For each $\ell$: $N_\ell\gets|G_\ell|$, $E^{s}_\ell\gets\sum_{i\in G_\ell} e^{s}_{i}$
\STATE \textbf{Accept-all check:} $\widehat{\mathrm{Err}}_{\mathrm{ST-O}}\gets \frac{1}{N}\sum_{i=1}^N e^{s}_{i}$
\IF{$\widehat{\mathrm{Err}}_{\mathrm{ST-O}} \le b$}
  \STATE $\theta^*\gets u_1$;\quad $\widehat{\mathrm{Def}}\gets 0$;\quad $\widehat{\mathrm{Cost}}\gets c_s$
  \STATE \textbf{Return} $\theta^*$
\ENDIF
\STATE $\mathrm{AcceptedErrors}\gets \sum_{\ell'=1}^L E^{s}_{\ell'}$ 
\STATE $\mathrm{DeferredCount}\gets 0$
\FOR{$\ell=1$ to $L$}
  \STATE $\widehat{\mathrm{Err}}_{\mathrm{ST-O}}(u_\ell)\gets \mathrm{AcceptedErrors}/N$
  \IF{$\widehat{\mathrm{Err}}_{\mathrm{ST-O}}(u_\ell)\le b$}
    \STATE $\theta^*\gets u_\ell$
    \STATE $\widehat{\mathrm{Def}}\gets \mathrm{DeferredCount}/N$
    \STATE $\widehat{\mathrm{Cost}}\gets c_s + c_m\,\widehat{\mathrm{Def}}$
    \STATE \textbf{Return} $\theta^*$
  \ENDIF
  \COMMENT{Update totals for next threshold: defer level $\ell$ (oracle: no deferred error)}
  \STATE $\mathrm{DeferredCount}\gets \mathrm{DeferredCount}+N_\ell$
  \STATE $\mathrm{AcceptedErrors}\gets \mathrm{AcceptedErrors}-E^{s}_\ell$
\ENDFOR
\STATE $\theta^*\gets u_L$;\quad $\widehat{\mathrm{Def}}\gets 1$;\quad $\widehat{\mathrm{Err}}\gets 0$;\quad $\widehat{\mathrm{Cost}}\gets c_s+c_m$
\STATE \textbf{Return} $\theta^*$
\end{algorithmic}
\end{algorithm}

\begin{algorithm}[t]
\caption{Single-threshold selection (non-oracle, MC, sorting-based)}
\label{alg:single_nonoracle_compact}
\begin{algorithmic}[1]
\STATE {\bfseries Input:} Dataset with generated answers $\{(\beta_i,\hat{\boldsymbol{a}}^m_i,\hat{\boldsymbol{a}}^s_i, \boldsymbol{a}_i)\}_{i=1}^N$, accuracy lower bound $\xi\in[0,1]$, costs $(c_s,c_m)$
\STATE {\bfseries Output:} Optimal threshold $\theta^*$ minimizing cost w.r.t the error constraint 
\STATE Set error budget $b\gets 1-\xi$
\STATE Compute $e^{s}_i \gets \mathds{1}\{\hat{\boldsymbol{a}}^s_i\neq \boldsymbol{a}_i\}$, $e^{m}_i \gets \mathds{1}\{\hat{\boldsymbol{a}}^m_i\neq \boldsymbol{a}_i\}$
\STATE Sort samples by $\beta$: $\beta_{1}\le \dots \le \beta_{N}$ and reorder $e^{s},e^{m}$ accordingly
\STATE Let $u_1<\dots<u_L$ be the distinct values in $\{\beta_{i}\}$; for each $\ell$, let $G_\ell=\{i:\beta_{(i)}=u_\ell\}$
\STATE For each level $\ell$: $N_\ell\gets |G_\ell|$, $E^{s}_\ell\gets\sum_{i\in G_\ell} e^{s}_{i}$, $E^{m}_\ell\gets\sum_{i\in G_\ell} e^{m}_{i}$
\STATE \textbf{Accept-all check:} $\widehat{\mathrm{Err}}_{\mathrm{ST-NO}}\gets \frac{1}{N}\sum_{i=1}^N e^{(s)}_{(i)}$
\IF{$\widehat{\mathrm{Err}}_{\mathrm{ST-NO}} \le b$}
  \STATE $\theta^* \gets u_1$;\quad $\widehat{\mathrm{Def}}\gets 0$;\quad $\widehat{\mathrm{Cost}}\gets c_s$
  \STATE \textbf{Return} $\theta^*$
\ENDIF
\STATE $\mathrm{DeferredErrors}\gets 0$ \COMMENT{$\sum_{\beta<u_\ell} e^{m}$}
\STATE $\mathrm{DeferredCount}\gets 0$ \COMMENT{$\#\{i:\beta<u_\ell\}$}
\STATE $\mathrm{AcceptedErrors}\gets \sum_{\ell'=1}^L E^{s}_{\ell'}$ \COMMENT{$\sum_{\beta\ge u_1} e^{s}$ initially}
\FOR{$\ell=1$ to $L$}
  \STATE $\widehat{\mathrm{Err}}_{\mathrm{ST-NO}}(u_\ell)\gets \frac{\mathrm{DeferredErrors}+\mathrm{AcceptedErrors}}{N}$
  \COMMENT{Candidate threshold $\theta=u_\ell$: defer levels $<\ell$, accept levels $\ge \ell$}
  \IF{$\widehat{\mathrm{Err}}_{\mathrm{ST-NO}}(u_\ell)\le b$}
    \STATE $\theta^*\gets u_\ell$
    \STATE $\widehat{\mathrm{Def}}\gets \mathrm{DeferredCount}/N$
    \STATE $\widehat{\mathrm{Cost}}\gets c_s + c_m\,\widehat{\mathrm{Def}}$
    \STATE \textbf{Return} $\theta^*$
  \ENDIF
  \STATE $\mathrm{DeferredErrors}\gets \mathrm{DeferredErrors}+E^{m}_\ell$
  \STATE $\mathrm{DeferredCount}\gets \mathrm{DeferredCount}+N_\ell$
  \STATE $\mathrm{AcceptedErrors}\gets \mathrm{AcceptedErrors}-E^{s}_\ell$
\ENDFOR

\STATE $\theta^*\gets u_L$;\quad $\widehat{\mathrm{Def}}\gets 1$;\quad $\widehat{\mathrm{Err}}\gets \frac{1}{N}\sum_{i=1}^N e^{m}_{i}$;\quad $\widehat{\mathrm{Cost}}\gets c_s+c_m$ \COMMENT{No feasible threshold: defer all (choose any $\theta>u_L$)}
\STATE \textbf{Return} $\theta^*$
\end{algorithmic}
\end{algorithm}

\paragraph{BFPRT variant ($\mathcal{O}(N)$ worst-case; oracle only).}
In the oracle case, feasibility depends only on the number of small-model mistakes among accepted samples, and the optimal solution corresponds to accepting the largest-confidence subset that satisfies the error budget. This can be implemented without full sorting by using a linear-time order-statistic routine (median-of-medians/BFPRT \citep{BLUM1973448}) to find the relevant confidence quantile (and then a linear pass to count errors/deferrals). This yields $\mathcal{O}(N)$ worst-case time. We stress that this guarantee relies on the oracle structure; in the non-oracle case the constraint depends on \emph{both} accepted and deferred subsets, 
and an exact worst-case $\mathcal{O}(N)$ algorithm is generally not available without additional assumptions. Implementation outline is presented in Algorithm~\ref{alg:single_oracle_linear_compact}.

\begin{algorithm}[t]
\caption{Single-threshold selection (oracle, MC, BFPRT variant)}
\label{alg:single_oracle_linear_compact}
\begin{algorithmic}[1]
\STATE {\bfseries Input:} Dataset with generated answers $\{(\beta_i,\hat{\boldsymbol{a}}^m_i,\hat{\boldsymbol{a}}^s_i)\}_{i=1}^N$, accuracy lower bound $\xi\in[0,1]$, costs $(c_s,c_m)$
\STATE {\bfseries Output:} Optimal threshold $\theta^*$ minimizing cost w.r.t the error constraint 
\STATE Compute $e^{s}_i \gets \mathds{1}\{\hat{\boldsymbol{a}}^s_i\neq \hat{\boldsymbol{a}}^m_i\}$
\IF{$\frac{1}{N}\sum_{i=1}^N e^{s}_i \le b$}
  \STATE \textbf{Return} any $\theta^*$ that accepts all samples
\ENDIF
\STATE Use a median-of-medians \textsc{Select} routine to obtain a pivot confidence $\theta$ \COMMENT{We search over acceptance sets defined by confidence cutoffs without full sorting}
\STATE Partition indices into $A=\{i:\beta_i\ge\theta\}$ and $R=\{i:\beta_i<\theta\}$
\STATE Compute $\widehat{\mathrm{Err}}_{\mathrm{ST-O}}(\theta)=\frac{1}{N}\sum_{i\in A} e^{s}_i$
\IF{$\widehat{\mathrm{Err}}_{\mathrm{ST-O}}(\theta)\le b$}
  \STATE Recurse on $R$ to try a smaller feasible threshold (accept more)
\ELSE
  \STATE Recurse on $A$ to increase the threshold (accept fewer)
\ENDIF
\STATE Return the smallest feasible cutoff encountered as $\theta^*$
\end{algorithmic}
\end{algorithm}

\subsubsection{Multi-thresholds policy}
\label{sec:alg_multi_threshold}
For each predicted class $k$, we sort the samples with $\hat{\boldsymbol{a}}^s=k$ by $\beta$ and consider tie-aware unique confidence levels. This produces a finite set of \emph{options} per class $k$, indexed by $j$, each corresponding to deferring the lowest-confidence prefix within that class. Each option yields a pair:
\[
(d_{k,j},\, e_{k,j}),
\]
where $d_{k,j}$ is the number of deferred samples in class $k$ and $e_{k,j}$ is the number of total errors contributed by class $k$ (oracle: accepted sLLM mistakes; non-oracle: accepted sLLM mistakes plus deferred mLLM mistakes). The global optimization becomes: choose one option per class to minimize total deferrals subject to a total error-count budget $B=\lfloor bN\rfloor$.

\paragraph{Option-table construction.}
Let $n_k$ be the number of samples with $\hat{\boldsymbol{a}}^s=k$ and let $U_k\le n_k$ be the number of unique confidence levels within class $k$ (tie-aware). Constructing per-class option matrices requires sorting within each class:
$\sum_{k=1}^K \mathcal{O}(n_k\log n_k)\le \mathcal{O}(N\log N)$,
followed by linear-time cumulative sums, $\mathcal{O}(N)$.
The resulting total number of options is $U=\sum_k (U_k+1)$ (including the ``defer all'' endpoint), with $U\le N+K$. Algorithms for oracle and non-oracle settings are presented in Algorithm \ref{alg:multi_options_oracle_compact} and Algorithm \ref{alg:multi_options_compact}, respectively.
\begin{algorithm}[t]
\caption{Build per-class option matrices (oracle)}
\label{alg:multi_options_oracle_compact}
\begin{algorithmic}[1]
\STATE {\bfseries Input:} $\{(\beta_i,\hat{\boldsymbol{a}}^s_i,\hat{\boldsymbol{a}}^m_i)\}_{i=1}^N$, number of classes $K$
\STATE {\bfseries Output:} For each class $k$: options $\{(d_{k,j},e_{k,j},\theta_{k,j})\}_{j=1}^{J_k}$
\STATE Compute $e^{s}_i=\mathds{1}\{\hat{\boldsymbol{a}}^s_i\neq \hat{\boldsymbol{a}}^m_i\}$
\FOR{$k=1$ to $K$}
  \STATE $S_k \gets \{i:\hat{\boldsymbol{a}}^s_i=k\}$;\quad $n_k\gets |S_k|$
  \IF{$n_k=0$} \STATE Store one trivial option $(d,e)=(0,0)$ and \textbf{continue} \ENDIF
  \STATE Sort $S_k$ by $\beta_i$ ascending; let $u_1<\dots<u_L$ be distinct values in class $k$
  \STATE For each level $\ell$, let $G_\ell=\{i\in S_k:\beta_i=u_\ell\}$ and compute:
  $
  N_\ell=|G_\ell|,
  \quad
  E^{s}_\ell=\sum_{i\in G_\ell} e^{s}_i
  $
  \STATE $\mathrm{DeferredCount}\gets 0$ \COMMENT{$\#\{i\in S_k:\beta<u_j\}$}
  \STATE $\mathrm{AcceptedErrors}\gets \sum_{\ell'=1}^L E^{(s)}_{\ell'}$ \COMMENT{$\sum_{\beta\ge u_1} e^{(s)}$ in class $k$}
  \FOR{$j=1$ to $L$}
    \STATE $\theta_{k,j}\gets u_j$
    \STATE $d_{k,j}\gets \mathrm{DeferredCount}$
    \STATE $e_{k,j}\gets \mathrm{AcceptedErrors}$ \COMMENT{oracle: only accepted sLLM mistakes matter}
    \STATE $\mathrm{DeferredCount}\gets \mathrm{DeferredCount}+N_{j}$
    \STATE $\mathrm{AcceptedErrors}\gets \mathrm{AcceptedErrors}-E^{s}_{j}$
  \ENDFOR
  \COMMENT{Defer-all option: accept none in class $k$}
  \STATE Add defer-all option: $d_{k,L+1}\gets n_k$, $e_{k,L+1}\gets 0$
\ENDFOR
\STATE \textbf{Return} option matrices
\end{algorithmic}
\end{algorithm}
\begin{algorithm}[t]
\caption{Build per-class option matrices (non-oracle)}
\label{alg:multi_options_compact}
\begin{algorithmic}[1]
\STATE {\bfseries Input:} $\{(\beta_i,\hat{\boldsymbol{a}}^s_i,\hat{\boldsymbol{a}}^m_i, \boldsymbol{a}_i)\}_{i=1}^N$, number of classes $K$
\STATE {\bfseries Output:} For each class $k$: options $\{(d_{k,j},e_{k,j},\theta_{k,j})\}_{j=1}^{J_k}$
\STATE Compute $e^{s}_i=\mathds{1}\{\hat{\boldsymbol{a}}^s_i\neq \boldsymbol{a}_i\}$, $e^{m}_i=\mathds{1}\{\hat{\boldsymbol{a}}^m_i\neq \boldsymbol{a}_i\}$
\FOR{$k=1$ to $K$}
  \STATE $S_k \gets \{i:\hat{\boldsymbol{a}}^s_i=k\}$; \quad $n_k\gets |S_k|$
  \IF{$n_k=0$} \STATE Store one trivial option $(d,e)=(0,0)$ and \textbf{continue} \ENDIF
  \STATE Sort $S_k$ by $\beta_i$ ascending; let $u_1<\ldots<u_L$ be distinct values in class $k$
  \STATE For each level $\ell$, let $G_\ell=\{i\in S_k:\beta_i=u_\ell\}$ and compute:
  $
  N_\ell=|G_\ell|,\quad
  E^{s}_\ell=\sum_{i\in G_\ell} e^{s}_i,\quad
  E^{m}_\ell=\sum_{i\in G_\ell} e^{m}_i
  $
  \STATE $\mathrm{DeferredErrors}\gets 0$ \COMMENT{$\sum_{\beta<u_j} e^{m}$ in class $k$}
  \STATE $\mathrm{DeferredCount}\gets 0$ \COMMENT{$\#\{i\in S_k:\beta<u_j\}$}
  \STATE$\mathrm{AcceptedErrors}\gets \sum_{\ell'=1}^L E^{s}_{\ell'}$ \COMMENT{$\sum_{\beta\ge u_1} e^{s}$ in class $k$}
  \FOR{$j=1$ to $L$}
    \STATE $\theta_{k,j}\gets u_j$
    \STATE $d_{k,j}\gets \mathrm{DeferredCount}$
    \STATE $e_{k,j}\gets \mathrm{DeferredErrors}+\mathrm{AcceptedErrors}$
    \COMMENT{Update totals for next option $j{+}1$}
    \STATE $\mathrm{DeferredErrors}\gets \mathrm{DeferredErrors}+E^{m}_{j}$
    \STATE $\mathrm{DeferredCount}\gets \mathrm{DeferredCount}+N_{j}$
    \STATE $\mathrm{AcceptedErrors}\gets \mathrm{AcceptedErrors}-E^{s}_{j}$
  \ENDFOR
  \STATE Add defer-all option: $d_{k,L+1}\gets n_k$, $e_{k,L+1}\gets \sum_{\ell=1}^L E^{m}_\ell$
\ENDFOR
\STATE \textbf{Return} option matrices
\end{algorithmic}
\end{algorithm}

\paragraph{Dynamic programming (pseudo-polynomial).}
The finite-sample problem is a multi-choice knapsack instance. An exact solution can be obtained via dynamic programming over the integer error budget $B$, storing for each attainable total error the minimum total deferrals achievable after processing a subset of classes. In the worst case, the runtime is
\[
\mathcal{O}\!\Big(\sum_{k=1}^K (U_k+1)\cdot B\Big)=\mathcal{O}(U B),
\]
and memory is $\mathcal{O}(B)$ if one stores only the current DP frontier (plus backpointers if reconstructing the selected options). Since $B=\Theta(N)$ and $U\le \Theta(N)$, the worst-case runtime can be $\mathcal{O}(N^2)$, which can be heavy for large $N$. In practice, we maintain only the Pareto-optimal (non-dominated) frontier of states. 
An implementation sketch is presented in Algorithm \ref{alg:multi_nonoracle_dp_compact}. There is no change in the oracle and the non-oracle setting, apart from the fact that each algorithm takes as input the corresponding `per-class option matrices'.
\begin{algorithm}[t]
\caption{Multi-thresholds selection (oracle \& non-oracle, MC, exact DP)}
\label{alg:multi_nonoracle_dp_compact}
\begin{algorithmic}[1]
\STATE {\bfseries Input:} Option matrices $\{(d_{k,j},e_{k,j},\theta_{k,j})\}_{j=1}^{J_k}$ for each class $k$, sample size $N$, accuracy bound $\xi$
\STATE {\bfseries Output:} Optimal threshold vector $\boldsymbol{\theta}^*$
\STATE Set error budget $b\gets 1-\xi$
\STATE $B\gets \lfloor bN\rfloor$ \COMMENT{integer error budget}
\STATE $\mathrm{DP}\gets \{0\mapsto 0\}$ \COMMENT{error count $\to$ min deferrals}
\FOR{$k=1$ to $K$}
  \STATE $\mathrm{DP'}\gets$ empty map
  \FORALL{$(E \mapsto D)$ in $\mathrm{DP}$}
    \FORALL{options $j$ of class $k$}
      \STATE $E' \gets E + e_{k,j}$
      \IF{$E' \le B$}
        \STATE $\mathrm{DP'}[E'] \gets \min\big(\mathrm{DP'}[E'],\, D + d_{k,j}\big)$
        \STATE Store backpointer for reconstruction
      \ENDIF
    \ENDFOR
  \ENDFOR
  \STATE Pareto-prune $\mathrm{DP'}$ (remove dominated states) and set $\mathrm{DP}\gets \mathrm{DP'}$
\ENDFOR
\STATE Choose $E^* \in \arg\min_{E\le B} \mathrm{DP}[E]$
\STATE Backtrack to recover chosen option $j_k^*$ for each class
\STATE Set $\theta_k^* \gets \theta_{k,j_k^*}$ for all $k$ and return $\boldsymbol{\theta}^*$
\end{algorithmic}
\end{algorithm}

\paragraph{Lagrangian sweep (scalable approximation).}
To improve scalability, we consider a Lagrangian relaxation of the (empirical) error constraint. For a fixed multiplier $\lambda\ge 0$, we select for each class $k$ the option minimizing
$
d_{k,j} + \lambda\, e_{k,j}
$.
This relaxation \emph{separates across classes}, so for any $\lambda$ we can compute a candidate solution by a linear scan through each class option list. Sweeping over a grid of $L$ multipliers produces $L$ candidate threshold vectors, and we retain the feasible solution with minimum deferrals. The runtime is
\[
\mathcal{O}(N\log N) \;+\; \mathcal{O}(L U),
\]
and memory is $\mathcal{O}(U)$ (to store option matrices). While not guaranteed to return the globally optimal solution of the original discrete constrained problem, the Lagrangian sweep is polynomial in the option-matrix size. The implementation outline is presented in Algorithm \ref{alg:multi_nonoracle_lagrangian_compact}. As with the DP variant, there is no change in the oracle and the non-oracle setting, apart from the fact that each algorithm takes as input the corresponding `per-class option matrices'.
\begin{algorithm}[t]
\caption{Multi-threshold selection (non-oracle, MC, Lagrangian sweep)}
\label{alg:multi_nonoracle_lagrangian_compact}
\begin{algorithmic}[1]
\STATE {\bfseries Input:} Option matrices $\{(d_{k,j},e_{k,j},\theta_{k,j})\}_{j=1}^{J_k}$ for each class $k$, sample size $N$, accuracy bound $\xi$, multipliers $\Lambda=\{\lambda_1,\dots,\lambda_L\}$
\STATE {\bfseries Output:} Optimal threshold vector $\boldsymbol{\theta}^*$ (best feasible among candidates)
\STATE Set error budget $b\gets 1-\xi$
\STATE $B\gets \lfloor bN\rfloor$
\STATE $\mathrm{BestDeferrals}$ $\gets +\infty$;\quad $\boldsymbol{\theta}^*$ $\gets [\mathbf{+\infty}]_{K\times1}$ 
\FOR{each $\lambda \in \Lambda$}
  \STATE $\mathrm{TotalDeferrals}$ $\gets 0$;\quad $\mathrm{TotalErrors}$ $\gets 0$
  \FOR{$k=1$ to $K$}
    \STATE $j_k \gets \arg\min_j\big(d_{k,j}+\lambda e_{k,j}\big)$
    \STATE $\mathrm{TotalDeferrals}$ $\gets$ TotalDeferrals $+ d_{k,j_k}$
    \STATE $\mathrm{TotalErrors}$ $\gets$ $\mathrm{TotalErrors}$ $+ e_{k,j_k}$
    \STATE $\theta_k \gets \theta_{k,j_k}$
  \ENDFOR
  \IF{$\mathrm{TotalErrors}$ $\le B$ and $\mathrm{TotalDeferrals}$ $<$ $\mathrm{BestDeferrals}$}
    \STATE $\mathrm{BestDeferrals}$ $\gets$ $\mathrm{TotalDeferrals}$;\quad $\boldsymbol{\theta}^*$ $\gets (\theta_1,\dots,\theta_K)$
  \ENDIF
\ENDFOR
\IF{$\boldsymbol{\theta}^* = [\mathbf{+\infty}]_{K\times1}$} \STATE $\boldsymbol{\theta}^*$ $\gets$ $\mathbf{1}_{K\times1}$ \ENDIF
\STATE \textbf{Return} $\boldsymbol{\theta}^*$
\end{algorithmic}
\end{algorithm}


\subsection{Alternative approximation methods}

In our optimization framework, the decision criterion and associated performance measure depend on expectations. For example, in the single-threshold policy (oracle setting), the expected cost and error are of the form
\begin{equation}
    \mathrm{Cost}_{\mathrm{ST-O}}(\theta) = c_s + c_m\int_{0}^{\theta} f(\beta)\,\mathrm{d}\beta, \quad
    \mathrm{Error}_{\mathrm{ST-O}}(\theta) = \int_{\theta}^{1} \varepsilon_s(\beta)\,f(\beta)\,\mathrm{d}\beta\,, \label{eq-app:error-integral}
\end{equation}
We now discuss practical alternatives to MC sampling for estimating the relevant components of the underlying joint distribution and for evaluating the integrals in \eqref{eq-app:error-integral}. 


\paragraph{Density Estimation: Marginal and Conditional Functions.}
An alternative approach is to explicitly estimate the components of the underlying distribution, and then compute the integrals based on those estimates. To this end, the marginal density $f(\beta)$ can be estimated nonparametrically using kernel density estimation (KDE) \citep{kde_est} or histogram binning. For example, a kernel density estimator with a smoothing bandwidth $h$ is given by
\begin{equation}
    \hat f(\beta) = \frac{1}{Nh}\sum_{i=1}^N K\!\left(\frac{\beta-\beta_i}{h}\right)\,,
\end{equation}
where $K$ is a nonnegative kernel function. Similarly, histogram binning partitions the domain $[0,1]$ into disjoint intervals $\{B_j\}$ and estimates the density in bin $B_j$ as the fraction of samples in that bin, normalized by bin width.

Because the optimization also depends on the conditional error rate, an estimate of this function is also required. A nonparametric estimate can be obtained by binning or smoothing regression. For binning, one first partitions the range of $\beta$ into bins $\{B_j\}$, and then computes
\begin{equation}
    \hat \varepsilon_s(B_j) = \frac{1}{n_j}\sum_{i:\beta_i\in B_j}\mathbf{1}\{\boldsymbol{a}_i^m\neq \boldsymbol{a}_i^s\},
\end{equation}
where $n_j=\#\{i:\beta_i\in B_j\}$. Alternatively, a smooth conditional function $\hat\varepsilon_s(\beta)$ can be obtained via e.g., kernel regression \citep{Bierens_1994},
\begin{equation}
    \hat\varepsilon_s(\beta)=\frac{\sum_{i=1}^N \mathbf{1}\{\boldsymbol{a}_i^m\neq \boldsymbol{a}_i^s\}\,K\big((\beta-\beta_i)/h\big)}
    {\sum_{i=1}^N K\big((\beta-\beta_i)/h\big)}\,.
\end{equation}
Given these estimates, one can approximate the cost and error integrals by numerical quadrature of products of $\hat f(\beta)$ and $\hat\varepsilon_s(\beta)$.

\paragraph{Parametric Modeling and Goodness‑of‑Fit Tests}
When the sample size is moderate to large and the marginal distribution exhibits a recognizable shape, fitting a parametric family to $f(\beta)$ can yield a parsimonious functional representation that simplifies subsequent computation. 
Parameters of the chosen family can be estimated by maximum likelihood, and the fit can be evaluated via goodness‑of‑fit tests such as the Kolmogorov–Smirnov \citep{ks_test} or Anderson–Darling \citep{Anderson01121954} tests. 
If the test results indicate a satisfactory fit, the fitted density $f(\beta;\hat\theta)$ may be used in place of $f(\beta)$ for computing integrals.

A similar strategy applies to the conditional error function $\varepsilon_s(\beta)$. Instead of nonparametric smoothing, one can posit a parametric regression model for the conditional error rate, such as a logistic regression or generalized linear model,
\begin{equation}
    \hat \varepsilon_s(\beta)=\sigma(\alpha+\gamma\beta),
\end{equation}
where $\sigma(\cdot)$ is the logistic function. Parameters are estimated from the data, and model adequacy is assessed through standard regression diagnostics. If the parametric form is appropriate, the fitted function $\hat\varepsilon_s(\beta)$ can be used in subsequent integration.

\paragraph{Closed‑Form vs.\ Numerical Integration}
If the underlying distribution and conditional functions admit simple analytic forms, the integrals \eqref{eq-app:error-integral} may have closed‑form expressions. For example, when $\beta\sim\mathrm{Uniform}(0,1)$ and $\varepsilon_s(\beta)=1-\beta$, the integrals reduce to elementary polynomials in $\theta$, yielding analytic expressions for $\mathrm{Cost}_{\mathrm{ST-O}}$ and $\mathrm{Error}_{\mathrm{ST-O}}$. In contrast, when $f(\beta)$ follows a more complex parametric form (e.g., Beta with arbitrary parameters) or when $\varepsilon_s(\beta)$ is non‑linear (e.g., logistic), closed‑form antiderivatives typically do not exist. In such cases, numerical integration techniques can be used to evaluate the integrals.


\section{Sufficiency of the confidence score}\label{app:sufficiency}
The confidence of a generated answer is a natural metric to rely on when designing decision rules in various settings. For example, in classification tasks, it is common to use the confidence of a model’s predicted class (which approximates the true posterior) in threshold-based rules \citep{chow1970}. In generative settings, where answers are more open-ended, recent works have attempted to measure the confidence of the entire generated answer using uncertainty quantification methods \citep{lm-polygraph, DBLP:conf/iclr/KuhnGF23, gupta_uncertainty, murong}. It is therefore rational to explore the decision sufficiency of the confidence score, i.e., whether it is a sufficient statistic for our deferral decision or extra information is needed. We provide clear definitions about sufficiency below.

Let $J$ be the extra information available at gating time (e.g., the input text embeddings, full logits, etc.). We again have two actions: $D=0$ (accept) and $D=1$ (defer). The true answer is denoted by $\boldsymbol{a}$. The loss for action $D$ is defined as $L(D, \boldsymbol{a}, J)$ so the Lagrangian is of the form \[ L(0, \boldsymbol{a}, J) = c_s + \lambda\mathds{1}[\boldsymbol{a} \neq \hat{\boldsymbol{a}}^s], \quad L(1, \boldsymbol{a}, J) = c_s + c_m + \lambda\mathds{1}[\boldsymbol{a} \neq \hat{\boldsymbol{a}}^m] \,.\]
The conditional risk of each action given $J=j$ is defined as:
\[r_0(j) = \mathbb{E}[L(0, \boldsymbol{a}, J) | J=j], \quad r_1(j) = \mathbb{E}[L(1, \boldsymbol{a}, J) | J=j] \,, \]
and the risk difference as $\Delta(j)=r_0(j)-r_1(j)$.

So, among all policies $\pi(J) \in \{0,1\}$, the optimal one is
\begin{equation}
    \pi^*(j) = \begin{cases}
        0, & \Delta(j) \leq 0\\
        1, & \Delta(j) > 0\,.
    \end{cases}
\end{equation}
This is already a “threshold” on a scalar function of $J$, $\Delta(J)$. 

We now introduce the scalar confidence score $\beta = \beta(J)\in\mathbb{R}$. 
\begin{definition}
    A scalar score is sufficient (relative to $J$) for the deferral problem if there exists a function $\phi: \mathbb{R} \to \mathbb{R}$ such that: \begin{equation}\Delta(j) = \phi(\beta(j)) \quad\forall j\,. \end{equation}
    Equivalently, for two $j_1$, $j_2$ with $\beta(j_1)=\beta(j_2)$: $\Delta(j_1)=\Delta(j_2)$. This means that if two queries have the same score $\beta$, then the expected advantage of using sLLM over mLLM (or vice-versa) is the same; the extra details in $J$ don’t change that trade-off.
\end{definition} 
For any policy $\pi$, we can write 
\begin{equation}
    R(\pi) = \mathbb{E}\left[  r_{\pi(J)}(J) \right] = \mathbb{E} \left[  r_1(J) + \mathds{1}[\pi(J) =0](r_0(J) - r_1(J))  \right]  \,.
\end{equation}
For any policy $\pi'(J)$, $R(\pi')\geq R(\pi^*)$. So $\pi^*$ is the globally optimal policy over all measurable policies $\pi(J)$. Then, assuming a sufficient confidence score $\beta(j)$, the optimal policy becomes:\begin{equation}
    \pi^*(j) = \begin{cases}
        0, & \phi(\beta(j)) \leq 0\\
        1, & \phi(\beta(j))  > 0\,.
    \end{cases}
\end{equation}

So, $\pi^*$ depends only on the `summary' $\beta$ of $J$. If we define the scalar policy \begin{equation}
    \delta^*(\beta) = \begin{cases}
        0, & \phi(\beta) \leq 0\\
        1, & \phi(\beta)  > 0\, ,
    \end{cases}
\end{equation}
then $\pi^*(j) = \delta^*(\beta(j))$. For any other policy $\pi'(J)$, its risk satisfies $R(\pi')\geq R(\pi^*) = R(\delta^*(\beta(J)))$. No policy that uses $J$ can beat the best policy that uses only the scalar score 
$\beta(J)$, provided that it is sufficient. If we now assume that $\Delta(J)$ (the excess error of sLLM over mLLM) is a non-increasing function of the confidence score, there exists a non-increasing $\phi(\cdot)$ such that $\Delta(J) = \phi(\beta(J))$. Then, as shown in Appendix~\ref{app:multi_general_policies}, the acceptance set $S = \{\beta: \phi(\beta) \leq 0\}$ is either empty (never accept), all of $\mathbb{R}$ (always accept), or a half-line $[\beta^*, \infty)$ (accept if $\beta \geq \beta^*$) with $\beta^* = \inf S$.\footnote{Usually, $\beta \in [0,1]$ so the intervals become $[0, 1]$ (always accept) and $[\beta^*, 1]$ (accept if $\beta \geq \beta^*$).}  

Theoretically, sufficiency is a property of the true joint distribution. In practice, we approximated it using the finite-sample datasets in the following way: we fit models that predicted the difference of correctness of the two LLMs, i.e., ``sLLM better than mLLM?" using \emph{(i) only $\beta$} and \emph{(ii) $\beta$ plus other components of $J$}. For these extra components, we considered either the whole input text embedded using Sentence Transformers \citep{DBLP:conf/emnlp/ReimersG19} or different statistics of the probability outputs of the sLLM (e.g., entropy, absolute difference etc.). For the predictive model, we used a Gradient Boosting Regressor \citep{friedman2000greedy}. Using a cross-fitting approach (Algorithm~\ref{alg:sufficiency_test}), we compared the mean squared error of the two models via a permutation test to assess whether including $J$ yields significant predictive improvement (more criteria about the sufficiency of confidence scores can be found in \citep{DBLP:conf/nips/JitkrittumGMNRK23}). In preliminary experiments that we conducted, the difference between the errors was not statistically significant, so we opted to use $\beta$ as our deferral signal. For example, Table~\ref{tab:sufficiency} shows results for two different pairs of models.

\begin{table}[t!]
  \caption{Results of initial sufficiency experiments. MSE$_{\beta}$ and MSE$_{\beta+J}$ are the cross-validated mean squared errors of the null and full models, respectively. $\Delta_{\mathrm{MSE}}$ = MSE$_{\beta}$ - MSE$_{\beta+J}$. Permutation $p$-value measures whether the observed $\Delta_{\mathrm{MSE}}$ is significantly larger than expected under the null hypothesis of no added predictive information from $J$ and is based on $T=10$ permutations of $J$ while keeping $\beta$ fixed.}
  \label{tab:sufficiency}
  \begin{center}
      \begin{adjustbox}{max width=\textwidth}
        \begin{tabular}{lcccccc}
          \toprule
          \bf Dataset & \bf sLLM / mLLM & \bf Extra information $J$ & \bf MSE$_{\beta}$ & \bf MSE$_{\beta+J}$      & \bf $\Delta_{\mathrm{MSE}}$  & \bf $p$-value \\
          \midrule
           \multirow{4}{*}{{\sc SST-2}} & \sc gemma-1b / gemma-4b & Input Text Embeddings & 0.1200$\pm$0.004 & 0.1009$\pm$0.019 & 0.0191 & 0.0574$\pm$0.0003 \\
           & \sc gemma-1b / gemma-4b & Logits Statistics & 0.1200$\pm$0.004 & 0.1208$\pm$0.004 & -0.0008 & 0.0909$\pm$0.0001 \\
           & \sc gemma-1b / gemma-12b & Input Text Embeddings & 0.1403$\pm$0.002 & 0.1037$\pm$0.002 & 0.0366 & 0.0853$\pm$0.0004 \\
           & \sc gemma-1b / gemma-12b & Logits Statistics & 0.1403$\pm$0.002 & 0.1415$\pm$0.001 & -0.0011 & 0.0702$\pm$0.0003 \\
           \midrule
           \multirow{4}{*}{{\sc emotion}} & \sc gemma-1b / gemma-4b & Input Text Embeddings & 0.2291$\pm$0.014 & 0.2415$\pm$0.018 & -0.015 & 0.0967$\pm$0.0004 \\
           & \sc gemma-1b / gemma-4b & Logits Statistics & 0.2291$\pm$0.014 & 0.2408$\pm$0.018 & -0.011 & 0.0724$\pm$0.0009 \\
           & \sc gemma-1b / gemma-12b & Input Text Embeddings & 0.2400$\pm$0.012 & 0.2534$\pm$0.009 & -0.013 & 0.0914$\pm$0.0007 \\
           & \sc gemma-1b / gemma-12b & Logits Statistics & 0.2400$\pm$0.012 & 0.2490$\pm$0.014 & -0.009 & 0.0815$\pm$0.001 \\
           \midrule
           \multirow{4}{*}{{\sc agnews}} & \sc gemma-1b / gemma-4b & Input Text Embeddings & 0.2044$\pm$0.004 & 0.1248$\pm$0.005 & 0.079 & 0.0978$\pm$0.002 \\
           & \sc gemma-1b / gemma-4b & Logits Statistics & 0.2044$\pm$0.004 & 0.1583$\pm$0.006 & 0.046 & 0.0589$\pm$0.0008 \\
           & \sc gemma-1b / gemma-12b & Input Text Embeddings & 0.2219$\pm$0.004 & 0.1191$\pm$0.005 & 0.1027 & 0.0814$\pm$0.0005 \\
           & \sc gemma-1b / gemma-12b & Logits Statistics & 0.2219$\pm$0.004 & 0.1616$\pm$0.011 & 0.0603 & 0.0678$\pm$0.0003 \\
           \midrule
           \multirow{4}{*}{{\sc fakenews}} & \sc gemma-1b / gemma-4b & Input Text Embeddings & 0.3181$\pm$0.015 & 0.2196$\pm$0.011 & 0.0984 & 0.0878$\pm$0.004 \\
           & \sc gemma-1b / gemma-4b & Logits Statistics & 0.3181$\pm$0.015 & 0.3521$\pm$0.014 & -0.034 & 0.0889$\pm$0.002 \\
           & \sc gemma-1b / gemma-12b & Input Text Embeddings & 0.4707$\pm$0.016 & 0.3202$\pm$0.011 & 0.1504 & 0.0793$\pm$0.002 \\
           & \sc gemma-1b / gemma-12b & Logits Statistics & 0.4707$\pm$0.016 & 0.5191$\pm$0.024 & -0.0483 & 0.0923$\pm$0.007 \\
           \midrule
           \multirow{4}{*}{{\sc SQuAD}} & \sc gemma-1b / gemma-4b & Input Text Embeddings & 0.3243$\pm$0.008 & 0.3620$\pm$0.0106 & -0.0377 & 0.0905$\pm$0.001 \\
           & \sc gemma-1b / gemma-4b & Logits Statistics & 0.3243$\pm$0.008 & 0.3334$\pm$0.009 & -0.009 & 0.0689$\pm$0.001 \\
           & \sc gemma-1b / gemma-12b & Input Text Embeddings & 0.2916$\pm$0.002 & 0.3205$\pm$0.005 & -0.0288 & 0.0993$\pm$0.0004 \\
           & \sc gemma-1b / gemma-12b & Logits Statistics & 0.2916$\pm$0.002 & 0.2987$\pm$0.003 & -0.007 & 0.0721$\pm$0.0006 \\
           \midrule
           \multirow{4}{*}{{\sc WMT}} & \sc gemma-1b / gemma-4b & Input Text Embeddings & 0.1941$\pm$0.003 & 0.2060$\pm$0.002 & -0.0119 & 0.0675$\pm$0.0005 \\
           & \sc gemma-1b / gemma-4b & Logits Statistics & 0.1941$\pm$0.003 & 0.1999$\pm$0.003 & -0.0239 & 0.0734$\pm$0.0007 \\
           & \sc gemma-1b / gemma-12b & Input Text Embeddings & 0.2012$\pm$0.003 & 0.2136$\pm$0.008 & -0.0123 & 0.0789$\pm$0.0012 \\
           & \sc gemma-1b / gemma-12b & Logits Statistics & 0.2012$\pm$0.003 & 0.2082$\pm$0.004 & -0.0069 & 0.0821$\pm$0.0004 \\
          \bottomrule
        \end{tabular}
        \end{adjustbox}
  \end{center}
  \vskip -0.1in
\end{table}

\begin{algorithm}[t]
  \caption{Score Sufficiency Testing}
  \label{alg:sufficiency_test}
  \begin{algorithmic}[1]
    \STATE {\bfseries Input:} Dataset $\{(J_i, \boldsymbol{a}_i, \hat{\boldsymbol{a}}^{s}_i, \hat{\boldsymbol{a}}^{m}_i, \beta_i)\}_{i=1}^N$
    \FOR{$i=1$ {\bfseries to} $N$}
    \STATE {\bfseries Output:} Statistical test of whether score $s$ is sufficient for predicting correctness difference $\Delta$
    \STATE Compute binary error indicators:
    $E^{s}_i \gets \mathds{1}\{\hat{\boldsymbol{a}}^{s}_i\neq \boldsymbol{a}_i\},\quad E^{m}_i \gets \mathds{1}\{\hat{\boldsymbol{a}}^{m}_i\neq \boldsymbol{a}_i\} $
    \STATE Define signed error difference: $\Delta_i \gets E^{s}_i - E^{m}_i$
    \ENDFOR
    \STATE Partition data into $K$ folds
    \FOR{each fold $k$}
        \STATE Train `null' learner $\hat{f}_{\mathrm{null}}$ on $k-1$ training folds to predict $\Delta$ from $s$
        \STATE Train `full' learner $\hat{f}_{\mathrm{full}}$ on $k-1$ training folds to predict $\Delta$ from $(s,J)$
        \STATE Evaluate predictions on held-out fold:
        $\hat{\Delta}_{\mathrm{null}}^{k},\ \hat{\Delta}_{\mathrm{full}}^{k}$
    \ENDFOR
    \STATE Aggregate mean squared errors:
    $
    \text{MSE}_{\mathrm{null}}=\frac{1}{N}\sum_i(\Delta_i-\hat{\Delta}_{{\mathrm{null}},i})^2,
    \quad
    \text{MSE}_{\mathrm{full}}=\frac{1}{N}\sum_i(\Delta_i-\hat{\Delta}_{{\mathrm{full}},i})^2
    $
    \STATE Compute observed test statistic:
    $
    T_{\mathrm{obs}}=\text{MSE}_{\mathrm{null}}-\text{MSE}_{\mathrm{full}}
    $
    \STATE Obtain permutation distribution:
    \FOR{$p=1\ldots n_{\mathrm{perm}}$}
        \STATE Permute feature set $J_i^{p}\gets$ shuffle of $J$
        \STATE Repeat train/predict to obtain $\text{MSE}_{\mathrm{full}}^{p}$
        \STATE $T^{p}\gets \text{MSE}_{\mathrm{null}}-\text{MSE}_{\mathrm{full}}^{p}$
    \ENDFOR
    \STATE Approximate p-value:
    \[
    \frac{1+\sum_{p=1}^{n_{\mathrm{perm}}}\{T^{p}\geq T_{\text{obs}}\}}{n_{\mathrm{perm}}+1}
    \]
    \IF{p-value $<0.05$}
        \STATE Reject $H_0$: $J$ adds predictive value beyond $\beta$
    \ELSE
        \STATE Fail to reject $H_0$
    \ENDIF
  \end{algorithmic}
\end{algorithm}

\end{document}